\def\forarxiv{1}
\definecolor{light-gray}{gray}{.35}
\newcommand{\pr}{\mathbb{P}}
\newcommand{\expect}{\mathbb{E}}
\newcommand{\ind}{\mathbb{I}}
\newcommand{\avg}[1]{\frac{1}{n}\sum_{i = 1}^n}
\newcommand{\ie}{\textit{i.e.}}
\newcommand{\eg}{\textit{e.g.}}
\newtheorem{theorem}{Theorem}
\newtheorem{corollary}{Corollary}[theorem]
\newtheorem{lemma}{Lemma}
\newtheorem{proposition}{Proposition}
\newtheorem{definition}{Definition}
\let\citet\cite
\renewcommand{\paragraph}[1]{\textbf{#1}}
\begin{document}



\newcommand{\mytitle}{Interval Estimation of Individual-Level Causal Effects Under Unobserved Confounding}

\if1\forarxiv
\title{\mytitle}
\author{Nathan Kallus}
\author{Xiaojie Mao}
\author{Angela Zhou}
\affil{Cornell University and Cornell Tech}
\date{}
\maketitle
\fi

\begin{abstract}
We study the problem of learning conditional average treatment
effects (CATE) from observational data with unobserved confounders.
The CATE function maps baseline covariates to individual
causal effect predictions and is key for personalized assessments.
Recent work has focused on how to learn CATE under unconfoundedness,
\ie, when there are no unobserved confounders.
Since CATE may not be identified when unconfoundedness
is violated, we develop a functional interval estimator that
predicts bounds on the individual causal effects under 
realistic violations of unconfoundedness. Our estimator takes the
form of a weighted kernel estimator with weights that vary 
adversarially. We prove that our estimator is sharp in that it
converges exactly to the tightest bounds possible on CATE when
there may be unobserved confounders. Further, we study personalized 
decision rules derived from our estimator and prove that they
achieve optimal minimax regret asymptotically.
We assess our approach in a simulation study as well as demonstrate
its application in the case of hormone replacement therapy by
comparing conclusions from a real observational study and 
clinical trial.
\end{abstract}

\section{Introduction}
 Learning individual-level (conditional-average) causal effects from observational data is a key question for determining personalized treatments in medicine or in assessing policy impacts in the social sciences. Many recent advances have been made for the important question of estimating {conditional average treatment effects} (CATE), which is a function mapping baseline covariates to individual causal effect predictions \citep{shalit-johansson-sontag-17,johansson-k-s-s18,athey2016recursive,wager2017estimation,abrevaya2015estimating,green2010modeling,kunzel2017meta,nie2017learning}. However, all of these approaches need to assume \textit{unconfoundedness}, or that the potential outcomes are conditionally independent of treatments, given observed covariates.
 That is, that all possible confounders have been observed and are controlled for.

 While {unconfoundedness} may hold \emph{by design} in ideal settings like randomized controlled trials, the assumption is almost always \emph{invalid to some degree} in any real observational study
 and, to make things worse, the assumption is inherently
 \emph{unverifiable}. 
 For example, passively collected healthcare databases
 often lack part of the critical clinical information that may drive both doctors' and patients' treatment choices, \eg, subjective assessments of condition severity or personal lifestyle factors. The expansion and linkage of observational and administrative datasets does afford greater opportunities to observe important factors that influence selection into treatment, but some hidden factors will always remain and there is no way to prove otherwise. 
 When unconfoundedness does not hold, 
 one might find causal effects in observational data where there is actually no real effect or vice versa, which in turn may lead to real decisions that dangerously rely on false conclusions and may introduce unnecessary harm or risk \citep{kz18}.
 
 Therefore, sensitivity analysis of causal estimates to realistic violations of unconfoundedness is crucial for both credible interpretation of any findings and reliable decision making.
 Traditional sensitivity analysis and modern extensions focus on  bounding feasible values of \textit{average treatment effects} (ATE) 
 or the corresponding $p$-values for the hypothesis of zero effect, assuming some violation of unconfoundedness
 \citep{r02,doriehchill16,zhaosmall17}. 

 However, ATE is of limited use for individual-level assessments and personalization. For such applications, it is crucial to study the \emph{heterogeneity} of effects as covariates vary by 
 estimating the CATE.
 Given an individual with certain baseline covariates, the sign of 
 the CATE for their specific values determines the best course
 of action for the individual.
 Furthermore, learning CATE also allows one to generalize causal conclusions drawn from one population to another population \citep{hartman2015sate}. 

 In this paper, we develop new methodology and theory for
 learning bounds on the CATE function from observational data
 that may be subject to some confounding. 
 Our contributions are summarized as follows:
 \if1\forarxiv
 \begin{itemize}
 \fi\if0\forarxiv
 \vspace{-0.5\baselineskip}
 \begin{itemize}[align=left, leftmargin=*,labelindent=0in,itemsep=0in,parsep=0in]
 \fi
 \item We propose a functional interval estimator for CATE 
 that is derived from a 
 weighted kernel regression, where the weights vary adversarially per 
 a standard sensitivity model that specifies the how big the potential 
 impact of hidden confounders might be on selection. We extend the approach to conditioning on a subset of observed covariates
 (partial CATE).
 \item We show that the proposed estimators, which are given by an optimization problem, admit efficient computation by a sorting and line search procedure.
 \item We show that our estimator is \emph{sharp} in that it converges 
 point-wise to the tightest possible set of identifiable
 CATE functions -- the set of CATE functions 
 that are consistent with
 both the population-level observational-data generating process 
 and the assumed sensitivity model. 
 That is, our interval function is asymptotically
 neither too wide (too conservative) nor too narrow (too optimistic).
 \item We study personalized decision rules derived from our estimator
 and show that their minimax-regret converges to the best possible
 under the assumed sensitivity model.
 \item We assess the success of our approach in a simulation study as well as demonstrate the application of our approach in real-data setting. Specifically, we consider the individual-level effects of hormone replacement therapy and compare insights derived from a likely-confounded observational study to those derived from a clinical trial.
\end{itemize}
 
\section{Related work}
\paragraph{Learning CATE.} Studying heterogeneous treatment effects by learning a functional form for CATE
under the assumption of unconfoundedness is a common approach 
 \citep{shalit-johansson-sontag-17,johansson-k-s-s18,athey2016recursive,wager2017estimation,abrevaya2015estimating,green2010modeling,kunzel2017meta,nie2017learning}.
 Under unconfoundedness, CATE is given by the difference of two
 identifiable regressions and the above work study how to appropriately
 tailor supervised learning algorithms specifically to such a task.
 In particular, \citet{abrevaya2015estimating} consider 
 estimating CATE under unconfoundedness using 
 kernel regression on a transformation of the outcome given
 by inverse propensity weighting (IPW).
 Our bounds arise from adversarially-weighted kernel 
 regression estimators, but in order to ensure sharpness,
 the estimators we use reweight the sample rather than the outcome.




\paragraph{Sensitivity analysis and partial identification.}
Sensitivity analysis in causal inference 
considers how the potential presence of unobserved confounders might
affect a conclusion made under the assumption of unconfoundedness.
It originated in a thought experiment on the effects of smoking
in lung cancer
that argued that unobservable confounding effects must be 
unrealistically large in order
to refute the observational evidence \citep{cornfield1959smoking}. 
In our approach,
we use the marginal sensitivity model (MSM) introduced by \cite{tan}, which bounds the potential impact of unobserved confounding on selection into treatment. Specifically, it bounds the ratio between
the propensity for treatment when accounting only for observables
and when accounting also for unobservables.
This model is closely related to the Rosenbaum sensitivity model \citep{r02}, which is traditionally used in conjunction with matching 
and which
bounds the ratio between the propensity for treatment between any two realization of unobservables.
See \cite{zhaosmall17} for more on the relationship between these two sensitivity models.
\citep{tan,zhaosmall17} consider the sensitivity of ATE estimates
under the MSM but not sharpness.
\cite{aronowlee12} consider a related problem of bounding
a population average under observations with 
unknown but bounded sampling probabilities and prove sharpness 
assuming discrete outcomes.
Instead of relying on a sensitivity model,
\cite{manski2003partial} consider sharp partial identification of 
ATE under no or weak assumptions such as monotone response and
\citet{manski05} consider corresponding minimax-regret policy
choice with discrete covariates. 
\citet{mp18} consider sharp partial identification of ATE under
knowledge of the marginal distribution of confounders and sup-norm
bounds on propensity differences.


\paragraph{Personalized decision making.} 
Optimal personalized decision rules are given by thresholding 
CATE if known and hence a natural approach to learning such policies is 
to threshold CATE estimates \citep{qian2011performance}.
This of course breaks down if CATE is not estimable. Our paper
derives the appropriate extension to partially identified CATE
and shows that decision rules
derived from our estimates in fact achieve optimal minimax regret.
Under unconfoundedness, recent work has also studied
directly learning a \emph{structured} decision policy 
from observational data since
best-in-class CATE estimates (\eg, best linear prediction) 
need not necessarily lead to best-in-class policies (\eg, best
linear policy) \citep{dell2014,sj15,athey2017efficient, kallus2017balanced,kallus2017off,kallus2016recursive,zhou2017residual}.
Recently, \cite{kz18} studied the problem of finding such structured policies that are also robust to possible confounding under a similar sensitivity model.
However, this approach produces only a policy and not 
a CATE estimate, which itself
is an important object for {decision support}
as one would like to interpret the policy relative to
predicted effects and understand
the \emph{magnitude} of effect and 
\emph{uncertainties} in its estimation. 
Causal effect estimates are of course also
important for influencing other conclusions, directing further the study of causal mechanisms,
and measuring conclusions against domain knowledge.



\section{Problem Set-up} 
	We assume that the observational data consists of triples of random variables $\{(X_i,T_i,Y_i):i=1,\dots,n\}$, comprising of covariates $X_i \in \mathcal{X}\subseteq\R d$, assigned treatments $T_i \in \{0,1\}$, and real-valued outcomes $Y_i\in \mathcal{Y} \subseteq \mathbb R$. 
	Using the Neyman-Rubin potential outcome framework, we let $Y_i(0), Y_i(1) \in \mathcal{Y}$ denote the potential outcomes of each treatment.
	We let the observed outcome be the potential outcome of the assigned treatment, $Y_i=Y_i(T_i)$, encapsulating non-interference and consistency assumptions, also known as SUTVA \citep{rubin1980randomization}.
	Moreover, $(X_i,T_i,Y_i(0), Y_i(1))$ are i.i.d draws from a population $(X, T, Y(0), Y(1))$. 
	
	We are interested in the CATE function:
\begin{equation*}
	\tau(x)=\Efb{Y(1)-Y(0)\mid X=x}
\end{equation*}	
	\emph{If} $X$ contained all confounders, then we could identify CATE by \emph{controlling} for it in each treatment group: 
\begin{equation}
	\tilde\tau(x)=\Efb{Y\mid T=1,X=x}-\Efb{Y\mid T=0,X=x}. \label{eq: fake-cate}
\end{equation}
	\emph{If} unconfoundedness held in that potential outcomes are independent of assigned treatment given $X$, then it's immediate that $\tilde\tau$ would equal $\tau$.
	However, in practice there will almost always exist 
	\textit{unobserved confounders}
	{not} included in $X$, \ie, unconfoundedness with respect to $X$ is \emph{violated}. That is, in general, we may have that
			\begin{equation}
			\notag
				(Y(0), Y(1)) \notindependent T \mid X. 
			\end{equation}
	 In such general settings,
	 $\tau(x)\neq\tilde\tau(x)$ and indeed $\tau(x)$ may \emph{not} be estimated from the observed data even with an infinite sample size \citep{r02}. 

	
	For $t \in \{0, 1\}$, let $e_t(x) = \pr (T = t \mid X = x)$ be the \textit{nominal} propensity for treatment given only the observed variables
	and $e_t(x, y) = \pr (T = t \mid X = x, Y(t) = y)$ be the \textit{complete} propensity accounting for all confounders. 
	In this paper, we use the following sensitivity model to quantify the extent of violation of the unconfoundedness with respect to the observed covariates $X$. The model measures the degree of confounding in terms of the odds ratios of
	the nominal and complete propensities \citep{tan,zhaosmall17}.
	\begin{definition}[Marginal Sensitivity Model] \label{def: sensitivity-model}
		There exists $\Gamma \ge 1$ such that, for any $t \in \{0, 1\}, x \in \mathcal{X}, y \in \mathcal{Y}$, 
		\begin{equation}\label{eq: log-odds}
		\frac1\Gamma\leq
		{\frac{(1 - e_t(x))e_t(x, y)}{e_t(x)(1 - e_t(x, y))}}
		\leq
		\Gamma.
		\end{equation}
	\end{definition}\if0\forarxiv\vspace{-\baselineskip}\fi
	Taking logs, eq.~\eqref{eq: log-odds} can be seen as bounding
	the absolute difference between the logits
	the nominal propensity and the complete propensity by $\log\Gamma$.
	When unconfoundedness with respect to $X$ holds, we have that $e_t(x) = e_t(x, y) $ and \eqref{eq: log-odds} holds with $\Gamma = 1$.  When $\Gamma = 2$, for example, then the true odds ratio for an individual to be treated may be as much as double or as little as half of what it is actually observed to be given only $X$.
	As $\Gamma$ increasingly deviates from $1$, we allow for greater unobserved confounding.  
\section{An Interval Estimator for CATE}\label{sec: method}
\subsection{Population estimands}\label{subsection: population-bounds}
We start by characterizing the population estimands we are after,
the population-level upper and lower bounds on CATE.
As discussed above, without unconfoundedness, there is no single CATE
function that can be point identified by the data.
Under the MSM with a given $\Gamma$, we can conceive of the set of
identified CATE functions as consisting of all the
functions that are consistent with
both the population of observational data $(X,T,Y)$ and the MSM.
All such functions are observationally equivalent in
that they cannot be distinguished from one another on the basis
of observational data alone. This defines a particular
interval function that maps covariates to the lower
and upper bounds of this set and this is the function we wish
to estimate.

For $t \in \{0, 1\}$ and $x \in \mathcal{X}$,
let $f_t(y \mid x)$ denote the density of the distribution $\pr(T = t, Y(t)\le y \mid X = x)=\pr(T = t, Y\le y \mid X = x)$. 
Note that these distributions are identifiable based on the observed data as they only involve observable quantities. Further, define $\mu_t(x) = \expect [Y(t) \mid X = x]$, which is \emph{not} identifiable from data, such that $\tau(x)=\mu_{1}(x)-\mu_{0}(x)$, and note that
\begin{equation}
\mu_t(x)=\mu_t(w_t; x) = \frac{\int yw_t(y \mid x)f_t(y \mid x)dy}{\int w_t(y \mid x) f_t(y \mid x)dy}\label{eq: population-outcome}
\end{equation} 
where
$w_t(y \mid x) = 1/e_t(x, y)$, which too is unidentifiable.
Eq. \eqref{eq: population-outcome} is useful as it
decomposes $\mu_t(x)$ cleanly into
its identifiable ($f_t(y \mid x)$) and unidentifiable ($w_t(y \mid x)$) 
components.
Based on the MSM \eqref{eq: log-odds}, we can determine the uncertainty set that includes all possible values of $w_t(y \mid x) =  1/e_t(x, y)$ that agree with the model, \ie, violate unconfoundedness by no more than $\Gamma$:
\begin{align}
&\mathcal{W}_t(x; \Gamma) =\braces{
w_t(\cdot\mid x):  w_t(y \mid x) \in [\alpha_t(x; \Gamma), \beta_t(x; \Gamma)]~\forall y} 
\notag
\\&\text{where}~~
\alpha_t(x; \Gamma) =1/(\Gamma e_t(x)) + 1 - 1/\Gamma, 
\label{eq: uncertainty-set}
\\&\phantom{\text{where}~~}
\beta_t(x; \Gamma) =\Gamma/ e_t(x) + 1- \Gamma.  
\notag
\end{align}
These are exactly the $w_t(\cdot\mid x)$ functions that agree
with both the known nominal propensities $e_t(x)$
and the MSM in eq.~\eqref{eq: log-odds}. Eq. \eqref{eq: uncertainty-set}
is derived directly from eq.~\eqref{eq: log-odds} by
simple algebraic manipulation.
We define the population CATE bounds under the MSM correspondingly. 
\begin{definition}[CATE Identified Set Under MSM]
	The population bounds under the MSM with parameter $\Gamma$ for the expected potential outcomes are 
	\begin{align}
	\overline{\mu}_t(x; \Gamma) &= \sup_{w_t \in \mathcal{W}_t(x; \Gamma)} \mu_t(w_t; x), \label{eq: pop-outcome-upper}\\
	\underline{\mu}_t(x; \Gamma) &= \inf_{w_t \in \mathcal{W}_t(x; \Gamma)}\mu_t(w_t; x), \label{eq: pop-outcome-lower}
	\end{align}
	and the population bounds for CATE are
	\begin{align}\label{eq: bounds}
	\overline{\tau}(x; \Gamma) = \overline{\mu}_1(x; \Gamma)  - \underline{\mu}_{0}(x; \Gamma), \\  \underline{\tau}(x; \Gamma) = \underline{\mu}_1(x; \Gamma)  - \overline{\mu}_{0}(x; \Gamma).
	\end{align}
\end{definition}\if0\forarxiv\vspace{-0.5\baselineskip}\fi
Therefore,
the target function we are interested in learning is
the map from $x$ to identifiable CATE intervals:
$$
\mathcal T(x;\Gamma)=[\underline{\tau}(x; \Gamma),\overline{\tau}(x; \Gamma)].
$$


\subsection{The functional interval estimator}\label{subsection: estimate-bounds}
We next develop our functional interval estimator for 
$\mathcal T(x;\Gamma)$. Toward this end,
we consider the following kernel-regression-based estimator for $\mu_t(x)$ based on the \emph{unknown} weights
$\mathbf{W}_t^* = (W^*_{ti})_{i = 1}^n$ based on the 
complete propensity score,
$W_{ti}^* = 1/e_t(X_i, Y_i(t))$:
\begin{equation}\label{eq: estimator}
\hat\mu_{t}(\mathbf{W}^*_t; x) =  \frac{\sum_{i=1}^n\ind{(T_i =  t)}\mathbf{K}(\frac{X_i-x}{h})W^*_{ti}Y_i}{\sum_{i=1}^n\ind{(T_i =  t)}\mathbf{K}(\frac{X_i-x}{h})W^*_{ti}},
\end{equation}
where
$\mathbf{K}(\frac{X_i-x}{h}) = K(\frac{X_{i, 1} - x_1}{h}) \times \dots \times K(\frac{X_{i, d} - x_d}{h})$, $K(\cdot)$ is a univariate kernel function, and $h>0$ is a bandwidth. 
In particular, all we require
of $K$ is that it is bounded and $\int uK(u)du=0,\int u^2K(u)du<\infty$
(see Thm.~\ref{thm: consistent-cate}).
For example, we can use the 
Gaussian kernel $K(u) = \exp(u^2/2)$ or
uniform kernel $K(u) = \ind(\abs{u} \le \frac{1}{2})$.  
\emph{If} we knew the true weights $\mathbf{W}^*_t$, then basic 
results on non-parametric 
regression and inverse-probability weighting would immediately
give that $\hat{\mu}_t(\mathbf{W}^*_t; x) \to \mu_t(x)$
as $n \to \infty$ if $h \to 0$ and $nh^d \to \infty$ 
\citep{pagan1999nonparametric}, \ie, the estimator, eq.~\eqref{eq: estimator}, would be consistent when the complete propensity scores are known.

However, the estimator in eq.~\eqref{eq: estimator} is an infeasible one in practice because $1/W_{ti}^* = e_t(X_i, Y_i(t))$ is unknown and cannot be estimated from any amount of observed data. Instead, we bracket the range of feasible weights and consider how large or small eq.~\eqref{eq: estimator} might be. For $t \in \{0, 1\}$ and $\Gamma\geq1$, we define
$$
\widehat{\mathcal W}_t=\fbraces{\mathbf W_t: \alpha_t(X_i; \Gamma)\leq W_{ti}\leq \beta_t(X_i; \Gamma)~\forall i}\subseteq\R n,
$$
where $\alpha_t(\cdot)$ and $\beta_t(\cdot)$ are defined in \eqref{eq: uncertainty-set}.
Our interval CATE estimator is
\begin{align}
&\widehat{\mathcal T}(x;\Gamma)=
[
\hat{\underline{\tau}}(x)
,~
\hat{\overline{\tau}}(x)
],~~~\text{where}\\
\label{eq: est-cate-bounds}
&\hat{\overline{\tau}}(x) = \hat{\overline{\mu}}_1(x)  - \hat{\underline{\mu}}_{0}(x) ,  ~~ \hat{\underline{\tau}}(x) = \hat{\underline{\mu}}_1(x)  - \hat{\overline{\mu}}_{0}(x), \\
	&\hat{\overline{\mu}}_t(x; \Gamma) = \sup\limits_{\mathbf W_t\in\widehat{\mathcal W}_t}\hat{\mu}_t(\mathbf{W}; x),  \label{eq: est-upper-outcome} \\ 
	&\hat{\underline{\mu}}_t(x; \Gamma) = \inf\limits_{\mathbf W_t\in\widehat{\mathcal W}_t}\hat{\mu}_t(\mathbf{W}; x).   \label{eq: est-lower-outcome} 
\end{align} 
Note that $\alpha_t(x),\beta_t(x)$ depend on $e_t(x)$.
Since we mainly focus on dealing with unobserved confounding, we assume that we know the nominal propensity scores $e_t(x)$ for simplicity as it is in fact identifiable. In Subsection \ref{subsection: practical}, we discuss the estimation of the nominal propensity score in finite samples and the interpretation of the marginal sensitivity model when the propensity score is misspecified. 

\subsection{Computing the interval estimator}

Our interval estimator is defined as an optimization problem over
$n$ weight variables. 
We can simplify this problem by characterizing its solution.
Using optimization duality,
Lemma \ref{lemma: sample-est-formulation} in appendix
shows that, in the solution, each weight variable realizes its bounds
(upper or lower) and that weights are monotone when sorted in
increasing $Y_i$ value. This means that one need only search
for the inflection point.
As summarized in the following proposition, this means that 
the solution is given by a simple discrete line search 
to optimize a unimodal function, after sorting. 

\begin{proposition}\label{prop: sample-est-computation}
	Suppose that we reorder the data so that $Y_1 \le Y_2 \le \dots \le Y_n$. Define the following terms for $k \in \{1, \dots, n\}$, $x \in \mathcal{X}$, and $\Gamma\geq1$:
	\begin{align*}
	\overline{\lambda}(k; x, \Gamma) &= \frac{\sum_{i \le k}\tilde{\alpha}^K_i(t, x; \Gamma)Y_i +  \sum_{i \ge k+1}\tilde{\beta}^K_i(t, x; \Gamma)Y_i}{\sum_{i \le k}\tilde{\alpha}^K_i(t, x; \Gamma)+  \sum_{i \ge k+1}\tilde{\beta}^K_i(t, x; \Gamma)}, \\
	\underline{\lambda}(k; x, \Gamma)  &= \frac{\sum_{i \le k}\tilde{\beta}^K_i(t, x; \Gamma)Y_i +  \sum_{i \ge k+1}\tilde{\alpha}^K_i(t, x; \Gamma) Y_i}{\sum_{i \le k}\tilde{\beta}^K_i(t, x; \Gamma)+  \sum_{i \ge k+1}\tilde{\alpha}^K_i(t, x; \Gamma)},
	\end{align*}
	where 
	\begin{align*}
	\tilde{\alpha}^K_i(t, x; \Gamma) = \mathbb{I}[T_i = t]\alpha_t(X_i; \Gamma)\mathbf{K}(\frac{X_i - x}{h}), \\
	\tilde{\beta}^K_i(t, x; \Gamma) = \mathbb{I}[T_i = t]\beta_t(X_i; \Gamma)\mathbf{K}(\frac{X_i - x}{h}).
	\end{align*}
	Then we have that 
	\begin{align*}
	&\hat{\overline{\mu}}_t(x) = \overline{\lambda}({k}^H(x, \Gamma); x, \Gamma),~
	\hat{\underline{\mu}}_t(x) = \underline{\lambda}({k}^L(x, \Gamma); x, \Gamma),
	\end{align*}
	where
	\begin{align*}
	{k}^H(x, \Gamma) &= \inf\{k=1,\dots,n: \overline\lambda(k;x,  \Gamma) \ge \overline\lambda(k+1;x,  \Gamma)\},\\
	{k}^L(x, \Gamma) &= \inf\{k=1,\dots,n: \underline\lambda(k; x, \Gamma) \le \underline\lambda(k+1; x, \Gamma)\}.
	\end{align*}
\end{proposition}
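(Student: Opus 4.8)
The plan is to show that each box‑constrained program defining $\hat{\overline\mu}_t(x)$ and $\hat{\underline\mu}_t(x)$ has an optimal weight vector of a simple ``threshold'' form once the data are sorted by $Y_i$, and that the resulting one–parameter family of candidate values is unimodal, so the optimum is pinned down by the stated discrete line search. I describe the argument for $\hat{\overline\mu}_t(x)$; the case of $\hat{\underline\mu}_t(x)$ is entirely symmetric (replace $Y_i$ by $-Y_i$, equivalently swap the roles of $\alpha_t$ and $\beta_t$).

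\emph{Step 1: reduction to a cut index.} Lemma~\ref{lemma: sample-est-formulation} tells us that an optimal weight vector for \eqref{eq: est-upper-outcome} sets every $W_{ti}$ to one of its two bounds $\alpha_t(X_i;\Gamma),\beta_t(X_i;\Gamma)$, and that the pattern is monotone when the units are sorted in increasing $Y_i$. (Intuitively, $\mathbf W_t\mapsto\hat\mu_t(\mathbf W_t;x)$ in \eqref{eq: estimator} is linear‑fractional with positive denominator on the box $\widehat{\mathcal W}_t$, so its maximizer is a vertex of the box and, moreover, also maximizes the \emph{linear} map $\mathbf W_t\mapsto\sum_i\ind(T_i=t)\mathbf K(\tfrac{X_i-x}h)W_{ti}(Y_i-v)$, where $v=\hat{\overline\mu}_t(x)$; coordinatewise this forces $W_{ti}=\beta_t(X_i;\Gamma)$ whenever $Y_i>v$ and $W_{ti}=\alpha_t(X_i;\Gamma)$ whenever $Y_i<v$.) Hence, after reordering so that $Y_1\le\dots\le Y_n$, there is an index $k$ with optimal weights equal to $\alpha_t$ on $\{i\le k\}$ and $\beta_t$ on $\{i>k\}$, and for this $k$ the estimator value is exactly $\overline\lambda(k;x,\Gamma)$. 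Conversely every $\overline\lambda(k;x,\Gamma)$ is the value of $\hat\mu_t(\,\cdot\,;x)$ at a feasible weight vector, so $\hat{\overline\mu}_t(x)=\max_{1\le k\le n}\overline\lambda(k;x,\Gamma)$.

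\emph{Step 2: unimodality of $k\mapsto\overline\lambda(k;x,\Gamma)$.} Passing from $k$ to $k+1$ changes only the weight attached to the single outcome $Y_{k+1}$, lowering it by $\delta_{k+1}:=\tilde\beta^K_{k+1}(t,x;\Gamma)-\tilde\alpha^K_{k+1}(t,x;\Gamma)\ge 0$; writing $\overline\lambda(k;x,\Gamma)=N_k/D_k$ for the numerator and denominator in the proposition, one has $N_{k+1}=N_k-\delta_{k+1}Y_{k+1}$, $D_{k+1}=D_k-\delta_{k+1}$, and a one‑line computation gives
\begin{equation*}
\overline\lambda(k+1;x,\Gamma)-\overline\lambda(k;x,\Gamma)=\frac{\delta_{k+1}\,\bigl(\overline\lambda(k;x,\Gamma)-Y_{k+1}\bigr)}{D_{k+1}}.
\end{equation*}
Since $\delta_{k+1}\ge0$ and $D_{k+1}>0$, the increment has the sign of $\overline\lambda(k;x,\Gamma)-Y_{k+1}$ (and vanishes when $\delta_{k+1}=0$). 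Because $Y_1\le\dots\le Y_n$, once $\overline\lambda(k;x,\Gamma)\le Y_{k+1}$ holds the sequence is nonincreasing from $k$ on while the $Y_i$ keep growing, so the inequality persists; hence $\overline\lambda(\,\cdot\,;x,\Gamma)$ first weakly increases and then weakly decreases. Its maximum is therefore attained at the first $k$ with $\overline\lambda(k;x,\Gamma)\ge\overline\lambda(k+1;x,\Gamma)$ --- with the convention $\overline\lambda(n+1;x,\Gamma)=\overline\lambda(n;x,\Gamma)$, which is what the formula returns when the suffix sum is empty --- that is, at $k^H(x,\Gamma)$, and combining with Step~1 gives $\hat{\overline\mu}_t(x)=\overline\lambda(k^H(x,\Gamma);x,\Gamma)$. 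The lower‑bound statement follows the same way: the optimal weights are $\beta_t$ on a prefix and $\alpha_t$ on a suffix, one gets $\underline\lambda(k+1;x,\Gamma)-\underline\lambda(k;x,\Gamma)=\delta_{k+1}(Y_{k+1}-\underline\lambda(k;x,\Gamma))/D'_{k+1}$ with $D'_{k+1}>0$, so $\underline\lambda(\,\cdot\,;x,\Gamma)$ first weakly decreases then weakly increases and is minimized at $k^L(x,\Gamma)$.

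\emph{Main obstacle.} The crux is Step~2: deriving the sign identity and deducing global unimodality, together with the attendant bookkeeping. Ties among the $Y_i$, units with $T_i\neq t$, and units with vanishing kernel weight all produce $\delta_{k+1}=0$ and hence flat stretches of $\overline\lambda(\,\cdot\,;x,\Gamma)$, so to be certain the first ``non‑increase'' index returns the global optimum (rather than the left end of an interior plateau) one should either pass to the subsequence of indices with $\delta_{k+1}>0$ or work under mild regularity (distinct outcomes among treated units and a strictly positive kernel). By contrast, Step~1 is standard linear‑fractional‑programming reasoning (or a direct appeal to Lemma~\ref{lemma: sample-est-formulation}), and the algebra underlying the lower bound is the mirror image of Step~2.
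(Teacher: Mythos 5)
Your proof is correct and follows essentially the same route as the paper: Lemma~\ref{lemma: sample-est-formulation} (equivalently, the linear-fractional/Charnes--Cooper duality argument) gives the threshold structure of the optimal weights, and a unimodality argument justifies the discrete line search; your increment identity $\overline\lambda(k+1)-\overline\lambda(k)=\delta_{k+1}(\overline\lambda(k)-Y_{k+1})/D_{k+1}$ is just a cleaner packaging of the paper's chain of inequalities. One remark: the plateau caveat you flag is a genuine issue that the paper's own proof glosses over --- its claimed equivalence between $\overline\lambda(k)\ge\overline\lambda(k+1)$ and $\overline\lambda(k+1)\le Y_{k+1}$ fails when $\delta_{k+1}=0$ (a unit in the other arm, a vanishing kernel weight, or a tied outcome), in which case the first non-increase index can land on an interior plateau strictly before the maximizer, so the statement does need the restriction to indices with $\delta_{k+1}>0$ (or an equivalent regularity condition) that you propose.
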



\subsection{Sharpness guarantees}

We next establish that our interval estimator is \emph{sharp},
\ie, it converges to the identifiable set of CATE values. 
That is to say, as a robust estimator that accounts for possible
confounding, our interval is neither too wide nor too narrow --
asymptotically, it matches exactly what can be hoped to be learned
from any amount of observational data.
The result is based on a new \emph{uniform} 
convergence result that we prove for
the weight-parametrized kernel regression estimator, $\hat\mu_t(W;x)$,
to the weight-parametrized estimand $\mu_t(w_t;x)$.
Although the uniform convergence may in fact not hold in general, 
it holds when restricting to monotone weights, which is where
we leverage our characterization of the optimal solution
to eqs.~\eqref{eq: est-lower-outcome} and \eqref{eq: est-upper-outcome}
as well as
a similar result characterizing the population version in
eqs.~\eqref{eq: pop-outcome-upper} and \eqref{eq: pop-outcome-lower}
using semi-infinite optimization duality \citep{shapiro01}.

\begin{theorem}\label{thm: consistent-cate}
	Suppose that
	\begin{enumerate}[label=\roman*.,align=left,leftmargin=*,labelindent=0in,topsep=0ex,itemsep=0ex,partopsep=0ex,parsep=0ex]
	\item $K$ is bounded, $\int K(u)<\infty$, $\int uK(u)du = 0$, and $\int u^2K(u)du < \infty$,
	\item $n \to \infty$, $h \to 0$ and $nh^{2d} \to \infty$,
	\item $Y$ is a bounded random variable,
	\item $e_t(x)$ and $f_t(y \mid x)$ are twice continuously differentiable with respect to $x$ for any fixed $y\in\mathcal Y$ with bounded first and second derivatives,\footnote{Note that we can also use $\alpha_t(x; \Gamma)$ and $\beta_t(x; \Gamma)$ as the bounds on $W_{ti}$ in the definition of our estimators in eqs.~\eqref{eq: est-upper-outcome} and \eqref{eq: est-lower-outcome}. In this case, we don't need derivative assumptions on $e_t(\cdot)$. In practice, using $x$ or $X_i$ leads to similar results.}
	\item $e_t(x, y)$ is bounded away from 0 and 1 uniformly over 
	$x \in \mathcal{X}$, $y \in \mathcal{Y}$, $t \in \{0, 1\}$.
	\end{enumerate}\if0\forarxiv\vspace{-0.5\baselineskip}\fi
	Then, for $t \in \{0, 1\}$, 
	\begin{align*}
		\hat{\overline{\mu}}_t(x) &\overset{\text{p}}{\to} {\overline{\mu}}_t(x),   \quad \hat{\underline{\mu}}_t(x) \overset{\text{p}}{\to} {\underline{\mu}}_t(x), \\ 
		\hat{\overline{\tau}}(x) &\overset{\text{p}}{\to} {\overline{\tau}}(x),    \quad \hat{\underline{\tau}}(x) \overset{\text{p}}{\to} {\underline{\tau}}(x). 
	\end{align*}
\end{theorem}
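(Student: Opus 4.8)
The plan is to reduce the theorem, via the continuous mapping theorem, to showing $\hat{\overline{\mu}}_t(x)\overset{p}{\to}\overline{\mu}_t(x)$ and $\hat{\underline{\mu}}_t(x)\overset{p}{\to}\underline{\mu}_t(x)$ for each $t\in\{0,1\}$ at an evaluation point $x$ with $f_X(x)>0$: the four displayed conclusions are sums of these, and $\hat{\underline{\mu}}_t$ reduces to $\hat{\overline{\mu}}_t$ by replacing $Y$ with $-Y$ (which preserves assumptions (i)--(v)), so I would only treat $\hat{\overline{\mu}}_t$. The central idea is to exploit Proposition~\ref{prop: sample-est-computation} (and Lemma~\ref{lemma: sample-est-formulation}): the adversarial optimum over the $n$-dimensional box $\widehat{\mathcal W}_t$ is ``bang-bang'' and monotone in $Y_i$, so setting $W^{(c)}_{ti}=\alpha_t(X_i;\Gamma)\ind(Y_i\le c)+\beta_t(X_i;\Gamma)\ind(Y_i>c)$ and $G_n(c)=\hat{\mu}_t(\mathbf W^{(c)}_t;x)$, one has $\hat{\overline{\mu}}_t(x)=\sup_{c\in\mathbb R}G_n(c)$ --- the value of the piecewise-constant unimodal map $G_n$ at its mode $k^H(x,\Gamma)$. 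On the population side, the linear-fractional / semi-infinite duality argument referenced just before the theorem (cf.\ \citep{shapiro01}) shows the worst-case $w_t\in\mathcal W_t(x;\Gamma)$ is likewise a threshold rule $w^{(c)}_t(y\mid x)=\alpha_t(x;\Gamma)\ind(y\le c)+\beta_t(x;\Gamma)\ind(y>c)$, giving $\overline{\mu}_t(x)=\sup_c G(c)$ with $G(c)=\mu_t(w^{(c)}_t;x)$ and the optimum attained at $c^\star=\overline{\mu}_t(x)$. Since $|\sup_c G_n(c)-\sup_c G(c)|\le\sup_c|G_n(c)-G(c)|$, the theorem reduces to the \emph{uniform-in-$c$} statement $\sup_{c\in\mathbb R}|G_n(c)-G(c)|\overset{p}{\to}0$.

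To establish that, I would write $G_n(c)=\hat N(c)/\hat D(c)$ and $G(c)=N(c)/D(c)$, with $\hat N(c)=\tfrac1{nh^d}\sum_i\ind(T_i=t)\mathbf K(\tfrac{X_i-x}{h})W^{(c)}_{ti}Y_i$, $\hat D(c)$ the same without the $Y_i$, $N(c)=\kappa f_X(x)\int y\,w^{(c)}_t(y\mid x)f_t(y\mid x)\,dy$, $D(c)=\kappa f_X(x)\int w^{(c)}_t(y\mid x)f_t(y\mid x)\,dy$, and $\kappa=\int\mathbf K(u)\,du$ (which cancels in the ratio, so $G(c)=\mu_t(w^{(c)}_t;x)$), then bound $\sup_c|\hat N(c)-N(c)|$ and $\sup_c|\hat D(c)-D(c)|$ by a bias term and a fluctuation term. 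For the bias, $\expect[\hat N(c)]=\int\mathbf K(u)\,\Psi_c(x+hu)\,du$, where $\Psi_c$ is the integrand defining $N(c)$ with the spatial argument freed; assumption~(iv) (plus standing smoothness of $f_X$) makes $\Psi_c$ twice continuously differentiable in that argument with derivative bounds uniform over $c$ and $y$, so a second-order Taylor expansion together with $\int uK(u)\,du=0$ and $\int u^2K(u)\,du<\infty$ yields $\sup_c|\expect[\hat N(c)]-N(c)|=\bigO(h^2)\to0$ (the footnote variant, using $\alpha_t(x;\Gamma),\beta_t(x;\Gamma)$ in place of $\alpha_t(X_i;\Gamma),\beta_t(X_i;\Gamma)$, removes the derivative requirement on $e_t$). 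For the fluctuation --- and here the monotonicity characterization is essential --- the summands depend on $c$ only through the single monotone indicator $\ind(Y_i>c)$, so the governing class $\{(t',z,y)\mapsto\ind(t'=t)\mathbf K(\tfrac{z-x}{h})(\alpha_t(z)\ind(y\le c)+\beta_t(z)\ind(y>c))y:\,c\in\mathbb R\}$ is VC-subgraph of index $\bigO(1)$ with bounded envelope (using $K$ bounded, $Y$ bounded by (iii), and $e_t$ bounded away from $0,1$, which follows from (v) and the MSM, so $\alpha_t,\beta_t$ are bounded). A maximal inequality for such a class plus the variance bound $\sup_c\expect[\ind(T=t)\mathbf K(\tfrac{X-x}{h})^2(\cdots)^2]=\bigO(h^d)$ gives $\sup_c|\hat N(c)-\expect[\hat N(c)]|=\bigO_p(\sqrt{\log n/(nh^{d})})$; even the crude route --- a Hoeffding bound over the at most $n$ distinct realizations of $(\ind(Y_i>c))_i$ with envelope $\bigO(1/h^d)$, after an $\varepsilon$-net --- gives $\bigO_p(\sqrt{\log n/(nh^{2d})})$, which is exactly what assumption~(ii)'s $nh^{2d}\to\infty$ controls. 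Either way $\sup_c|\hat N(c)-N(c)|\overset{p}{\to}0$ and $\sup_c|\hat D(c)-D(c)|\overset{p}{\to}0$.

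To finish, note $\inf_c D(c)\ge\kappa f_X(x)\,\alpha_t(x;\Gamma)e_t(x)\ge\kappa f_X(x)/\Gamma>0$ (from $\alpha_t(x;\Gamma)e_t(x)=1/\Gamma+e_t(x)(1-1/\Gamma)\ge1/\Gamma$), so with probability tending to one $\inf_c\hat D(c)\ge\tfrac12\kappa f_X(x)/\Gamma>0$ and $G_n$ is uniformly well defined; the identity $G_n(c)-G(c)=\dfrac{\hat N(c)-N(c)}{\hat D(c)}+N(c)\,\dfrac{D(c)-\hat D(c)}{\hat D(c)\,D(c)}$, combined with $\sup_c|N(c)|\le\kappa f_X(x)\beta_t(x;\Gamma)\sup_{y\in\mathcal Y}|y|<\infty$ and the previous paragraph, gives $\sup_c|G_n(c)-G(c)|\overset{p}{\to}0$; hence $\hat{\overline{\mu}}_t(x)\overset{p}{\to}\overline{\mu}_t(x)$. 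The symmetric ($Y\to-Y$) argument gives $\hat{\underline{\mu}}_t(x)\overset{p}{\to}\underline{\mu}_t(x)$, and then $\hat{\overline{\tau}}(x)=\hat{\overline{\mu}}_1(x)-\hat{\underline{\mu}}_0(x)\overset{p}{\to}\overline{\mu}_1(x)-\underline{\mu}_0(x)=\overline{\tau}(x)$ and likewise $\hat{\underline{\tau}}(x)\overset{p}{\to}\underline{\tau}(x)$. I expect the genuine obstacle to be the uniform-in-$c$ law of large numbers above: for an \emph{unrestricted} weight vector in $\widehat{\mathcal W}_t$ the associated function class is far too rich and $\sup_{\mathbf W_t\in\widehat{\mathcal W}_t}\hat{\mu}_t(\mathbf W_t;x)$ need not converge to the population $\sup$ over $\mathcal W_t(x;\Gamma)$, so the whole argument hinges on first collapsing the problem onto the one-parameter family of monotone threshold weights --- via Proposition~\ref{prop: sample-est-computation} and its population counterpart --- and only then invoking empirical-process bounds.
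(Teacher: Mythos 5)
Your proposal is correct and follows essentially the same route as the paper's proof: both collapse the adversarial optimization to the one-dimensional family of monotone threshold weights via the LP duality characterizations (Lemma~\ref{lemma: sample-est-formulation} and its population analogue Lemma~\ref{lemma: population-bounds}), reduce the convergence of suprema to a uniform law of large numbers over that restricted class via the elementary sup-difference inequality (the paper's Lemma~\ref{lemma: sup_inf}), control the fluctuation by a finite-class/Rademacher bound exploiting that the threshold indicators take at most $n+1$ distinct values on the sample (yielding the $\sqrt{\log n/(nh^{2d})}$ rate that assumption~(ii) absorbs), and control the bias by a second-order Taylor expansion using $\int uK(u)\,du=0$. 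The only cosmetic differences are your explicit parametrization by the threshold $c$ rather than the class $\mathcal U^{nd}$, and your explicit tracking of the $f_X(x)$ factor that cancels in the ratio.
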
\if0\forarxiv\vspace{-0.75\baselineskip}\fi
In words,
Theorem \ref{thm: consistent-cate} states that under fairly general assumptions, if the bandwidth $h$ is appropriately chosen, our bounds for both conditional average outcomes and CATE are pointwise consistent and hence sharp.

\subsection{Personalized Decisions from Interval Estimates and Minimax Regret Guarantees} \label{subsection: decision-making}

We next consider how our interval CATE estimate 
can be used for personalized treatment
decisions and prove that the resulting decisions 
rules are asymptotically minimax
optimal. Let us assume that the outcomes $Y_i$ correspond to 
\emph{losses} so that \emph{lower outcomes are better}. Then,
\emph{if} the CATE were known, 
given an individual with covariates
$x$, clearly the optimal treatment decision is 
$t=1$ if 
$\tau(x)<0$ and $t=0$ if $\tau(x)>0$
(and either if $\tau(x)=0$). In other words,
$\pi(x)=\ind(\tau(x)<0)$ minimizes the risk
\begin{align*}
V(\pi;\tau) &= \expect[\pi(X)Y(1) + (1 - \pi(X))Y(0)]
\\&=\E[Y(1)]+\E[\pi(X)\tau(X)]
\end{align*}
over $\pi:\mathcal X\to\{0,1\}$.
If $\tau(x)$ can be point-estimated,
an obvious approach to making personalized decisions is
to threshold an estimator of it. If the estimator is consistent,
this will lead to asymptotically optimal risk.

This, however, breaks down when CATE is unidentifiable and
we only have an interval estimate.
It is not immediately clear how one should threshold an interval.
We next discuss how an approach that thresholds when possible
and otherwise falls back to defaults is minimax optimal.
When CATE is not a single function, there is also no single identifiable value of $V(\pi;\tau)$.
Instead, we focus on the 
worst case regret given by the MSM relative to a default $\pi_0(x)$:
\begin{align*}\textstyle
\overline R_{\pi_0}(\pi;\Gamma) = 
\sup_{\tau(x)\in\mathcal T(x;\Gamma)~\forall x\in\mathcal X}(V(\pi;\tau)  - V(\pi_0;\tau))
\end{align*}
The default represents the decision that would have been taken
in the absence of any of our observational data.
For example, in the medical domain, if there is not enough clinical trial evidence to support treatment approval, then the default may be to not treat, $\pi_0(x) = 0$.
At the population level, the uniformly best 
possible policy we can hope for
is the minimax regret policy:
\begin{equation}\label{eq: minimaxpolicy}\textstyle
\pi^*(\;\cdot\;;\Gamma) \in \argmin_{\pi: \mathcal{X} \to \{0, 1\}} \overline R_{\pi_0}(\pi;\Gamma)
\end{equation}
\begin{proposition}\label{prop: population-opt-policy}
The following is a solution to eq.~\eqref{eq: minimaxpolicy}:
	\begin{align}\notag
		\pi^*(x;\Gamma) = 
		\ind(\overline{\tau}(x;\Gamma) \le 0) +  \pi_0(x)\ind(
		\underline{\tau}(x;\Gamma) < 0 < \overline{\tau}(x;\Gamma))
	\end{align}
\end{proposition}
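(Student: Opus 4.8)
The plan is to reduce the minimax problem to a pointwise (in $x$) optimization. First I would use the decomposition $V(\pi;\tau)=\text{const}+\E[\pi(X)\tau(X)]$ recorded just above the statement — the additive constant, being independent of $\pi$, cancels in the regret — to rewrite
\[
\overline R_{\pi_0}(\pi;\Gamma)=\sup_{\tau(x)\in\mathcal T(x;\Gamma)\,\forall x}\E\big[(\pi(X)-\pi_0(X))\,\tau(X)\big].
\]
The structural fact I would lean on is that the feasible set for $\tau$ is exactly the product of the pointwise intervals $\mathcal T(x;\Gamma)=[\underline\tau(x;\Gamma),\overline\tau(x;\Gamma)]$, so the values of $\tau$ at distinct $x$ are mutually unconstrained. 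Since the objective is linear in $\tau(x)$ for each fixed $x$ and separable across $x$, the supremum passes inside the expectation: for any feasible $\tau$ the integrand is dominated pointwise by $\psi(X;\pi):=\sup_{v\in[\underline\tau(X;\Gamma),\overline\tau(X;\Gamma)]}(\pi(X)-\pi_0(X))v$, while the pointwise maximizer $\tau^\star(x)$ — equal to $\overline\tau(x;\Gamma)$ on $\{\pi>\pi_0\}$, to $\underline\tau(x;\Gamma)$ on $\{\pi<\pi_0\}$, and arbitrary on $\{\pi=\pi_0\}$ — is itself a feasible and measurable choice attaining equality. Hence $\overline R_{\pi_0}(\pi;\Gamma)=\E[\psi(X;\pi)]$, with $\psi$ bounded because $Y$ (and hence $\overline\tau,\underline\tau$) is bounded.

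Next I would observe that $\psi(X;\pi)$ depends on $\pi$ only through the single bit $\pi(X)\in\{0,1\}$, and $x\mapsto\psi(x;\pi)$ is separable in these bits, so minimizing $\overline R_{\pi_0}(\pi;\Gamma)$ over all policies reduces to choosing, for a.e.\ $x$, the value $\pi(x)\in\{0,1\}$ minimizing
\[
\psi(x;\pi)=\max_{v\in[\underline\tau(x;\Gamma),\overline\tau(x;\Gamma)]}(\pi(x)-\pi_0(x))\,v .
\]
Evaluating the two options: $\psi=0$ when $\pi(x)=\pi_0(x)$; $\psi=\overline\tau(x;\Gamma)$ when $\pi(x)=1,\pi_0(x)=0$; and $\psi=-\underline\tau(x;\Gamma)$ when $\pi(x)=0,\pi_0(x)=1$. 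I would then tabulate the minimizing choice over the sign pattern of the interval (the three regimes $\overline\tau(x;\Gamma)\le0$, $\underline\tau(x;\Gamma)\ge0$, $\underline\tau(x;\Gamma)<0<\overline\tau(x;\Gamma)$, each paired with $\pi_0(x)\in\{0,1\}$) and check that the proposed $\pi^\star$ selects a minimizer in every case: it switches to $1$ precisely when $\overline\tau(x;\Gamma)\le0$ makes the treat-regret nonpositive, to $0$ precisely when $\underline\tau(x;\Gamma)\ge0$ makes the no-treat-regret nonpositive, and it defers to $\pi_0(x)$ — incurring zero regret, which is optimal — exactly when $0$ lies strictly inside $\mathcal T(x;\Gamma)$ so that neither deviation from the default is guaranteed harmless. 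Ties (an interval endpoint equal to $0$, or the degenerate interval $\{0\}$) are immaterial since both options then give the same value. Integrating these pointwise-optimal contributions yields $\overline R_{\pi_0}(\pi^\star;\Gamma)=\inf_\pi\overline R_{\pi_0}(\pi;\Gamma)$, i.e.\ $\pi^\star$ solves \eqref{eq: minimaxpolicy}.

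The one step I would be most careful about is the interchange of $\sup_\tau$ and $\E$: it uses that the identified set is precisely the rectangle of pointwise intervals (which holds by the very definition of $\mathcal T(x;\Gamma)$ used here, so no appeal to achievability by an actual data-generating process is needed) together with measurability of the selector $\tau^\star$, which follows from measurability of $x\mapsto\overline\tau(x;\Gamma)$ and $x\mapsto\underline\tau(x;\Gamma)$ — inherited from the regularity already imposed on $e_t$ and $f_t$ — and of $\pi,\pi_0$. Everything after that is a finite case analysis over $\pi(x),\pi_0(x)\in\{0,1\}$ and the three sign regimes of the interval.
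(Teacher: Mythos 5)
Your proof is correct, but it takes a genuinely different route from the paper's. The paper first relaxes to randomized policies $\pi:\mathcal X\to[0,1]$, invokes the Von Neumann minimax theorem to exchange the $\inf_\pi$ and $\sup_\tau$, extracts a saddle point $(\pi^*,\tau^*)$, and then identifies the worst-case $\tau^*(x)=\overline\tau(x)\ind(\overline\tau(x)\le 0)+\underline\tau(x)\ind(\underline\tau(x)\ge 0)$ in closed form by a pointwise case analysis of $\E[\min((1-\pi_0(X))\tau(X),-\pi_0(X)\tau(X))]$; the optimal policy is then the threshold of this $\tau^*$. You instead fix $\pi$ and evaluate the inner supremum exactly, using the fact that the feasible set is a product of pointwise intervals to push the supremum inside the expectation via a measurable selector, obtaining $\overline R_{\pi_0}(\pi;\Gamma)=\E[\psi(X;\pi)]$ with $\psi$ separable in the bits $\pi(x)$; the outer minimization then decomposes into the same finite case analysis. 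Your route is more elementary: it needs no minimax theorem, no relaxation to randomized policies, and no saddle-point existence argument, and it makes explicit the measurability point that the paper glosses over. What the paper's route buys is the explicit worst-case $\tau^*$, which it reuses directly in the proof of Theorem~\ref{thm: policy} to express $\overline R_{\pi_0}(\hat\pi;\Gamma)$ and $\overline R_{\pi_0}(\pi^*;\Gamma)$ in comparable closed forms; your $\E[\psi(X;\pi)]$ representation would serve that purpose equally well, so nothing essential is lost. Your handling of the boundary/tie cases (interval endpoints equal to zero) is also consistent with the stated $\pi^*$.
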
\if0\forarxiv\vspace{-\baselineskip}\fi
This minimax-optimal policy always treats when $\overline{\tau}(x;\Gamma) \leq 0$ and never treats when $\underline{\tau}(x;\Gamma) \geq 0$ because in those cases the best choice is unambiguous. 
Whenever the bounds contain 0, 
the best we could hope for is 0 regret,
which we can always achieve by mimicking $\pi_0$.




Next, we prove that if we approximate the true minimax-optimal policy,
by plugging in our own interval CATE estimates 
in place of the population estimands, 
then we will achieve optimal minimax regret asymptotically.
\begin{theorem}\label{thm: policy}
Let
$$
\hat\pi(x;\Gamma) = 
		\ind(\hat{\overline{\tau}}(x;\Gamma) \le 0) +  \pi_0(x)\ind(
		\hat{\underline{\tau}}(x;\Gamma) < 0 \leq \hat{\overline{\tau}}(x;\Gamma)).
$$
	Then, under the assumptions of Theorem \ref{thm: consistent-cate},
	\[
		\overline R_{\pi_0}(\hat\pi(\cdot;\Gamma);\Gamma)
		\overset{\text{p}}{\to}
		\min_{\pi: \mathcal{X} \to \{0, 1\}} \overline R_{\pi_0}(\pi;\Gamma)=
		\overline R_{\pi_0}(\pi^*(\cdot;\Gamma);\Gamma)
	\]
\end{theorem}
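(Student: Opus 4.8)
The plan is to recast the worst-case regret of an arbitrary policy as an ordinary expectation of a per-covariate regret, and then to show that passing from $\pi^*(\cdot;\Gamma)$ to the plug-in policy $\hat\pi(\cdot;\Gamma)$ inflates this expectation by an amount controlled pointwise by the estimation error of the CATE bounds, which vanishes by Theorem~\ref{thm: consistent-cate}. For the reduction, note that the identified set $\{\tau:\tau(x)\in\mathcal T(x;\Gamma)\ \forall x\}$ imposes no joint constraint across distinct $x$, and $\tau\mapsto\expect[(\pi(X)-\pi_0(X))\tau(X)]$ is linear, so the supremum over $\tau$ can be moved inside the expectation (the pointwise-maximizing $\tau$ is measurable and feasible). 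This gives $\overline R_{\pi_0}(\pi;\Gamma)=\expect_X[r(X,\pi(X))]$ for any fixed policy $\pi$ — and, conditionally on the data, also for $\pi=\hat\pi(\cdot;\Gamma)$ — where $r(x,\pi)=0$ if $\pi=\pi_0(x)$, $r(x,\pi)=\uptau(x;\Gamma)$ if $(\pi,\pi_0(x))=(1,0)$, and $r(x,\pi)=-\lowtau(x;\Gamma)$ if $(\pi,\pi_0(x))=(0,1)$. Reading off the minimizer of $r(x,\cdot)$ over $\{0,1\}$ recovers $\pi^*(\cdot;\Gamma)$, confirming (this is Proposition~\ref{prop: population-opt-policy}) that $\pi^*$ minimizes $\overline R_{\pi_0}(\cdot;\Gamma)$. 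Writing $\Delta(x):=r(x,\hat\pi(x;\Gamma))-r(x,\pi^*(x;\Gamma))$, we then have $\Delta(x)\ge0$ and $0\le\overline R_{\pi_0}(\hat\pi(\cdot;\Gamma);\Gamma)-\overline R_{\pi_0}(\pi^*(\cdot;\Gamma);\Gamma)=\expect_X[\Delta(X)]$, so it suffices to prove $\expect_X[\Delta(X)]\overset{\text{p}}{\to}0$.

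The key pointwise bound is $0\le\Delta(x)\le 2\,\delta(x)$, where $\delta(x):=\max\{\,|\uptauhat(x;\Gamma)-\uptau(x;\Gamma)|,\ |\lowtauhat(x;\Gamma)-\lowtau(x;\Gamma)|\,\}$. To prove it, let $\hat r(x,\cdot)$ denote the per-covariate regret obtained by substituting $\uptauhat,\lowtauhat$ for $\uptau,\lowtau$ in $r$. Since $\lowtauhat(x;\Gamma)\le\uptauhat(x;\Gamma)$ (it is a supremum minus an infimum), the policy $\hat\pi(x;\Gamma)$ — under the harmless convention that it is capped at $1$, which only affects the null event $\{\uptauhat(X;\Gamma)=0\}$ — is a pointwise minimizer of $\hat r(x,\cdot)$, so $\hat r(x,\hat\pi(x;\Gamma))\le\hat r(x,\pi^*(x;\Gamma))$. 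Adding and subtracting $\hat r(x,\hat\pi(x;\Gamma))$ and $\hat r(x,\pi^*(x;\Gamma))$ in $\Delta(x)$ and using the elementary fact that $|r(x,\pi)-\hat r(x,\pi)|\le\delta(x)$ for every $\pi$ then yields $\Delta(x)\le 2\delta(x)$. By Theorem~\ref{thm: consistent-cate}, $\delta(x)\overset{\text{p}}{\to}0$ for each fixed $x$; and by boundedness of $Y$ (assumption iii) all of the CATE bounds, hence $\delta(x)$, are dominated by a constant $C$, so bounded convergence gives $\expect_{\mathrm{data}}[\delta(x)]\to0$ for every fixed $x$.

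It remains to integrate over the covariate distribution. By Tonelli (everything is nonnegative and jointly measurable), $\expect\!\left[\expect_X[\Delta(X)]\right]=\expect_X\!\left[\expect_{\mathrm{data}}[\Delta(X)]\right]\le 2\,\expect_X\!\left[\expect_{\mathrm{data}}[\delta(X)]\right]$, and since the integrand $\expect_{\mathrm{data}}[\delta(x)]$ is bounded by $C$ and tends to $0$ for every $x$, a second application of bounded convergence (now over the law of $X$) gives $\expect\!\left[\expect_X[\Delta(X)]\right]\to0$. As $\expect_X[\Delta(X)]\ge0$, convergence in $L^1$ implies convergence in probability, so $\overline R_{\pi_0}(\hat\pi(\cdot;\Gamma);\Gamma)\overset{\text{p}}{\to}\overline R_{\pi_0}(\pi^*(\cdot;\Gamma);\Gamma)=\min_\pi\overline R_{\pi_0}(\pi;\Gamma)$, which is the claim. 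The only step that is more than bookkeeping is this lifting of the pointwise-in-$x$ consistency supplied by Theorem~\ref{thm: consistent-cate} to convergence of the integrated regret; it is precisely here that the boundedness assumption on $Y$ is used. It is worth noting that no margin (low-noise) condition near $\{\uptau(X;\Gamma)=0\}$ is needed for consistency, because the per-covariate excess regret is dominated outright by the estimation error $2\delta(X)$ — such a condition would enter only if one sought an explicit rate of convergence.
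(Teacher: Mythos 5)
Your proof is correct, and its core step differs genuinely from the paper's. Both arguments begin the same way, reducing the worst-case regret to a pointwise expectation $\expect_X[r(X,\pi(X))]$ by exploiting that the identified set constrains $\tau$ separately at each $x$. From there the paper computes the regret gap \emph{exactly}: it is an expectation supported on the sign-mismatch events $\{\operatorname{sign}(\uptau(X))\neq\operatorname{sign}(\uptauhat(X))\}$ (and likewise for $\lowtau$), weighted by $|\uptau(X)|$, and it controls this by splitting on $|\uptau(X)|>\eta$ versus $\le\eta$ --- on the first event a sign mismatch forces $|\uptau(X)-\uptauhat(X)|>\eta$, whose probability vanishes by Theorem~\ref{thm: consistent-cate}, while the second event contributes at most $\eta$; then $\eta\to0$. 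You instead invoke the generic excess-risk decomposition for a plug-in minimizer, $\Delta(x)\le\bigl(r-\hat r\bigr)(x,\hat\pi(x))+\bigl(\hat r-r\bigr)(x,\pi^*(x))\le 2\delta(x)$, which bounds the pointwise excess regret directly by twice the estimation error of the interval endpoints and dispenses with both the sign-mismatch bookkeeping and the $\eta$-splitting; dominated convergence then finishes the job. Your route is shorter and makes transparent why no margin condition is needed (as you observe); the paper's route exposes the exact structure of the loss --- it is incurred only when the estimated bound crosses zero on the wrong side --- which is the natural starting point if one wanted rates under a margin assumption on $\pr(|\uptau(X)|\le\eta)$. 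Your handling of the degenerate overlap at $\uptauhat(x)=0$ (where the stated formula for $\hat\pi$ can evaluate to $2$) by capping at $1$ is an artifact of the theorem's statement rather than a gap, and the capped policy remains a minimizer of $\hat r(x,\cdot)$, so the bound $\Delta(x)\le2\delta(x)$ survives regardless of whether that event is null.
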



\subsection{Extension: interval estimates for the partial conditional average treatment effect}\label{subsection: partial-cate}
In subsections \ref{subsection: population-bounds} - \ref{subsection: decision-making}, we consider CATE conditioned on all observed confounders $X$. However, in many applications, we may be interested in heterogeneity of treatment effect in only a few variables, conditioning on only a subset of variables $X_S$, with $S \subset \{1, \dots, n\}$ as the corresponding index set. For example, in medical applications, fewer rather than more variables are preferred in a personalized decision rule due to cost and interpretability considerations \cite{qian2011performance}. 
In other cases, only a subset of covariates are available at
test time to use as inputs for an effect prediction.
Therefore, we consider estimation of the \textit{partial} conditional average treatment effect (PCATE): 
\begin{equation}\label{eq: pcate}
\tau(x_S)= \mu_1(x_S) -  \mu_{0}(x_S),
\end{equation}
where $\mu_t(x_S) = \Efb{Y(t) \mid X_S = x_S}$ for $t \in \{0, 1\}$. 

Analogously, we define $f_t(y, x_{S^c} \mid x_S)$ as the joint conditional density function of $(T = t, Y(t), X_{S^c})$ given $X_S = x_S$, where $S^c = \{1, \dots, n\} \setminus S$ denotes the complement of $S$. We further define the following for a weight functional $w_t^P(\cdot\ , \cdot \mid x_S)$: 
\begin{equation}\label{eq: partial-outcome}
\mu_t(w_t^P; x_S) = \frac{\iint yw_t^P( x_{S^c}, y \mid x_S)f_t(y, x_{S^c} \mid x_S)dydx_{S^c}}{\iint w_t^P(x_{S^c}, y \mid x_S)f_t(y, x_{S^c} \mid x_S)dydx_{S^c}}
\end{equation}
Note that $\mu_t(w_t^P; x_S) =\mu_t(x_S)$ when  $w_t^P(x_{S^c}, y \mid x_S) = \frac{1}{e_t(x_S, x_{S^c}, y)}$. We therefore define the population interval estimands for PCATE under the MSM as follows. 

\begin{definition}[PCATE Identified Set Under MSM]
	The population bounds under the MSM with parameter $\Gamma$
	for the partial expected potential outcomes and PCATE are 
	\begin{align}
	\overline{\mu}_t(x_S; \Gamma) &= \sup_{w_t^P \in \mathcal{W}^P_t(x_S; \Gamma)} \mu(w_t^P; x_S), \label{eq: partial-outcome-upper}\\
	\underline{\mu}_t(x_S; \Gamma) &=\inf_{w_t^P \in \mathcal{W}^P_t(x_S; \Gamma)} \mu(w_t^P; x_S), \label{eq: partial-outcome-lower}\\
	\overline{\tau}(x_S; \Gamma) &= \overline{\mu}_1(x_S; \Gamma)  - \underline{\mu}_{0}(x_S; \Gamma), \\  \underline{\tau}(x_S; \Gamma) &= \underline{\mu}_1(x_S; \Gamma)  - \overline{\mu}_{0}(x_S; \Gamma).
	\end{align} 
	where  
	$\mu(w_t^P; x_S)$ is defined in \eqref{eq: partial-outcome}, and 
	\begin{align*}
	&\mathcal{W}^P_t(x_S; \Gamma) =\bigg \{
	w_t^P: w_t^P(x_{S^c}, y \mid x_S) \in  [\alpha_t(x_S, x_{S^c}; \Gamma),\\
	& \qquad\qquad\qquad\qquad \beta_t(x_S, x_{S^c}; \Gamma)],  
	\forall x_{S^c} \in \mathcal{X}_{S^c}, \forall y \in \mathcal{Y} 
	\bigg\}. 
	\end{align*}
\end{definition}\if0\forarxiv\vspace{-\baselineskip}\fi
We can extend our interval estimators to the above: 
\begin{align}
\hat{\mu}_{t}(\mathbf{W}_t; x_S) &=  \frac{\sum_{i=1}^n\ind{(T_i =  t)}\mathbf{K}(\frac{X_{i, S}-x_S}{h})W_{ti}Y_i}{\sum_{i=1}^n\ind{(T_i =  t)}\mathbf{K}(\frac{X_{i, S}-x_S}{h})W_{ti}}, \label{est-partial-outcome}\\
\hat{\overline{\mu}}_t(x_S; \Gamma)&= \sup\limits_{\mathbf W_t\in\widehat{\mathcal W}_t}\hat{\mu}_t(\mathbf{W}_t; x_S),  \label{eq: est-partial-upper-outcome} \\ 
\hat{\underline{\mu}}_t(x_S; \Gamma) &= \inf\limits_{\mathbf W_t\in\widehat{\mathcal W}_t}\hat{\mu}_t(\mathbf{W}_t; x_S)   \label{eq: est-partial-lower-outcome}  
\end{align} 
These PCATE interval estimators use the partial covariates $X_S$ in the kernel function but the complete observed covariates $X$ in the nominal propensity score (in $\alpha_t(\cdot)$ and $\beta_t(\cdot)$),\footnote{We could also use $\alpha_t(x_S, X_{i, S^c}; \Gamma)$ and $\beta_t(x_S, X_{i, S^c}; \Gamma)$ here. See also footnote 1.} compared to CATE interval estimators, eqs. \eqref{eq: estimator} and \eqref{eq: est-cate-bounds}, that use $X$ in both the kernel function and nominal propensity score. In appendix section \ref{appendix: PCAT}, we prove appropriate analogues of Theorems \ref{thm: consistent-cate} and \ref{thm: policy} for our PCATE interval estimators under analogous assumptions. In this way, we can use the complete observed covariates $X$ in the nominal propensity scores to adjust for confounding as much as possible, so that unobserved confounding is minimal, 
while only estimating heterogeneity in a subset of interesting 
covariates.

\subsection{Practical Considerations}\label{subsection: practical}

 

\textbf{Boundary bias:} If the space $\mathcal{X}$ is bounded, then kernel-regression-based estimators may have high bias at points $x$ near the boundaries. This can be alleviated by truncating kernels at the boundary and corrected by replacing any kernel
term of the form $\mathbf K(\frac{X_i-x}h)$ by a re-normalized version
$\mathbf K(\frac{X_i-x}h)/
\int_{x'\in\mathcal X}\mathbf K(\frac{X_i-x'}h)dx'$
so that all kernel terms have the same integral over the bounded 
$\mathcal X$ \cite{gasser1979kernel,kheireddine2016boundary}.
We take this approach in our experiments.

\textbf{Propensity score estimation:} Although our theoretical results in section \ref{sec: method} assume that the nominal propensity score $e_t(x)$ is known, these results still hold if we use a consistent estimator for it. Indeed, $e_t(x)$ \emph{is} identifiable. Recently, a variety of nonparametric machine learning methods were proposed to estimate propensity score reliably \citep{mccaffrey2004propensity, lee2010improving}. These, for example, may be used. When parametric estimators are used for propensity score estimation, \eg, linear logistic regression, model misspecification error may occur. In this case, we can interpret the marginal sensitivity model as the log odds ratio bound between the complete propensity score and the best parametric approximation of the nominal propensity score. Consequently, the resulting CATE sensitivity bounds also incorporate model misspecifcation uncertainty. See \cite{zhaosmall17} for more details on marginal sensitivity model for parametric propensity score. 

\paragraph{Selection of the sensitivity parameter $\Gamma$.} The parameter $\Gamma$ bounds the magnitude of the effects of unobserved confounders on selection, which is usually unknown.  \cite{hsu2013calibrating} suggest calibrating the assumed effect of the unobserved confounders to the effect of observed covariates. For example, we can compute the effect of omitting each observed covariate on the log odds ratio of the propensity score and use domain knowledge to assess plausible ranges of $\Gamma$ to determine if we could have omitted a variable that could have as large an effect as the observed one.
\section{Experiments}\if0\forarxiv\vspace{-5pt}\fi
\paragraph{Simulated Data.}
We first consider an one-dimensional example illustrating the effects of unobserved confounding on conditional average treatment effect estimation. We generate a binary unobserved confounder $u \sim \op{Bern}(\nicefrac{1}{2})$ (independent of all else), and covariate $X \sim \op{Unif}[-2,2]$. We fix the nominal propensity score as
$e(x) = \sigma(0.75 x +0.5)$. For the sake of demonstration, we
fix an underlying ``true'' $\Gamma^*$ value and
set the complete propensity scores as 
$e(x,u) = \frac{u}{\alpha_t(x; \Gamma^*)} + \frac{1-u}{\beta_t(x; \Gamma^*)}$ and sample $T \sim \op{Bern}(e(x,u))$.
This makes the complete propensities achieve the extremal MSM bounds corresponding to $\Gamma^*$, with $u$ controlling which bound we reach.
\if1\forarxiv
\begin{figure}[t!]%
\begin{minipage}[t]{0.475\textwidth}%
\includegraphics[width=\textwidth]{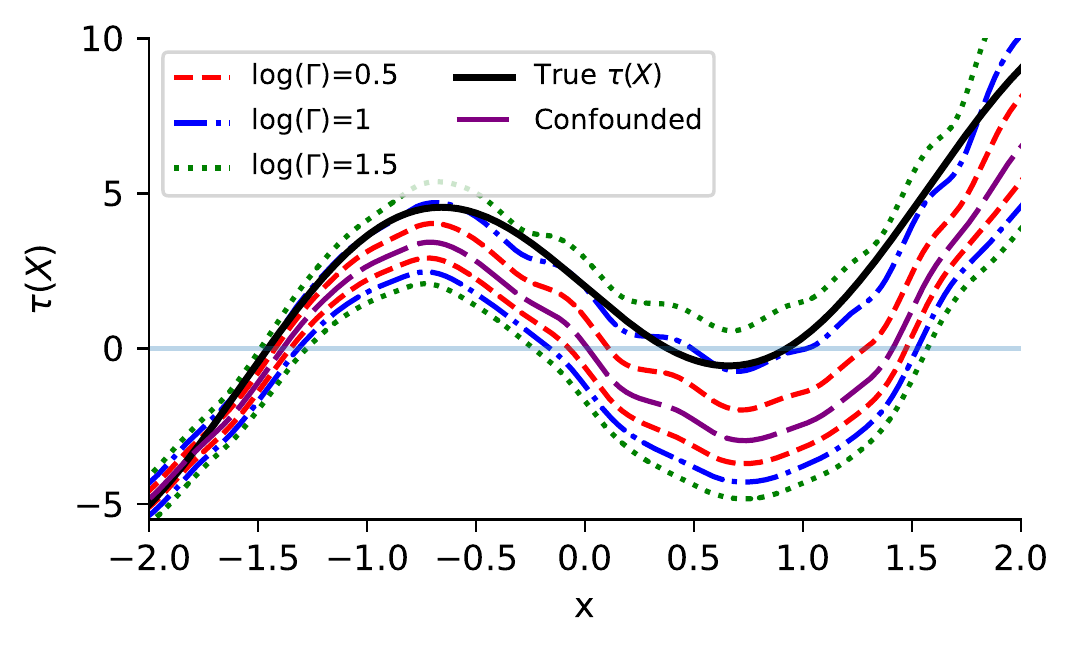}%
\caption{Bounds on CATE for differing values of $\Gamma$ (dashed), compared to original confounded kernel regression (purple dashed) and true CATE (black). }\label{fig-1d}%
\end{minipage}%
\hspace{0.045\textwidth}%
\begin{minipage}[t]{0.475\textwidth}%
\includegraphics[width=\textwidth]{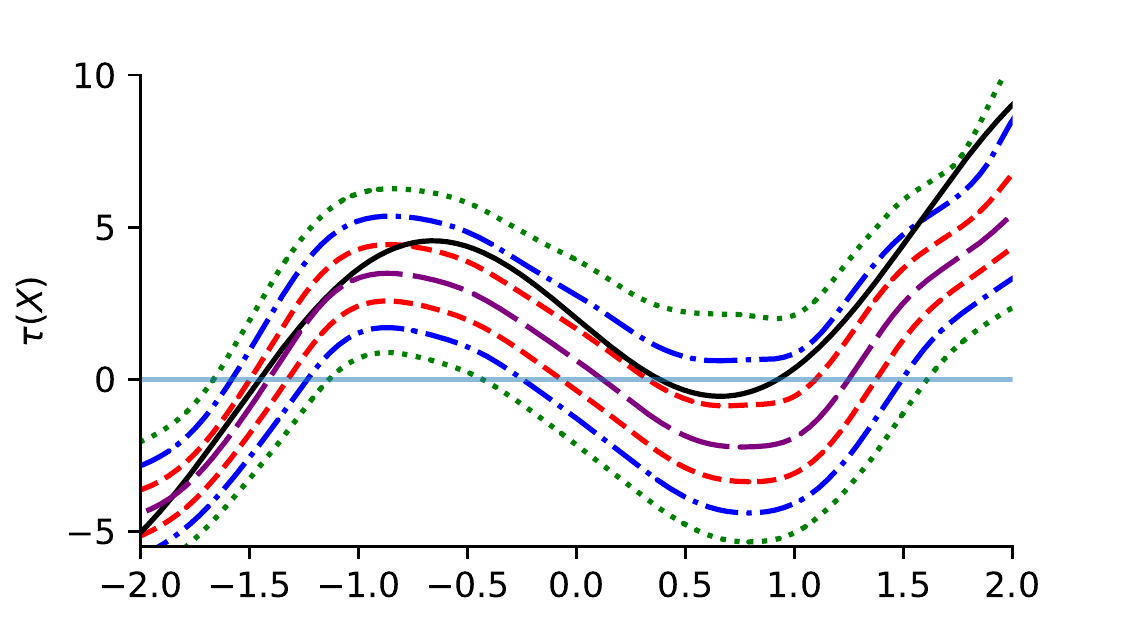}%
\caption{Bounds on PCATE. Legend same as in Fig~\ref{fig-1d}.}
\label{fig:partialcate-1d-ablation-nobeta-x}%
\end{minipage}%
\end{figure}
\fi
\if0\forarxiv
\begin{figure}[t!]%
\includegraphics[width=0.5\textwidth]{figs/1D-sinusoidal-n--specified2000.pdf}%
\caption{Bounds on CATE for differing values of $\Gamma$ (dashed), compared to original confounded kernel regression (purple dashed) and true CATE (black). }\label{fig-1d}%
\includegraphics[width=0.5\textwidth]{figs/partial_cate-1D--specified-allxs.pdf}%
\caption{Bounds on PCATE. Legend same as in Fig~\ref{fig-1d}.}
\label{fig:partialcate-1d-ablation-nobeta-x}%
\end{figure}
\fi
We choose an outcome model to yield a nonlinear CATE, with linear confounding terms and noise randomly generated as $\epsilon \sim N(0,1)$: 
\begin{align*}Y(t) = &(2t-1) X + (2t-1)-  2\sin(2(2t-1) X)\\& - 2 (2u-1)(1 + 0.5X) + \epsilon\end{align*}
When we learn the confounded effect estimate, $\tilde{\tau}(X)$, from data as in eq.~\eqref{eq: fake-cate}, we  incur a confounding term ($\tilde{\tau}(x) - \tau(x)$) that grows in magnitude with positive $x$: 
\begin{align}
2(2+x)( \Pr[u=1 \mid \substack{X=x\\T=1}] - \Pr[u=1\mid \substack{X=x\\T=0}] ).  \label{eq: confound-term}
\end{align}
 In Fig.~\ref{fig-1d}, we compute the bounds using our estimators, eqs.~\eqref{eq: est-upper-outcome} and \eqref{eq: est-lower-outcome}, 
 for varying choices of $\Gamma$
 on a dataset with $n=2000$ where $\log\Gamma^*=1$. We use a Gaussian kernel with bandwidths chosen by leave-one-out cross-validation for the task of unweighted regression in each treatment arm. The bounds are centered at the confounded kernel regression estimate of CATE (purple long-dashed line). 

 By construction, the deviation of the confounded CATE from the true CATE, eq. \eqref{eq: confound-term}, is greater for more positive $x$. Correspondingly, as can be seen in the figure, our approach learns bounds whose widths reflect the appropriate ``size'' of confounding at each $x$. While the confounded estimation suggests a large region, $x \in [0, 1.25]$, where treatment $\pi(x)=1$ is optimal, the true CATE suggests that treatment at many of these $x$ values is harmful. Correspondingly, our interval bounds for $\log(\Gamma)$ correctly specified as $\geq1$ indeed suggests that the benefit of treatment in this region is ambiguous and treatment may be harmful. 

\begin{table*}[t!]
	\centering 
	\caption{Policy risk for various policies under data generating processes with different $\Gamma^*$ (lower is better)} \label{tbl-syn}
	\setlength{\tabcolsep}{\if1\forarxiv1.25\fi\if0\forarxiv2\fi pt}
	\begin{tabular}{lp{0cm}lllp{0cm}lllp{0cm}lp{0cm}l}\toprule
		\if1\forarxiv\multirow{2}{*}{\rotatebox[origin=c]{90}{$\log(\Gamma^*)$}}\fi&&\multicolumn{3}{c}{$n=1000$}
		&&\multicolumn{3}{c}{$n=5000$}&&
		\multicolumn{1}{c}{$n=1000$}&&\multicolumn{1}{c}{$n=\infty$}\\\cmidrule{3-5}\cmidrule{7-9}\cmidrule{11-11}\cmidrule{13-13}
		 \if0\forarxiv$\log(\Gamma^*)$\fi&& \multicolumn{1}{c}{$\hat\pi(x;e^{0.5})$} & \multicolumn{1}{c}{$\hat\pi(x;e^{1})$} & \multicolumn{1}{c}{$\hat\pi(x;e^{1.5})$}  &&  \multicolumn{1}{c}{$\hat\pi(x;e^{0.5})$} & \multicolumn{1}{c}{$\hat\pi(x;e^{1})$} & \multicolumn{1}{c}{$\hat\pi(x;e^{1.5})$}  && \multicolumn{1}{c}{$\ind(\hat{\tilde\tau}(x)<0)$} && \multicolumn{1}{c}{$\ind({\tau(x)}<0)$}   \\
		\midrule
		0.5&&$\mathbf{-1.65}\pm 0.01$  & $-1.64\pm 0.01$  & $-1.63\pm 0.00$ &&$\mathbf{-1.68}$ & $-1.63$ & $-1.63$&& $-1.60 \pm 0.00$ && 	$-1.68$\\
		1&&$-1.58\pm 0.02$  & $\mathbf{-1.64}\pm 0.02$  & $-1.64\pm 0.01$ &&$-1.62$ & $\mathbf{-1.67}$ & $-1.63$ && $-1.48\pm 0.04$ &&$-1.68$ \\
		1.5&&$-1.51\pm 0.02$  & $-1.60\pm 0.02$  & $\mathbf{-1.63}\pm 0.02$&&$-1.52$ & $-1.63$ & $\mathbf{-1.67}$&&$-1.36\pm 0.04$ && $-1.68$ \\\bottomrule
	\end{tabular}
\end{table*}



In Table~\ref{tbl-syn}, we compare the \emph{true} 
policy values, $V(\pi;\tau)$, achieved by the decision rules derived from our interval CATE estimates, following Section \ref{subsection: decision-making} and letting $\pi_0(x)=0$ (never treat).
We consider 20 Monte Carlo replications for each setting of $\Gamma^*$ and report $95\%$ confidence intervals.
Any omitted confidence interval is smaller than $\pm0.01$.
Note that on the diagonal of Table~\ref{tbl-syn}, we assess a policy with a ``well-specified'' $\Gamma$ equal to $\Gamma^*$, which achieves the best risk for the corresponding data generating process.
The case of $n=5000$ essentially gives the population-level
optimal minimax regret.
Finally, for comparison, we include the policy values of
both the thresholding policy based 
on the confounded CATE estimated learned
by IPW-weighted kernel regression using nominal propensities 
($\hat{\tilde\tau}(x)$)
and the truly optimal policy based on the true (and unknowable) $\tau$.
The policy value of the confounded policy suffers in comparison to the policies from our estimated bounds. Specifying an overly conservative $\Gamma$ achieves similar risk in this setting, while underspecifying $\Gamma$ compared to the true $\Gamma^*$ incurs greater loss.

We next illustrate the case of learning the PCATE using our interval estimators in eqs.~\eqref{est-partial-outcome} and \eqref{eq: est-partial-lower-outcome}. In Fig.~\ref{fig:partialcate-1d-ablation-nobeta-x}, we show the same CATE specification used in Fig.~\ref{fig-1d}, but introduce additional confounders which impact selection to illustrate the use of this approach with higher-dimensional observed covariates. We consider observed covariates $X \in \mathbb{R}^3$, uniformly generated on $[-1,1]^3$, where heterogeneity in treatment effect is only due to $x_S$, $S=\{1\}$, the first dimension. That is, we specify the outcome model for $t \in \{0,1\}$ as:
\begin{align*}Y(t) = &~(2t-1)X_S +(2t-1) -  2\sin(2(2t-1) X_S) \\&- 2 (2u-1)(1 +  0.5X_S) + \beta_x^\top X + \epsilon \end{align*}
We fix the nominal propensities as $ e(x) = \sigma(\theta^\top x +0.5) $, with $\theta = [0.75, -0.5, 0.5]$ and the outcome coefficient vector $\beta_x = [0.5, 0.5, 0.5]$. Again, we set the propensity scores such that the complete propensities achieve the extremal bounds. 
Note that additional confounding dimensions will tend to increase the outcome variation for any given $x_S$ value, so the bounds are wider in Fig.~\ref{fig:partialcate-1d-ablation-nobeta-x} for the same fixed value of $x$ and $\Gamma$, though our approach recovers the appropriate structure on the CATE function.

 \paragraph{Assessment on Real-World Data: Hormone Replacement Therapy.} 
To illustrate the impacts of unobserved confounding, we consider a case study of a parallel clinical trial and large observational study from the Women's Health Initiative \citep{to-whi-03}. Hormone replacement therapy (HRT) was the treatment of interest: previous observational correlations suggested protective effects for onset of chronic (including cardiovascular) disease. While the clinical trial was halted early due to dramatically increased incidence of heart attacks, the observational study evidence actually suggested preventive effects, prompting further study to reconcile these conflicting findings based on unobserved confounding in the observational study \citep{prentice-whi-05,Lawlor04,roussouw-whi-13}. Follow-up studies suggest benefits of HRT for younger women
\citep{bakour2015latest}. 

We consider a simple example of learning an optimal treatment assignment policy based on age to reduce endline systolic blood pressure, which serves as a proxy outcome for protective effects against cardiovascular disease. Thus we consider learning the PCATE for $S=\{\text{age}\}$ while controlling for \emph{all} observed baseline variables. The observed covariates are 30-dimensional (after binary encodings of categorical variables) and include factors such as demographics, smoking habits, cardiovascular health history, and other comorbidities (\eg, diabetes and myocardial infection). We restrict attention to a complete-case subset of the clinical-trial data ($n=14266$), and a subset of the observational study ($n=2657$). 

\if1\forarxiv
\begin{figure}[t!]%
\begin{minipage}[t]{0.475\textwidth}%
\includegraphics[width=\textwidth]{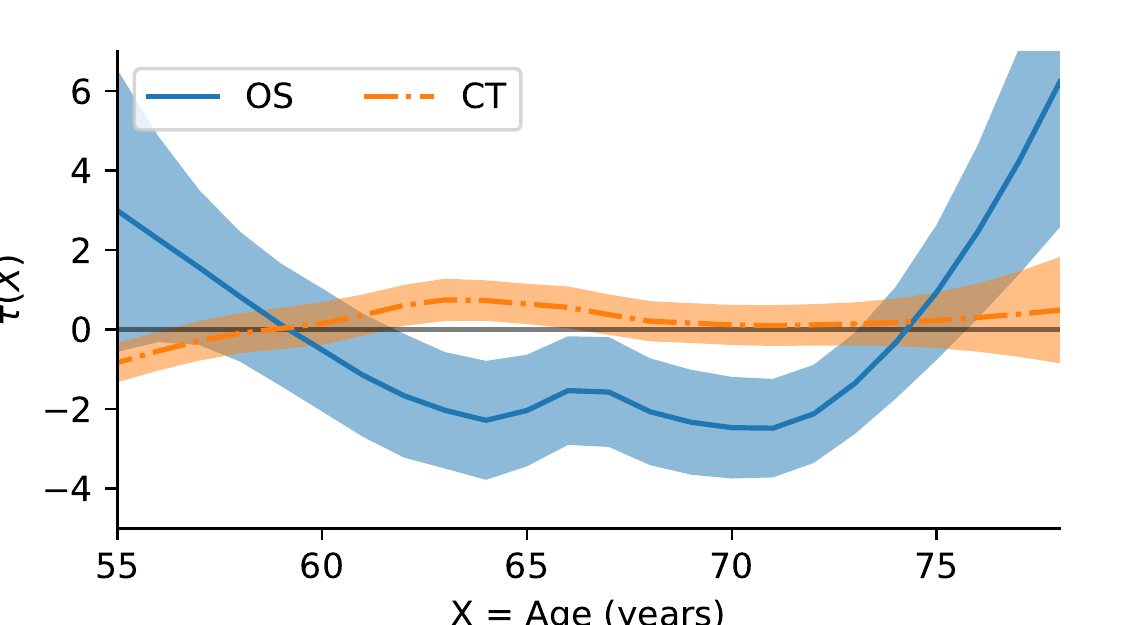}\caption{Comparison of CATE estimated from unconfounded clinical trial data (CT) vs. confounded observational study data (OS) by a difference of LOESS regressions. Confounding leads to the opposite conclusions.}\label{fig-CT-OS-cate}
\end{minipage}%
\hspace{0.045\textwidth}%
\begin{minipage}[t]{0.475\textwidth}%
\includegraphics[width=\textwidth]{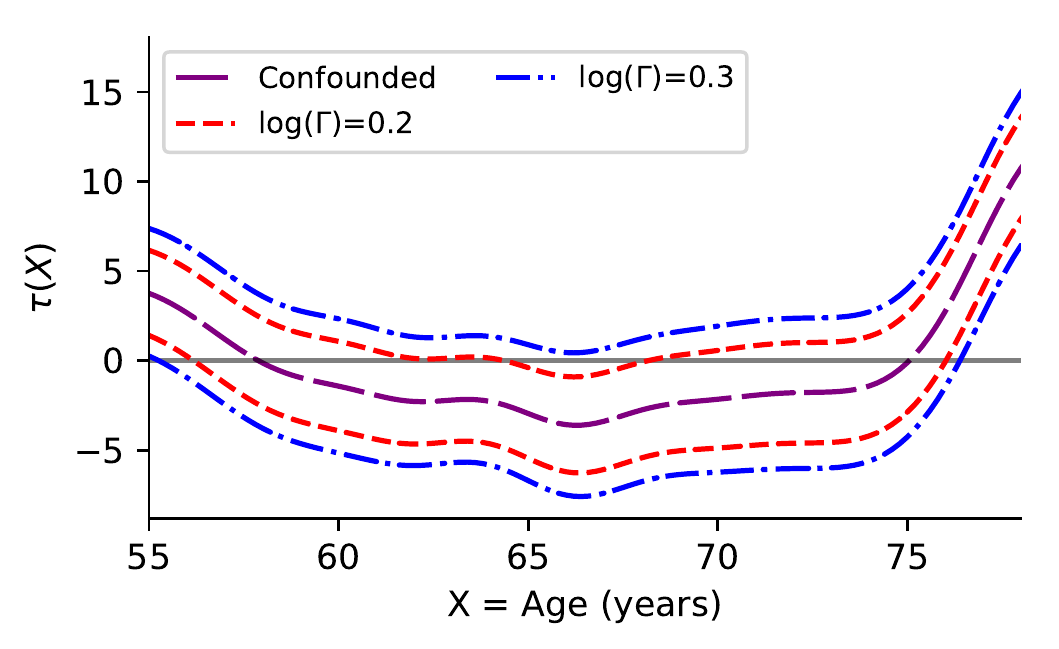}
\caption{Bounds on CATE estimated from the WHI observational study. The observational study is highly sensitive to unobserved confounding.}\label{fig-OS-cate}\end{minipage}%
\end{figure}
\fi
\if0\forarxiv
\begin{figure}[t!]
\includegraphics[width=0.5\textwidth]{figs/WHI-OS-CATE-comparison.pdf}\caption{Comparison of CATE estimated from unconfounded clinical trial data (CT) vs. confounded observational study data (OS) by a difference of LOESS regressions. Confounding leads to the opposite conclusions.}\label{fig-CT-OS-cate}
\includegraphics[width=0.5\textwidth]{figs/cate-bounds-OS-with-CT-fewer-conf.pdf}
\caption{Bounds on CATE estimated from the WHI observational study. The observational study is highly sensitive to unobserved confounding.}\label{fig-OS-cate}
\end{figure}
\fi

For comparing findings from the clinical trial and observational study, Fig.~\ref{fig-CT-OS-cate} plots estimates of the partial conditional average treatment effect on systolic blood pressure over age by a difference of LOESS regressions. In the clinical trial (CT, orange) we used a simple regression on age and in the observational study (OS, blue) we used IPW-weighted regression with propensities estimated using all observed baseline variables. A negative CATE suggests that HRT reduces systolic blood pressure and might have protective effects against cardiovascular disease. The clinical trial CATE is not statistically significantly different from zero for ages above 67, though the CATE becomes negative for the youngest women in the study. The observational CATE crucially displays the \textit{opposite} trend, suggesting that treatment is statistically significantly beneficial for women of ages 62-73. We display 90\% confidence intervals obtained from a 95\% confidence interval for individual regressions within each treatment arm. 

In Fig.~\ref{fig-OS-cate}, we apply our method to  estimate bounds on $\tau(X_{\text{age}})$. We estimate propensity scores using logistic regression. 
Our bounds suggest that the estimated CATE from the observational study is highly sensitive to potential unobserved confounding: for sensitivity parameter as low as $\log(\Gamma)=0.2$ ($\Gamma=1.22$), we see that $\tau(x) = 0$ is included in the sensitivity bounds for nearly all individuals such that we would likely prefer to default to less intervention. To interpret this value of $\Gamma$, we compute the distribution of instance-wise $\Gamma_{i,j}$ parameters between the propensity estimated from all covariates for individual $i$, and the propensity estimated under dropping each covariate dimension $j$:
$\Gamma_{i,j} = {\frac{(1 - e_{t_i}(X_i))e_{t_i}(X_{i,-j})}{e_{t_i}(X_i)(1 - e_{t_i}(X_{i,-j}))}}$.
 The maximal such $\Gamma$ value is observed by dropping the indicator for 1-4 cigarettes smoked per day, which leads to a maximal $\Gamma = 1.17$  value.
 \section{Conclusion}
We developed a functional interval estimator that
 provides bounds on individual-level causal effects under 
 realistic violations of unconfoundedness. Our estimators, which we prove are sharp for the tightest bounds possible, use a weighted kernel estimator with weights that vary 
 adversarially over an uncertainty set consistent with a sensitivity model. We study the implications for decision rules, and assess both our bounds and derived decisions on both simulated and real-world data.
\bibliography{sensitivity}
\bibliographystyle{abbrvnat}

\newpage
\onecolumn
\begin{appendices}
\section{Population CATE sensitivity bounds}
\begin{lemma}\label{lemma: population-bounds}
	The sensitivity bounds for the conditional expected potential outcomes $\overline{\mu}_t(x)$ and $\underline{\mu}_t(x)$  defined in \eqref{eq: pop-outcome-upper}\eqref{eq: pop-outcome-lower} have the following equivalent characterization:
	\begin{align*}
	\overline{\mu}_t(x) &= 
	\sup\limits_{u \in \mathcal{U}^{nd} }
	\frac{\alpha_t(x)\int yf_t(y \mid x)dy + (\beta_t(x) - \alpha_t(x))\int u(y)yf_t(y \mid x)dy}{\alpha_t(x)\int f_t(y \mid x)dy+  (\beta_t(x) - \alpha_t(x))\int u(y)f_t(y \mid x)dy} \\
	\underline{\mu}_t(x) &= 
	\inf\limits_{u \in \mathcal{U}^{ni}}
	\frac{\alpha_t(x)\int yf_t(y \mid x)dy + (\beta_t(x) - \alpha_t(x))\int u(y)yf_t(y \mid x)dy}{\alpha_t(x)\int f_t(y \mid x)dy+  (\beta_t(x) - \alpha_t(x))\int u(y)f_t(y \mid x)dy}
	\end{align*}
	where 
	\begin{align*}
		\qquad \mathcal{U}^{nd} = \{u: \mathcal{Y} \to [0, 1] \mid u(y) \text{is nondecreasing}\}, \\ 
		\qquad \mathcal{U}^{ni} = \{u: \mathcal{Y} \to [0, 1] \mid u(y) \text{is nonincreasing}\},
	\end{align*}
	and 
	$\alpha_t(x)$ and $\beta(x)$ defined in \eqref{eq: uncertainty-set}.
\end{lemma}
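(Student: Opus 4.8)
The plan is to first rewrite the feasible set in a reparametrized form and then reduce the resulting linear-fractional program to a one-parameter family of \emph{linear} problems whose optimizers are threshold (hence monotone) functions. For $\Gamma\ge1$ and $e_t(x)\in(0,1)$ one has $\beta_t(x)-\alpha_t(x)=(\Gamma-1/\Gamma)(1/e_t(x)-1)\ge0$, so every $w_t(\cdot\mid x)\in\mathcal W_t(x;\Gamma)$ can be written uniquely as $w_t(y\mid x)=\alpha_t(x)+(\beta_t(x)-\alpha_t(x))\,u(y)$ for some measurable $u:\mathcal Y\to[0,1]$, and conversely. Substituting this into \eqref{eq: population-outcome} turns $\mu_t(w_t;x)$ into exactly the ratio appearing in the lemma, now with $u$ ranging over \emph{all} of $\{u:\mathcal Y\to[0,1]\}$. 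Hence it suffices to show that the supremum (resp.\ infimum) of this ratio over all such $u$ is attained at some $u\in\mathcal U^{nd}$ (resp.\ $u\in\mathcal U^{ni}$). When $\Gamma=1$, $\mathcal W_t(x;\Gamma)$ is a singleton and the claim is trivial, so assume $\Gamma>1$ and set $b=\beta_t(x)-\alpha_t(x)>0$.

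Abbreviate the ratio as $R(u)=\bigl(p+b\textstyle\int u(y)\,y\,f_t(y\mid x)\,dy\bigr)\big/\bigl(q+b\int u(y)\,f_t(y\mid x)\,dy\bigr)$ with $p=\alpha_t(x)\int y f_t(y\mid x)\,dy$ and $q=\alpha_t(x)\int f_t(y\mid x)\,dy=\alpha_t(x)e_t(x)>0$; since $b>0$, $f_t\ge0$, $u\ge0$, the denominator is $\ge q>0$ for every feasible $u$. For the supremum I use the Dinkelbach/Charnes--Cooper device: for $\lambda\in\mathbb{R}$ put
\begin{equation*}
\phi(\lambda)=\sup_{u:\mathcal Y\to[0,1]}\Big\{p-\lambda q+b\int u(y)\,f_t(y\mid x)\,(y-\lambda)\,dy\Big\}.
\end{equation*}
Because $b\,f_t(y\mid x)\ge0$, the pointwise maximizer is $u_\lambda(y)=\ind(y>\lambda)$, which is nondecreasing, so $u_\lambda\in\mathcal U^{nd}$ and $\phi(\lambda)=p-\lambda q+b\int_{y>\lambda}(y-\lambda)f_t(y\mid x)\,dy$. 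Each inner expression is affine in $\lambda$ with slope $-(q+b\int u f_t)<0$, hence $\phi$ is continuous and strictly decreasing; boundedness of $Y$ gives $\phi(\lambda)\to+\infty$ as $\lambda\to-\infty$ and $\phi(\lambda)\to-\infty$ as $\lambda\to+\infty$, so $\phi$ has a unique root $\lambda^\ast$.

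From $R(u)\ge\lambda\iff p-\lambda q+b\int u f_t(y-\lambda)\,dy\ge0$ (denominator positivity) together with $\phi(\lambda^\ast)=0$, one gets $R(u)\le\lambda^\ast$ for all feasible $u$, while $\phi(\lambda^\ast)=0$ is \emph{attained} by $u_{\lambda^\ast}$, i.e.\ $R(u_{\lambda^\ast})=\lambda^\ast$ with $u_{\lambda^\ast}\in\mathcal U^{nd}$; thus $\overline\mu_t(x)=\sup_u R(u)=R(u_{\lambda^\ast})$, which is the claimed formula. The infimum is symmetric: replacing $\sup$ by $\inf$ in the definition of $\phi$ makes the pointwise optimizer $u_\lambda(y)=\ind(y<\lambda)\in\mathcal U^{ni}$, and the same root argument yields $\underline\mu_t(x)=R(u_{\lambda^\ast})$ with $u_{\lambda^\ast}\in\mathcal U^{ni}$. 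The main obstacle is the linear-fractional-to-linear reduction and its bookkeeping: verifying uniform denominator positivity, that $\phi$ is well defined, continuous, strictly monotone with a root (this is where $Y$ bounded and $\int f_t(y\mid x)\,dy=e_t(x)>0$ enter), and the elementary identity that $\sup_u R(u)$ equals the root of $\phi$. This is precisely the population analogue of the finite-sample characterization (cf.\ Lemma~\ref{lemma: sample-est-formulation}) and may alternatively be derived via semi-infinite linear programming duality \citep{shapiro01}; the only structural input needed is that the coefficient of $u(y)$ in the linearized numerator-minus-$\lambda$-denominator is the increasing function $y\mapsto(y-\lambda)$ times the nonnegative density $f_t(y\mid x)$, which forces the optimal $u$ to be a monotone threshold.
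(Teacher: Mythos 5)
Your proof is correct and reaches the same structural conclusion as the paper's --- after the one-to-one reparametrization $w_t(y\mid x)=\alpha_t(x)+u(y)(\beta_t(x)-\alpha_t(x))$, the optimal $u$ can be taken to be a threshold function $\ind(y>\lambda^\ast)$, hence monotone --- but by a genuinely different technical route. The paper applies the Charnes--Cooper transformation to turn the linear-fractional program into a (semi-infinite) linear program, writes the dual, and uses complementary slackness together with tightness of the normalization constraint to deduce that $u^\ast$ jumps from $0$ to $1$ at the dual variable $\lambda$. You instead use the Dinkelbach parametric device: maximize the linearized objective $p-\lambda q+b\int u(y)f_t(y\mid x)(y-\lambda)\,dy$ pointwise in $u$ (the maximizer $\ind(y>\lambda)$ is automatically nondecreasing), then show $\phi$ is continuous and strictly decreasing with a unique root $\lambda^\ast$, and that this root equals $\sup_u R(u)$ and is attained. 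Your route is more elementary and self-contained: it sidesteps strong duality for a semi-infinite program (which the paper invokes somewhat informally, and whose dual derivation there carries over some notation from the finite-sample lemma), and it delivers attainment of the supremum and the identity $\sup_u R(u)=\lambda^\ast$ in one step. The ingredients are used in the same places in both arguments --- boundedness of $Y$ to locate $\lambda^\ast$ in a compact interval, and $\alpha_t(x)e_t(x)>0$ to keep the denominator bounded away from zero --- and you are slightly more careful than the paper in noting that the change of variables is only one-to-one when $\beta_t(x)>\alpha_t(x)$, handling $\Gamma=1$ as a trivial degenerate case.
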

\begin{proof}
	Recall that 
		\begin{align}
			\overline{\mu}_t(x) &= \sup_{w_t(y \mid x) \in [\alpha_t(x), \beta_t(x)]} \frac{\int yw_t(y \mid x)f_t(y \mid x)dy}{\int w_t(y \mid x)f_t(y \mid x)dy}, \\
			\underline{\mu}_t(x) &= \inf_{w_t(y \mid x) \in [\alpha_t(x), \beta_t(x)]} \frac{\int yw_t(y \mid x)f_t(y \mid x)dy}{\int w_t(y \mid x)f_t(y \mid x)dy}. 
		\end{align}
		By one-to-one change of variables $w_t(y \mid x) = \alpha_t(x) + u(y)(\beta_t(x) - \alpha_t(x))$ with $u: \mathcal{Y} \to [0, 1]$, 
		\begin{align}
		\overline{\mu}_t(x) &= 
		\sup\limits_{u: \mathcal{Y} \to [0, 1] }
		\frac{\alpha_t(x)\int yf_t(y, x)dy + (\beta_t(x) - \alpha_t(x))\int u(y)yf_t(y \mid x)dy}{\alpha_t(x)\int f_t(y, x)dy+  (\beta_t(x) - \alpha_t(x))\int u(y)f_t(y \mid x)dy} 
		\label{eq: pop-outcome-upper-3}  \\
		\underline{\mu}_t(x) &= 
		\inf\limits_{u: \mathcal{Y} \to [0, 1] }
		\frac{\alpha_t(x)\int yf_t(y, x)dy + (\beta_t(x) - \alpha_t(x))\int u(y)yf_t(y \mid x)dy}{\alpha_t(x)\int f_t(y, x)dy+  (\beta_t(x) - \alpha_t(x))\int u(y)f_t(y \mid x)dy} 
		\label{eq: pop-outcome-lower-3}
		\end{align}
	
	We next use duality to prove that the $u^*(y)$ that achieves the supremum in \eqref{eq: pop-outcome-upper-3} belongs to $\mathcal{U}^{nd}$. Similar result can be proved analogously for the infimum in  \eqref{eq: pop-outcome-lower-3}.  
	
	Denote that $a(x) = (\beta_t(x) - \alpha_t(x))$, $b(x) = (\beta_t(x) - \alpha_t(x))$, $c(x) = \alpha_t(x) \int y f_t(y \mid x)dy$, $d(x) = \alpha_t(x) \int f_t(y \mid x)dy$. Then the optimization problem in \eqref{eq: pop-outcome-upper-3} can be written as:
	\begin{align*}
	\max\limits_{\substack{ u: \mathcal{Y} \to [0, 1]}} \quad \frac{a(x)\ip{y}{{u}}_{f_t(y \mid x)} + c(x)}{b(x)\ip{1}{{u}}_{f_t(y \mid x)} + d(x)} \\
	\end{align*}
	where $\ip{\cdot}{\cdot}_{f_t(y \mid x)}$ is the inner product with respect to measure ${f_t(y \mid x)}$.
	
	By Charnes-Cooper transformation with $\tilde{u} = \frac{u}{b(x)\ip{1}{{u}}_{f_t(y \mid x)} + d(x)}$ and $\tilde{v}(x) = \frac{1}{b(x)\ip{1}{{u}}_{f_t(y \mid x)} + d(x)}$,  the optimization problem in \eqref{eq: pop-outcome-upper-3} is equivalent to the following linear program:
	\begin{align*}
	&\max\limits_{\substack{\tilde u: \mathcal{Y} \to [0, 1] \\ \tilde{v}(x)}} \quad a(x)\ip{y}{\tilde{u}}_{f_t(y \mid x)} + c(x)\tilde{v}(x) \\
	&\text{s.t.}  \quad \tilde{u}(y) \le \tilde{v}(x), -\tilde{u}(y) \le 0, \ \text{for} \ \forall y \in \mathcal{Y} \\ 
	& \quad b(x)\ip{1}{\tilde{u}}_{f_t(y \mid x)} + d(x)\tilde{v}(x) = 1, \tilde v(x) \ge 0
	\end{align*}
	
	Let the dual function $p(y)$ be associated with the primal constraint $\tilde{u}(y) \le \tilde{v}(x)$ ($u(y) \le 1$), and $q(y)$ be the dual function associated with $-\tilde{u}(y) \le 0$ ($u(y) \ge 0$), and $
	\lambda$ be the dual variable associated with the constraint $b(x)\ip{1}{\tilde{u}}_{f_t(y \mid x)} + d(x)\tilde{v} = 1$. The dual program is 
	\begin{align*}
	&\min\limits_{\substack{\lambda, p \succeq 0, q \succeq 0}} \quad \lambda  \\
	&\text{s.t.} \quad p - q + \lambda b(x)f_t(y\mid x) = a(x)yf_t(y\mid x) \\
	&\qquad -\ip{1}{p} + \lambda d(x) \ge c(x)
	\end{align*}
	
	By complementary slackness, at most one of $z_i$ or $\rho_i$ is nonzero. The first dual constraint implies that 
	\begin{align*}
	p &= (\beta_t(x) - \alpha_t(x))f_t(y \mid x)\max\{y - \lambda, 0\}, \\
	q &= (\beta_t(x) - \alpha_t(x))f_t(y \mid x)\max\{\lambda - y, 0\}.
	\end{align*}
	Moreover, the constraint that $-\ip{1}{p} + \lambda d(x) \ge c(x)$ should be tight at optimality. (otherwise there exists smaller yet feasible $\lambda$ that achives lower objective of the dual program.) This implies that 
	\[
		(\beta_t(x) - \alpha_t(x)) \int f_t(y \mid x)\max\{y - \lambda, 0\}dy = \alpha_t(x)\int(\lambda - y)f_t(y \mid x) dy
	\]
	This rules out the possibility that $\lambda > C_Y$ or $\lambda < -C_Y$ where $C_Y > 0$ such that $\abs{Y} \le C_Y$.  Thus $\exists y^H \in [-C_Y, C_y]$ such that when $y < y^H$, $q > 0$ so $u = 0$ and  when $y \ge y^H$, $p > 0$ so $u = 1$. Therefore, the optimal $u^*(y)$ that achieves the supremum in \eqref{eq: pop-outcome-upper-3} belongs to $\mathcal{U}^{nd}$. 
	
\end{proof}

\section{CATE sensitivity bounds estimators}
\begin{lemma}\label{lemma: sample-est-formulation}
	The kernel-regression based sensitivity bound estimators $\hat{\overline{\mu}}_t(x), \hat{\underline{\mu}}_t(x) $ given in \eqref{eq: est-upper-outcome}\eqref{eq: est-lower-outcome} have the following equivalent characterization: for $t \in \{0, 1\}$
	\begin{align*}
	\hat{\overline{\mu}}_t(x) &= \sup\limits_{u \in \mathcal{U}^{nd} }
	\frac{\sum_{i: T_i = t}^n \alpha (X_i)\mathbf{K}(\frac{X_i - x}{h})Y_i+ \sum_{i: T_i = t}^n (\beta(X_i) - \alpha (X_i))\mathbf{K}(\frac{X_i - x}{h})Y_iu(Y_i)}{\sum_{i: T_i = t}^n  \alpha (X_i)\mathbf{K}(\frac{X_i - x}{h})+ \sum_{i: T_i = t}^n  (\beta(X_i) - \alpha (X_i))\mathbf{K}(\frac{X_i - x}{h})u(Y_i)} \\
	\hat{\underline{\mu}}_t(x) &= \inf\limits_{u \in \mathcal{U}^{ni} }
	\frac{\sum_{i: T_i = t}^n \alpha (X_i)\mathbf{K}(\frac{X_i - x}{h})Y_i+ \sum_{i: T_i = t}^n (\beta(X_i) - \alpha (X_i))\mathbf{K}(\frac{X_i - x}{h})Y_iu(Y_i)}{\sum_{i: T_i = t}^n  \alpha (X_i)\mathbf{K}(\frac{X_i - x}{h})+ \sum_{i: T_i = t}^n  (\beta(X_i) - \alpha (X_i))\mathbf{K}(\frac{X_i - x}{h})u(Y_i)} \\
	\end{align*}
	where $\mathcal{U}^{nd} $ and $\mathcal{U}^{ni}$ are defined in Lemma \ref{lemma: population-bounds}.
\end{lemma}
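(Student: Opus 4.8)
The plan is to follow the template of the proof of Lemma~\ref{lemma: population-bounds} essentially verbatim, exploiting that in the empirical problem the adversarial weights $\mathbf W_t$ range over a \emph{finite-dimensional} box $\widehat{\mathcal W}_t$, so the semi-infinite program there collapses to an ordinary linear program and the argument is in fact shorter. First I would fix $t$ and $x$, abbreviate the weighted kernel coefficients $\tilde\alpha^K_i,\tilde\beta^K_i$ as in Proposition~\ref{prop: sample-est-computation} (so $\tilde\beta^K_i-\tilde\alpha^K_i\ge 0$, since $\beta_t(\cdot;\Gamma)\ge\alpha_t(\cdot;\Gamma)\ge 0$ for $\Gamma\ge1$), and perform the one-to-one affine change of variables $W_{ti}=\alpha_t(X_i;\Gamma)+u_i\bigl(\beta_t(X_i;\Gamma)-\alpha_t(X_i;\Gamma)\bigr)$ with $u_i\in[0,1]$, which is a bijection between $\widehat{\mathcal W}_t$ and $[0,1]^n$. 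Substituting into $\hat\mu_t(\mathbf W;x)$ from eq.~\eqref{eq: estimator} rewrites the objective in eqs.~\eqref{eq: est-upper-outcome}--\eqref{eq: est-lower-outcome} as the linear-fractional function
\[
\Phi(u)=\frac{\sum_i \tilde\alpha^K_i Y_i + \sum_i (\tilde\beta^K_i-\tilde\alpha^K_i)\,Y_i u_i}{\sum_i \tilde\alpha^K_i + \sum_i (\tilde\beta^K_i-\tilde\alpha^K_i)\,u_i},
\]
whose value at $u_i=u(Y_i)$ is exactly the right-hand side of the claimed identity.

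One inclusion is then immediate: every $u\in\mathcal U^{nd}$ induces a feasible $\mathbf W_t\in\widehat{\mathcal W}_t$, so $\sup_{u\in\mathcal U^{nd}}\Phi\le\hat{\overline\mu}_t(x)$, and likewise $\inf_{u\in\mathcal U^{ni}}\Phi\ge\hat{\underline\mu}_t(x)$. The substance is the reverse inequality, for which I would show that the unconstrained extremum of $\Phi$ over $[0,1]^n$ is attained at a \emph{threshold} vector $u_i=\ind(Y_i\ge\lambda)$ (resp.\ $u_i=\ind(Y_i\le\lambda)$), which is of the form $u(Y_i)$ for a nondecreasing (resp.\ nonincreasing) step function. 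The denominator of $\Phi$ is bounded below by $\sum_i\tilde\alpha^K_i>0$ --- positive whenever the estimator is well-defined, i.e.\ some unit with $T_i=t$ has positive kernel weight at $x$ --- so $\Phi$ is continuous on the compact cube and its supremum is attained. I would then apply the Charnes--Cooper transformation $v_i=u_i/D(u)$, $s=1/D(u)$ (with $D(u)$ the denominator) to rewrite $\sup_{u\in[0,1]^n}\Phi$ as a linear program in $(v,s)$ with constraints $0\le v_i\le s$ and $\sum_i\tilde\alpha^K_i s+\sum_i(\tilde\beta^K_i-\tilde\alpha^K_i)v_i=1$, dualize it, and read off dual feasibility as $p_i-q_i=(\tilde\beta^K_i-\tilde\alpha^K_i)(Y_i-\lambda)$ for the multiplier $\lambda$ of the normalization constraint; taking $p_i=(\tilde\beta^K_i-\tilde\alpha^K_i)\max\{Y_i-\lambda,0\}$ and $q_i=(\tilde\beta^K_i-\tilde\alpha^K_i)\max\{\lambda-Y_i,0\}$, complementary slackness forces $u_i^\ast=1$ when $Y_i>\lambda$ and $u_i^\ast=0$ when $Y_i<\lambda$ (the value at $Y_i=\lambda$, and at indices with zero kernel weight, is immaterial and can be fixed consistently). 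This yields $\hat{\overline\mu}_t(x)\le\sup_{u\in\mathcal U^{nd}}\Phi$, hence equality; the $\hat{\underline\mu}_t(x)$ case is the mirror image, landing the optimizer in $\mathcal U^{ni}$.

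The one step that requires care is the complementary-slackness bookkeeping that pins down the threshold form of the optimal $u^\ast$ and confirms that ties in the $Y_i$ can be broken to keep the selected vector monotone. This is exactly the finite-dimensional shadow of the duality argument already carried out for Lemma~\ref{lemma: population-bounds}, and it is in fact easier here: I do not need to bound $\lambda$ into $[-C_Y,C_Y]$, because even the degenerate optima $u^\ast\equiv0$ and $u^\ast\equiv1$ already lie in $\mathcal U^{nd}$ (resp.\ $\mathcal U^{ni}$), so monotonicity of the optimizer holds no matter where the threshold $\lambda$ falls.
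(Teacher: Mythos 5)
Your proposal is correct and follows essentially the same route as the paper's own proof: the affine reparametrization $W_{ti}=\alpha_t(X_i)+u_i(\beta_t(X_i)-\alpha_t(X_i))$, the Charnes--Cooper transformation to a linear program, and LP duality with complementary slackness forcing the optimizer into threshold (hence monotone) form. Your observation that the step bounding the dual multiplier $\lambda$ into the range of the $Y_i$ is dispensable --- since the degenerate constant optimizers are already monotone --- is a minor but valid simplification of the paper's argument.
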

\begin{proof}
	We prove the result for $\hat{\overline{\mu}}_t(x)$ and the result for $\hat{\underline{\mu}}_t(x) $ can be proved analogously. Given \eqref{eq: est-upper-outcome}, by one-to-one change of variable $W_i = \alpha(X_i) + (\beta(X_i) - \alpha(X_i))U_i$ where $U_i \in [0, 1]$, 
	\begin{align}
	\hat{\overline{\mu}}_t(x) &= \sup\limits_{U_i \in [0, 1]}
	\frac{\sum_{i: T_i = t}^n \alpha (X_i)\mathbf{K}(\frac{X_i - x}{h})Y_i+ \sum_{i: T_i = t}^n (\beta(X_i) - \alpha (X_i))\mathbf{K}(\frac{X_i - x}{h})Y_iU_i}{\sum_{i: T_i = t}^n  \alpha (X_i)\mathbf{K}(\frac{X_i - x}{h})+ \sum_{i: T_i = t}^n  (\beta(X_i) - \alpha (X_i))\mathbf{K}(\frac{X_i - x}{h})U_i}, \label{eq: est_outcome_upper3} \\
	\hat{\underline{\mu}}_t(x) &= \inf\limits_{U_i \in [0, 1]}
	\frac{\sum_{i: T_i = t}^n \alpha (X_i)\mathbf{K}(\frac{X_i - x}{h})Y_i+ \sum_{i: T_i = t}^n (\beta(X_i) - \alpha (X_i))\mathbf{K}(\frac{X_i - x}{h})Y_iU_i}{\sum_{i: T_i = t}^n  \alpha (X_i)\mathbf{K}(\frac{X_i - x}{h})+ \sum_{i: T_i = t}^n  (\beta(X_i) - \alpha (X_i))\mathbf{K}(\frac{X_i - x}{h})U_i}. \label{eq: est_outcome_lower3}
	\end{align}

	Now we use duality to prove that the optimal weights $U_i^*$ that attains the supremum in \eqref{eq: est_outcome_upper3} satisfies that $U_i^* = u(Y_i)$ for some function $u: \mathcal{Y} \to [0, 1]$ such that $u(y)$ is nondecreasing in $y$. The analogous result for \eqref{eq: est_outcome_lower3} can be proved similarly.
	
	Essentially, \eqref{eq: est_outcome_upper3} gives the following fractional linear program:
	\begin{align*} 
	\max\limits_{U} \frac{A^T U + C}{B^T U + D} \\
	\text{s.t. } 
	\begin{bmatrix}I_N \\-I_N\end{bmatrix}U 
	\leq \begin{bmatrix}1 \\ 0\end{bmatrix},
	\end{align*}
	where $U = [U_1, U_2, \dots, U_{n-1}, U_n]^\top$, $A = [a_1, a_2, \dots, a_n]^\top $ with $a_i = \mathbb{I}[T_i = t]  (\beta(X_i) - \alpha(X_i)) \mathbf{K}(\frac{X_i - x}{h})Y_i $, $B = [b_1, b_2, \dots, b_n]^\top$ with $b_i = \mathbb{I}[T_i = t]  (\beta(X_i) - \alpha(X_i)) \mathbf{K}(\frac{X_i - x}{h})$,   $C = \sum_{i: T_i = t}^n \alpha (X_i) \mathbf{K}(\frac{X_i - x}{h})Y_i$, and $D = \sum_{i: T_i = t}^n \alpha (X_i) \mathbf{K}(\frac{X_i - x}{h})$.
	
	By Charnes-Cooper transformation with $\tilde U = \frac{U}{B^\top U + D}$ and $\tilde V = \frac{1}{B^\top U + D}$, the linear-fractional program above is equivalent to the following linear program: 
	\begin{align*}
	\max\limits_{\tilde U, v} A^\top \tilde U + C \tilde{V}\\
	\text{s.t.}  \begin{bmatrix}
	I_n \\-I_n
	\end{bmatrix} \tilde U \leq \tilde V \begin{bmatrix}
	1 \\ 0 
	\end{bmatrix} \\
	B^\top \tilde U + \tilde V D = 1, \tilde V \geq 0 
	\end{align*}
	where the solution for $\tilde U, \tilde V$ yields a solution for the original program, $U = \frac{\tilde U}{\tilde V}$. 
	
	Let the dual variables $p_i \geq 0$ be associated with the primal constraints $\tilde U_i \le \tilde V$ 
	(corresponding to $U_i \leq 1$), $q_i \geq 0$ associated with $\tilde U_i \ge 0$ (corresponding to $U_i \geq 0$), and $\lambda$ associated with the constraint $B^\top \tilde U + D\tilde V  = 1$. Denote $P = [p_1, \dots, p_n]^\top$ and $Q = [q_1, \dots, q_n]^\top$.
	
	The dual problem is: 
	\begin{align*}
	&\qquad \qquad \min_{\lambda, z, \rho} \lambda \\
	&\text{s.t. } P - Q + \lambda B = A, p_i \geq 0, q_i \geq 0  \\
	& \qquad -1^TP + \lambda D \geq C
	\end{align*}
	
	
	By complementary slackness, at most one of $p_i$ or $q_i$ is nonzero. Rearranging the first set of equality constraints gives $p_i - q_i = \mathbb{I}(T_i = t)(\beta(X_i) - \alpha(X_i))\mathbf{K}(\frac{X_i - x}{h})(Y_i - \lambda)$, which implies that 
	\begin{align*}
	p_i &= \mathbb{I}[T_i = t] (\beta(X_i) - \alpha(X_i))\mathbf{K}(\frac{X_i - x}{h}) \max(Y_i - \lambda, 0)  \\
	q_i &= \mathbb{I}[T_i = t] (\beta(X_i) - \alpha(X_i))\mathbf{K}(\frac{X_i - x}{h}) \max(\lambda - Y_i, 0)
	\end{align*}
	Since the constraint $-1^TP + \lambda D \geq c$ is tight at optimality (otherwise there exists smaller yet feasible $\lambda$ that achives lower objective of the dual program), 
	\begin{align}
	\sum_{i=1}^n\mathbb{I}[T_i = t]\alpha(X_i)\mathbf{K}(\frac{X_i - x}{h})(\lambda - Y_i)  = \sum_{i = 1}^n \mathbb{I}[T_i = t] (\beta(X_i) - \alpha(X_i))\mathbf{K}(\frac{X_i - x}{h}) \max(Y_i - \lambda, 0) \label{eq: dual}
	\end{align}
	This rules out both $\lambda > \max_i{Y_i}$ and $\lambda < \min_i{Y_i}$, thus $Y_{(k)} < \lambda \leq Y_{(k+1)}$ for some $k$ where $Y_{(1)}, Y_{(2)}, \dots, Y_{(n)}$ are the order statistics of the sample outcomes . This means that $q_i > 0$  can happen only when $Y_i \le Y_{(k)}$, \ie, $U_i = 0$; and $p_i > 0$ can happen only when $i > k + 1$, \ie, $U_i = 1$. This proves there exist a nondecreasing function $u: \mathcal{Y} \to [0, 1]$ such that $U_i = u(Y_i)$ attains the upper bound in \eqref{eq: est_outcome_upper3}. 
\end{proof}

\begin{proof}[Proof for Proposition \ref{prop: sample-est-computation}]
	We prove the result for $\hat{\overline{\mu}}_t(x)$ and the result for $\hat{\underline{\mu}}_t(x)$ can be proved analogously. In the proof of Lemma \ref{lemma: sample-est-formulation}, \eqref{eq: dual} implies that $\exists k^H$ such that the optimal $Y_{k^H} <  \lambda^* \le Y_{k^H+1}$ and 
	\[
	\sum_{i \le k^H}\mathbb{I}[T_i = t]\alpha_t(X_i)\mathbf{K}(\frac{X_i - x}{h})(\lambda^* - Y_i) = \sum_{i \ge k^H+1}\mathbb{I}[T_i = t]\beta_t(X_i)\mathbf{K}(\frac{X_i - x}{h})(Y_i - \lambda^*).
	\]
	Thus 
	\begin{equation}\label{eq: kS^ctructure}
	\lambda^* = \frac{\sum_{i \le k^H}\mathbb{I}[T_i = t]\alpha_t(X_i)\mathbf{K}(\frac{X_i - x}{h})Y_i +  \sum_{i \ge k^H+1}\mathbb{I}[T_i = t]\beta_t(X_i)\mathbf{K}(\frac{X_i - x}{h})Y_i}{\sum_{i \le k^H}\mathbb{I}[T_i = t]\alpha_t(X_i)\mathbf{K}(\frac{X_i - x}{h}) +  \sum_{i \ge k^H+1}\mathbb{I}[T_i = t]\beta_t(X_i)\mathbf{K}(\frac{X_i - x}{h})}. 
	\end{equation}
	
	Now we prove that if $\lambda(k) \ge  \lambda(k+1)$, then $\lambda(k+1) \ge  \lambda(k+2)$, so $k^H = \inf\{k: \lambda(k) \ge \lambda(k+1)\}$. Note that $\lambda(k) \ge  \lambda(k+1)$ is equivalent to 
	\[
	\frac{\sum_{i \le k+1}\mathbb{I}[T_i = t]\alpha_t(X_i)\mathbf{K}(\frac{X_i - x}{h})Y_i + \sum_{i \ge k+2}\mathbb{I}[T_i = t]\beta_t(X_i)\mathbf{K}(\frac{X_i - x}{h})Y_i}{\sum_{i \le k+1}\mathbb{I}[T_i = t]\alpha_t(X_i)\mathbf{K}(\frac{X_i - x}{h}) + \sum_{i \ge k+2}\mathbb{I}[T_i = t]\beta_t(X_i)\mathbf{K}(\frac{X_i - x}{h})}  \le Y_{k+1}.
	\]
	Thus {\if1\forarxiv\footnotesize\fi
	\begin{align*}
	&\qquad\qquad\qquad \frac{\sum_{i \le k+2}\mathbb{I}[T_i = t]\alpha_t(X_i)\mathbf{K}(\frac{X_i - x}{h})Y_i + \sum_{i \ge k+3}\mathbb{I}[T_i = t]\beta_t(X_i)\mathbf{K}(\frac{X_i - x}{h})Y_i}{\sum_{i \le k+2}\mathbb{I}[T_i = t]\alpha_t(X_i)\mathbf{K}(\frac{X_i - x}{h}) + \sum_{i \ge k+3}\mathbb{I}[T_i = t]\beta_t(X_i)\mathbf{K}(\frac{X_i - x}{h})}   \\
	&\le \frac{\sum_{i \le k+1}\mathbb{I}[T_i = t]\alpha_t(X_i)\mathbf{K}(\frac{X_i - x}{h})Y_i + \sum_{i \ge k+2}\mathbb{I}[T_i = t]\beta_t(X_i)\mathbf{K}(\frac{X_i - x}{h})Y_i + (\beta_t(x) - \alpha_t(x))\mathbb{I}[T_i = t]\mathbf{K}(\frac{X_i - x}{h})Y_i}{\sum_{i \le k+1}\mathbb{I}[T_i = t]\alpha_t(X_i)\mathbf{K}(\frac{X_i - x}{h}) + \sum_{i \ge k+2}\mathbb{I}[T_i = t]\beta_t(X_i)\mathbf{K}(\frac{X_i - x}{h}) + (\beta_t(x) - \alpha_t(x))\mathbb{I}[T_i = t]\mathbf{K}(\frac{X_i - x}{h})}\\
	&\overset{(*)}{\le}  \frac{\bigg(\sum_{i \le k+1}\mathbb{I}[T_i = t]\alpha_t(X_i)\mathbf{K}(\frac{X_i - x}{h}) + \sum_{i \ge k+2}\mathbb{I}[T_i = t]\beta_t(X_i)\mathbf{K}(\frac{X_i - x}{h}) \bigg)Y_{k+1} + (\beta_t(x) - \alpha_t(x))\mathbb{I}[T_{k+2}= t]\mathbf{K}(\frac{X_{k+2}- x}{h})Y_{k+2}}{\sum_{i \le k+1}\mathbb{I}[T_i = t]\alpha_t(X_i)\mathbf{K}(\frac{X_i - x}{h}) + \sum_{i \ge k+2}\mathbb{I}[T_i = t]\beta_t(X_i)\mathbf{K}(\frac{X_i - x}{h}) + (\beta_t(x) - \alpha_t(x))\mathbb{I}[T_i = t]\mathbf{K}(\frac{X_i - x}{h})} \\
	&\le  \frac{\bigg(\sum_{i \le k+1}\mathbb{I}[T_i = t]\alpha_t(X_i)\mathbf{K}(\frac{X_i - x}{h}) + \sum_{i \ge k+2}\mathbb{I}[T_i = t]\beta_t(X_i)\mathbf{K}(\frac{X_i - x}{h}) \bigg)Y_{k+2} + (\beta_t(x) - \alpha_t(x))\mathbb{I}[T_{k+2}= t]\mathbf{K}(\frac{X_{k+2}- x}{h})Y_{k+2}}{\sum_{i \le k+1}\mathbb{I}[T_i = t]\alpha_t(X_i)\mathbf{K}(\frac{X_i - x}{h}) + \sum_{i \ge k+2}\mathbb{I}[T_i = t]\beta_t(X_i)\mathbf{K}(\frac{X_i - x}{h}) + (\beta_t(x) - \alpha_t(x))\mathbb{I}[T_i = t]\mathbf{K}(\frac{X_i - x}{h})} \\
	&= Y_{k+2},
	\end{align*}}
	where (*) holds due to \eqref{eq: kS^ctructure}. 
	
	This implies that $\lambda(k+1) \ge  \lambda(k+2)$. By strong duality, we know that $\hat{\overline{\mu}}_t(x) = \lambda^* = \overline\lambda(k^H; x)$ thus we prove the result for $\hat{\overline{\mu}}_t(x)$. We can analogously prove the result for $\hat{\underline{\mu}}_t(x)$.

\end{proof}

\begin{proof}[Proof for Theorem \ref{thm: consistent-cate}]
	Here we prove that $\hat{\overline{\mu}}_t(x) \to \overline{\mu}_t(x)$.  $\hat{\underline{\mu}}_t(x) \to \underline{\mu}_t(x)$ can be proved analogously. 

	Since $\int\mathcal K(u)du<\infty$,
	without loss of generality we assume $\int\mathcal K(u)=1$.
	
	Define the following quantities:{\if1\forarxiv\footnotesize\fi
	\begin{align*}
	\qquad \kappa_{\alpha}^y(t, x; n, h) &= \frac{1}{nh^d}\sum_{i: T_i = t}^n \alpha_t (X_i)\mathbf{K}(\frac{X_i - x}{h})Y_i,  \quad I_{\alpha}^y(t, x) =  \alpha_t(x)\int yf_t(y\mid x)dy, \\
	\qquad \kappa_{\beta- \alpha}^{u, y}(t, x; n, h) &=  \frac{1}{nh^d}\sum_{i: T_i = t}^n (\beta_t(X_i) - \alpha_t (X_i))\mathbf{K}(\frac{X_i - x}{h})Y_iu(Y_i),  \quad I_{\beta - \alpha}^{u, y}(t, x) = (\beta_t(x) - \alpha_t(x))\int u(y)yf_t(y \mid x)dy, \\
	\qquad \kappa_{\alpha}(t, x; n, h) &= \frac{1}{nh^d}\sum_{i: T_i = t}^n  \alpha_t (X_i)\mathbf{K}(\frac{X_i - x}{h}),  \quad I_{\alpha}(t, x) =  \alpha_t(x)\int f_t(y\mid x)dy, \\
	\qquad \kappa_{\beta- \alpha}^u(t, x; n, h) &=  \frac{1}{nh^d}\sum_{i: T_i = t}^n  (\beta_t(X_i) - \alpha_t (X_i))\mathbf{K}(\frac{X_i - x}{h})u(Y_i),  \quad I_{\beta - \alpha}^{u}(t, x) = (\beta_t(x) - \alpha_t(x))\int u(y)f_t(y \mid x)dy. 
	\end{align*}}
	Then 
	\begin{align*}
	\hat{\overline{\mu}}_t(x) &= \sup\limits_{u \in \mathcal{U}^{nd} }
	\frac{\kappa_{\alpha}^y(t, x; n, h)+ \kappa_{\beta - \alpha}^{u, y}(t, x; n, h)}{\kappa_{\alpha}(t, x; n, h)+ \kappa_{\beta - \alpha}^{u}(t, x; n, h)} \\
	\overline{\mu}_t(x) &= \sup\limits_{u \in \mathcal{U}^{nd} } \frac{I_{\alpha}^y(t, x) + I_{\beta-\alpha}^{u, y}(t, x)}{I_{\alpha}(t, x) + I_{\beta - \alpha}^{u}(t, x)}
	\end{align*}

	According to Lemma \ref{lemma: sup_inf}, {\if1\forarxiv\small\fi
	\begin{align}
	\abs{\hat{\overline{\mu}}_t(x) - \overline{\mu}_t(x)}  
	&\le \sup_{u \in \mathcal{U}^{nd}} 
	\bigg \vert 
	\frac{\kappa_{\alpha}^y(t, x; n, h)+ \kappa_{\beta - \alpha}^{u, y}(t, x; n, h)}{\kappa_{\alpha}(t, x; n, h)+ \kappa_{\beta - \alpha}^{u}(t, x; n, h)} - 	\frac{I_{\alpha}^y(t, x) + I_{\beta-\alpha}^{u, y}(t, x)}{I_{\alpha}(t, x) + I_{\beta - \alpha}^{u}(t, x)}
	\bigg \vert \nonumber
	\\
	&\le \sup_{u \in \mathcal{U}^{nd}} 
	\bigg\{ \abs{\kappa^y_{\alpha} + \kappa^{u, y}_{\beta-\alpha}}\frac{\abs{\kappa_{\alpha} + \kappa^{u}_{\beta - \alpha} - (I_{\alpha} + I^{u}_{\beta - \alpha})}}{\abs{\kappa_{\alpha} + \kappa^{u}_{\beta - \alpha}}\abs{I_{\alpha} + I^{u}_{\beta - \alpha}}} + \frac{1}{\abs{I_{\alpha} + I^{u}_{\beta - \alpha}}}\abs{\kappa^y_{\alpha} + \kappa^{u, y}_{\beta-\alpha}  - (I^y_{\alpha} + I^{u, y}_{\beta - \alpha})} \bigg\} \nonumber \\
	&\le \frac{(\Delta_1(t, x; n, h) + \abs{I^y_{\alpha} + I^{u, y}_{\beta-\alpha}})\Delta_2(t, x; n, h)}{\abs{I_{\alpha} + I^u_{\beta - \alpha}}(\abs{I_{\alpha} + I^u_{\beta - \alpha}} - \Delta_2(t, x; n, h))}+ \frac{\Delta_1(t,x; n, h)}{\abs{I_{\alpha} + I^u_{\beta - \alpha}}} \label{eq: error-decompose}
	\end{align}}
	where 
	\begin{align}
	\Delta_1(t, x; n, h) &= \sup_{u \in \mathcal{U}^{nd}}  \bigg\vert [\kappa^y_{\alpha}(t, x;n, h) + \kappa^{u, y}_{\beta - \alpha}(t, x; n, h)]   - [I_{\alpha}^y(t, x) + I_{\beta - \alpha}^{u, y}(t, x)]\bigg\vert ,  \\
	\Delta_2(t, x; n, h) &= \sup_{u \in \mathcal{U}^{nd}}  \bigg\vert [\kappa_{\alpha}(t, x; n, h) + \kappa^u_{\beta - \alpha}(t, x; n, h)]  - [I_{\alpha}(t, x) + I^u_{\beta - \alpha}(t, x)] \bigg\vert .
	\end{align}
	
	Therefore, we only need to prove that when $n \to \infty$, $h \to 0$, and $nh^{2d} \to \infty$, $\Delta_1(t, x; n, h) \overset{\text{p}}{\to} 0$ and $\Delta_2(t, x; n, h) \overset{\text{p}}{\to} 0$ for $t \in \{0, 1\}$ and $x \in \mathcal{X}$. We prove $\Delta_1(t, x; n, h) \overset{\text{p}}{\to} 0$ in this proof. $\Delta_2(t, x; n, h) \overset{\text{p}}{\to} 0$ can be proved analogously. 
	
	Note that 
	\begin{align*}
	\Delta_1(t, x; n, h) &\le \bigg\vert \kappa^y_{\alpha}(t, x;n, h) - I_{\alpha}^y(t, x) \bigg \vert + \sup_{u \in \mathcal{U}^{nd}}  \bigg\vert \kappa^{u, y}_{\beta - \alpha}(t, x; n, h) -  I^{u, y}_{\beta - \alpha}(t, x)  \bigg\vert.
	\end{align*}
	
	\textbf{Step 1}: prove that $\sup_{u \in \mathcal{U}^{nd}}  \bigg\vert \kappa^{u, y}_{\beta - \alpha}(t, x; n, h) -  I^{u, y}_{\beta - \alpha}(t, x)  \bigg\vert \to 0$.
	
	Obviously 
	\begin{align*}
	&\sup_{u \in \mathcal{U}^{nd}}  \bigg\vert \kappa^{u, y}_{\beta - \alpha}(t, x; n, h) -  I^{u, y}_{\beta - \alpha}(t, x)  \bigg\vert \\
	&\le \sup_{u \in \mathcal{U}^{nd}}  \bigg\vert \kappa^{u, y}_{\beta - \alpha}(t, x; n, h) - \expect \kappa^{u, y}_{\beta - \alpha}(t, x; n, h) \bigg\vert +  \sup_{u \in \mathcal{U}^{nd}}  \bigg\vert \expect \kappa^{u, y}_{\beta - \alpha}(t, x; n, h) -  I^{u, y}_{\beta - \alpha}(t, x)  \bigg\vert \\
	&:= \Lambda_1 + \Lambda_2 
	\end{align*}
	
	\textbf{Step 1.1}: prove $\Lambda_1 \overset{\text{p}}{\to} 0$.
	
	By assumption,
	there exists $\delta>0$ with $e_t(x,y)\in[\delta,1-\delta]$. Hence,
	$\alpha_t(x) \le C_{\delta, \Gamma}(\alpha) = \frac{1}{\Gamma}(\frac{1}{\delta} - 1) + 1$ and $\beta_t(x) - \alpha_t(x) \le C_{\delta, \Gamma}(\beta - \alpha) = (\Gamma - \frac{1}{\Gamma})(\frac{1}{\delta} - 1)$. Under the assumptions that $\abs{{K}(x)} \le C_K$ and $\abs{Y} \le C_Y$, there exists a constant $c >0$ such that for any two different observations $(X_i, T_i, Y_i)$ and $(X'_i, T'_i, Y'_i)$, 
	\begin{align*}
	\bigg \vert &\frac{1}{nh^d} (\beta_t(X_i) - \alpha_t(X_i))\mathbf{K}(\frac{X_i - x}{h})\mathbb{I}(T_i = t)u(Y_i)Y_i  \\
	&- \frac{1}{nh^d} (\beta_t(X'_i) - \alpha_t(X'_i))\mathbf{K}(\frac{X'_i - x}{h})\mathbb{I}(T'_i = t)u(Y'_i)Y'_i \bigg \vert \\
	& \le \frac{cC_K^dC_YC_{\delta, \Gamma}(\beta - \alpha)}{nh^d}.
	\end{align*}
	Then Lemma \ref{lemma: sup_diff} and Mcdiarmid inequality implies that with high probability at least $1 - \exp(-\frac{2nh^{2d}\epsilon^2}{c^2C_Y^2C_K^{2d}C^2_{\delta, \Gamma}(\beta - \alpha)})$, 
	\[
	\Lambda_1 \le \expect \Lambda_1 + \epsilon.
	\]
	Moreover, we can bound  $\expect \Lambda_1$ by Rademacher complexity: for i.i.d Rademacher random variables $\sigma_1, \dots, \sigma_n$, 
	\begin{align}
	\expect \Lambda_1 \le 2\expect \sup_{u \in \mathcal{U}^{nd}} \bigg\vert\frac{1}{nh^d}\sum_{i = 1}^n \sigma_i  (\beta_t(X_i) - \alpha_t(X_i)) \mathbf{K}(\frac{X_i - x}{h})\mathbb{I}(T_i = t)u(Y_i)Y_i \bigg\vert. \label{eq: rademacher}
	\end{align}
	
	Furthermore, we can bound the Rademacher complexity given the monotonicity structure of $\mathcal{U}^{nd}$. Suppose we reorder the data so that $Y_1 \le Y_2 \le \dots \le Y_n$. Denote the whole sample by $\mathcal{S} = \{(X_i, T_i, Y_i)\}_{i = 1}^n$ and $\kappa_i = (\beta_t(X_i) - \alpha_t(X_i))\mathbf{K}(\frac{X_i - x}{h})\mathbb{I}(T_i = t)u(Y_i)Y_i $. Since \eqref{eq: rademacher} is a linear programming problem, we only need to consider the vertex solutions, \ie, $u \in \{0, 1\}$.  Therefore, 
	\begin{align}
	\expect \Lambda_1 \le 2\expect \sup_{u \in \mathcal{U}^{nd}, u \in \{0, 1\}} \bigg\vert\frac{1}{nh^d}\sum_{i = 1}^n \sigma_i  (\beta_t(X_i) - \alpha_t(X_i)) \mathbf{K}(\frac{X_i - x}{h})\mathbb{I}(T_i = t)u(Y_i)Y_i \bigg\vert. 
	\end{align}
	
	Conditionally on $\mathcal{S}$, $(u(Y_1), \dots, u(Y_n))$ thus can only have $n+1$ possible values: 
	\begin{align*}
	(0, 0, \dots, 0, 0), (0, 0, \dots, 0, 1), \dots, \\
	(0, 1, \dots, 1, 1), (1, 1, \dots, 1, 1).
	\end{align*}
	This means that conditionally on $\mathcal{S}$, $(\kappa_1, \kappa_2, \dots, \kappa_N)$ can have at most $N+1$ possible values. Plus, $\abs{\kappa_i} \le C_K^dC_YC_{\delta, \Gamma}(\beta - \alpha)$. So by Massart's finite class lemma, 
	\begin{align*}
	&\expect \sup_{u \in \mathcal{U}^{nd}} \bigg\vert\frac{1}{nh^d}\sum_{i = 1}^N \sigma_i (\beta_t(X_i) - \alpha_t(X_i))\mathbf{K}(\frac{X_i - x}{h})\mathbb{I}(T_i = t)u(Y_i)Y_i\bigg\vert \\
	& \le \sqrt{\frac{2 C_Y^2C_K^{2d}C^2_{\delta, \Gamma}(\beta - \alpha)\log(n+1)}{nh^{2d}}}. 
	\end{align*}
	Therefore, with high probability at least $1 - \exp(-\frac{2nh^{2d}\epsilon^2}{c^2C_Y^2C_K^{2d}C^2_{\delta, \Gamma}(\beta - \alpha)})$,
	\[
	\Lambda_1 \le 2\sqrt{\frac{2 C_Y^2C_K^{2d}C^2_{\delta, \Gamma}(\beta - \alpha)\log(n+1)}{nh^{2d}}} + \epsilon,
	\]
	which means that $\Lambda_1 \overset{\text{p}}{\to} 0$ when $nh^{2d} \to \infty$. 
	
	\textbf{Step 1.2}: prove $\Lambda_2 \overset{\text{p}}{\to} 0$. 
	\begin{align*}
	& \expect \frac{1}{nh^d}\sum_{i = 1}^n (\beta(X_i) - \alpha(X_i))\mathbf{K}(\frac{X_i - x}{h})\mathbb{I}(T_i = t)u(Y_i)Y_i \\ 
	&= \frac{1}{h^d}\expect[(\beta_t(X_i) - \alpha_t(X_i))\mathbf{K}(\frac{X_i - x}{h})\mathbb{I}(T_i = t)u(Y_i)Y_i] \\
	&= \frac{1}{h^d}\int u(y)y \bigg[\int (\beta_t(z') - \alpha_t(z'))\mathbf{K}(\frac{z' - x}{h})f_t(y \mid x) dz' \bigg] dy \\
	&\overset{(a)}{=} \int u(y)y \bigg[\int (\beta_t - \alpha_t)(x + zh)\mathbf{K}(z)f_t(y \mid x+zh) dz \bigg] dy
	\end{align*}
	where in (a) we use change-of-variable $z = \frac{z' - x}{h}$. 
	
	Since $\beta _t(x)$, $\alpha_t(x)$, and $f_t(y \mid x)$ are twice continuously differentiable with respect to $x$ at any $x \in \mathcal{X}$ and $y \in \mathcal{Y}$.   Apply Taylor expansion to $(\beta_t - \alpha_t)(x + zh)$ and $f_t(y \mid x+zh)$ around $x$:
	\begin{align*}
	(\beta_t - \alpha_t)(x + zh) &= (\beta_t - \alpha_t)(x) + hz^\top\frac{d}{dx}(\beta_t - \alpha_t)(x) + \frac{1}{2}h^2z^\top\frac{d^2}{dx^2}(\beta_t - \alpha_t)(x)z + o(h^2) \\
	f_t(y \mid x+zh)  &= f_t(y \mid x)  + hz^\top\frac{\partial }{\partial x}f_t(y \mid x)  + \frac{1}{2}h^2z^\top\frac{\partial^2 }{\partial x^2}f_t(y \mid x)z + o(h^2) \\  
	\end{align*}
	Then 
	\begin{align*}
	&\expect \frac{1}{nh^d}\sum_{i = 1}^n (\beta_t(X_i) - \alpha_t(X_i))\mathbf{K}(\frac{X_i - x}{h})\mathbb{I}(T_i = t)u(Y_i)Y_i \\
	&= \int (\beta_t-\alpha_t)(x)u(y)yf_t(y \mid x)dy  + \frac{h^2}{2}\big(\int \mathbf{K}(z)z^2dz \big)\int u(y)y\bigg(f_t(y \mid x)\frac{d^2}{dx^2}(\beta_t - \alpha_t)(x) \\
	&\qquad\qquad\qquad\qquad\qquad\qquad + (\beta_t - \alpha_t)(x)\frac{\partial^2 }{\partial x^2}f_t(y \mid x) + 2\frac{d}{dx}(\beta_t(x) - \alpha_t(x))\frac{\partial }{\partial x}f_t(y \mid x) \bigg)dy  + o(h^2) \\
	&=  \frac{h^2}{2}\big(\int \mathbf{K}(z)z^2dz\big)\int u(y)y\bigg(\frac{\partial^2 }{\partial x^2}\big((\beta_t-\alpha_t)(x)f_t(y \mid x)\big)\bigg)dy + I^u_{\beta - \alpha}(t, x) + o(h^2)
	\end{align*}
	
	Since  the first order and second order derivatives of $\beta _t(x)$, $\alpha_t(x)$, and $f_t(y \mid x)$ with respect to $x$ are bounded, obviously, 
	\[
	\abs{\int u(y)y\bigg(\frac{\partial^2 }{\partial x^2}\big((\beta_t-\alpha_t)(x)f_t(y \mid x)\big)\bigg)dy} < \infty.
	\]
	
	Thus as $h \to 0$, 
	\begin{align*}
	\Lambda_2 &= \sup_{u \in \mathcal{U}^{nd}}\bigg \vert \frac{h^2}{2}\big(\int \mathbf{K}(z)z^2dz\big)\int u(y)y\bigg(\frac{\partial^2 }{\partial x^2}\big((\beta_t-\alpha_t)(x)f_t(y \mid x)\big)\bigg)dy + o(h^2)\bigg \vert \\
	&\to 0.
	\end{align*}
	
	\textbf{Step 2}: prove that $\bigg\vert \kappa^y_{\alpha}(t, x;n, h) - I_{\alpha}^y(t, x) \bigg \vert  \overset{\text{p}}{\to} 0$.
	Obviously 
	\begin{align*}
	&\bigg\vert \kappa^{y}_{\alpha}(t, x; n, h) -  I^y_{\alpha}(t, x)  \bigg\vert \\
	&\le \bigg\vert \kappa^{y}_{\alpha}(t, x; n, h) - \expect \kappa^{y}_{\alpha}(t, x; n, h) \bigg\vert +    \bigg\vert \expect \kappa^{y}_{\alpha}(t, x; n, h) -  I^y_{\alpha}(t, x)  \bigg\vert \\
	&:= \Lambda_3 + \Lambda_4 
	\end{align*}
	\textbf{Step 2.1}: prove $\Lambda_3 \overset{\text{p}}{\to} 0$.
	By Mcdiarmid inequality, with high probability at least $1 - 2\exp(-\frac{2nh^{2d}\epsilon^2}{c^2C_Y^2C_K^2C^2_{\delta, \Gamma}(\alpha)})$,
	\[
	\Lambda_3 \le  \epsilon.
	\]
	Thus $\Lambda_3 \overset{\text{p}}{\to} 0$ when $nh^{2d} \to \infty$. 
	
	\textbf{Step 2.2}: prove $\Lambda_4 \overset{\text{p}}{\to} 0$.
	Similarly to Step 1.2, we can prove that 
	\begin{align*}
	\Lambda_4 &=\bigg \vert \frac{h^2}{2}\big(\int \mathbf{K}(z)z^2dz\big)\int u(y)y\frac{\partial^2 }{\partial x^2}\big(\alpha_t(x)f_t(y \mid x)\big)dy + o(h^2)\bigg \vert \\
	&\to 0.
	\end{align*}
	
	So far, we have proved that $\Delta_1(t, x; n, h) \overset{\text{p}}{\to} 0$. Analogously we can prove that $\Delta_2(t, x; n, h) \overset{\text{p}}{\to} 0$. Thus when $n \to \infty$, $h \to 0$, and $nh^{2d} \to \infty$, 
	\[
	\abs{\hat{\overline{\mu}}_t(x) - \overline{\mu}_t(x)} \overset{\text{p}}{\to} 0. 
	\] 
	Analogously, 
	\[
	\abs{\hat{\underline{\mu}}_t(x) - \underline{\mu}_t(x)} \overset{\text{p}}{\to} 0. 
	\] 
	Therefore, 
	\[
	\hat{\overline{\tau}}(x) \overset{\text{p}}{\to} {\overline{\tau}}(x),    \quad \hat{\underline{\tau}}(x) \overset{\text{p}}{\to} {\underline{\tau}}(x). 
	\]
\end{proof}

\begin{lemma} \label{lemma: sup_inf}
	For functions $J: S \to \mathbb{R}$ and $\tilde{J}: S \to \mathbb{R}$ where $S$ is some subset in Euclidean space, 
	\begin{align*}
	\big \vert \sup_{x \in S} J(x) - \sup_{x \in S}\tilde{J}(x)\big \vert &\le \sup_{x \in S} |J(x) - \tilde{J}(x)| \\
	\big \vert \inf_{x \in S} J(x) - \inf_{x \in S}\tilde{J}(x)\big \vert &\le \sup_{x \in S} |J(x) - \tilde{J}(x)|
	\end{align*}
	
\end{lemma}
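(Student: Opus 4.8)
The plan is to reduce both inequalities to one elementary pointwise-domination estimate and then obtain the infimum statement from the supremum statement by negation. First I would record that for every $x \in S$ one has $J(x) = \tilde J(x) + (J(x) - \tilde J(x)) \le \tilde J(x) + \sup_{x' \in S}|J(x') - \tilde J(x')|$; since the last term does not depend on $x$, taking the supremum over $x \in S$ of both sides gives $\sup_{x \in S} J(x) \le \sup_{x \in S}\tilde J(x) + \sup_{x \in S}|J(x) - \tilde J(x)|$, i.e.\ $\sup_{x \in S}J(x) - \sup_{x \in S}\tilde J(x) \le \sup_{x \in S}|J(x) - \tilde J(x)|$.

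Next I would run the identical argument with the roles of $J$ and $\tilde J$ interchanged, which is legitimate because $\sup_{x\in S}|\tilde J(x) - J(x)| = \sup_{x\in S}|J(x) - \tilde J(x)|$, yielding $\sup_{x \in S}\tilde J(x) - \sup_{x \in S}J(x) \le \sup_{x \in S}|J(x) - \tilde J(x)|$. The two one-sided bounds together give $|\sup_{x \in S}J(x) - \sup_{x \in S}\tilde J(x)| \le \sup_{x \in S}|J(x) - \tilde J(x)|$, which is the first claim.

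For the infimum claim I would apply the supremum inequality just proved to the pair $-J,\,-\tilde J$, using $\inf_{x \in S}g(x) = -\sup_{x \in S}(-g(x))$ and $|(-J)(x) - (-\tilde J)(x)| = |J(x) - \tilde J(x)|$; these identities transfer the bound without any further work.

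There is essentially no obstacle here: the whole argument is a two-line estimate followed by taking suprema, and it nowhere uses any structure of $S$ beyond its being the common domain of the two functions. The only point deserving a remark is the degenerate case $\sup_{x \in S}|J(x) - \tilde J(x)| = +\infty$, where both inequalities hold trivially, and the case of $J$ or $\tilde J$ unbounded, where the displayed steps should be read with the usual $\pm\infty$ conventions; since every invocation of this lemma in the paper (e.g.\ in the proof of Theorem~\ref{thm: consistent-cate}, where $Y$ and $K$ are bounded) concerns bounded functions, these cases never arise.
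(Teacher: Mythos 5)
Your proof is correct and follows essentially the same route as the paper's: both rest on the pointwise decomposition $J(x) \le \tilde J(x) + \sup_{x'}|J(x')-\tilde J(x')|$ followed by taking suprema and symmetrizing. The only cosmetic difference is that you obtain the infimum inequality by applying the supremum case to $-J$, $-\tilde J$, whereas the paper writes out the analogous argument directly; both are equally valid.
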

\begin{proof}
	Obviously,
	\begin{align*}
	\sup_{x \in S} J(x) &\le \sup_{x \in S} \tilde{J}(x) + \sup_{x \in S} \{J(x) -  \tilde{J}(x)\} \\
	\inf_{x \in S} -\tilde{J}(x) &\ge \inf_{x \in S} -J(x) +  \inf_{x \in S} \{\tilde{J}(x)-J(x)\}.
	\end{align*}
	This implies that 
	\begin{align*}
	- \sup_{x \in S} |J(x)-\tilde{J}(x)| &\le \inf_{x \in S} \{\tilde{J}(x)-J(x)\}\\
	& \le \sup_{x \in S} J(x) - \sup_{x \in S} \tilde{J}(x) \\
	& \le \sup_{x \in S} \{J(x) -  \tilde{J}(x)\} \le \sup_{x \in S} |J(x) -  \tilde{J}(x)|, 
	\end{align*}
	\ie, $|\sup_{x \in S} J(x) - \sup_{x \in S} \tilde{J}(x)| \le \sup_{x \in S} |J(x) -  \tilde{J}(x)|$.
	
	On the other hand, 
	\begin{align*}
	\sup_{x \in S} -\tilde{J}(x) &\le \sup_{x \in S} -J(x) + \sup_{x \in S} \{J(x) - \tilde{J}(x)\} \\
	\inf_{x \in S} J(x) &\ge \inf_{x \in S} \tilde{J}(x) +  \inf_{x \in S} \{J(x)-\tilde{J}(x)\}
	\end{align*}
	which implies that 
	\begin{align*}
	- \sup_{x \in S} |J(x)-\tilde{J}(x)| &\le -\sup_{x \in S} \{\tilde{J}(x)-J(x)\} \\
	&\le \inf_{x \in S} J(x) - \inf_{x \in S} \tilde{J}(x) \\ 
	&\le \sup_{x \in S} \{J(x) -  \tilde{J}(x)\} \le  \sup_{x \in S} |J(x) -  \tilde{J}(x)|
	\end{align*}
	Namely $|\inf_{x \in S} J(x) - \inf_{x \in S} \tilde{J}(x)| \le \sup_{x \in S} |J(x) -  \tilde{J}(x)|$.
\end{proof}

\begin{lemma} \label{lemma: sup_diff}
	For functions $J: S \to \mathbb{R}$ and $\tilde{J}: S \to \mathbb{R}$ where $S$ is some subset in Euclidean space, 
	\[
	\big \vert sup_{x \in S} |J(x)| - sup_{x \in S}|\tilde{J}(x)|\big \vert \le \sup_{x \in S} |J(x) - \tilde{J}(x)|
	\]
	\begin{proof}
		On the one hand, 
		\[
		\sup_{x \in S} |J(x)| = \sup_{x \in S} |J(x) - \tilde{J}(x) + \tilde{J}(x)| \le \sup_{x \in S} |J(x) - \tilde{J}(x)| + \sup_{x \in S}|\tilde{J}(x)|, 
		\]
		which implies that 
		\[
		\sup_{x \in S} |J(x)| - \sup_{x \in S}|\tilde{J}(x)| \le \sup_{x \in S} |J(x) - \tilde{J}(x)|.
		\]
		On the other hand, 
		\[
		\inf_{x \in S} -|\tilde{J}(x)| = \inf_{x \in S} -|\tilde{J}(x) - J(x) + J(x)| \ge \inf_{x \in S} (-|\tilde{J}(x) - J(x)| - |J(x)|) \ge  \inf_{x \in S} (-|\tilde{J}(x) - J(x)|) + \inf_{x \in S} (- |J(x)|),
		\]
		which implies that 
		\[
		\sup_{x \in S} |J(x)| - \sup_{x \in S}|\tilde{J}(x)| \ge -\sup_{x \in S} |J(x) - \tilde{J}(x)|.
		\]
		Therefore, the conclusion follows. 
	\end{proof}
\end{lemma}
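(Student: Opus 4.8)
The plan is to reduce the two-sided bound to two symmetric one-sided inequalities and to establish each by combining the pointwise triangle inequality with the subadditivity of the supremum (the supremum of a sum is at most the sum of the suprema). Concretely, I would first prove
\[
\sup_{x \in S}|J(x)| - \sup_{x \in S}|\tilde{J}(x)| \le \sup_{x \in S}|J(x) - \tilde{J}(x)|,
\]
and then obtain the reverse bound by exchanging the roles of $J$ and $\tilde{J}$.

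For the first inequality, I would start from the pointwise decomposition $|J(x)| = |(J(x) - \tilde{J}(x)) + \tilde{J}(x)|$ and apply the triangle inequality to get $|J(x)| \le |J(x) - \tilde{J}(x)| + |\tilde{J}(x)|$ for every $x \in S$. Taking the supremum over $x$ on both sides, and using that $\sup_x (g(x) + h(x)) \le \sup_x g(x) + \sup_x h(x)$, yields $\sup_x |J(x)| \le \sup_x |J(x) - \tilde{J}(x)| + \sup_x |\tilde{J}(x)|$. Rearranging the terms gives precisely the first one-sided bound.

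For the reverse direction, I would observe that the roles of $J$ and $\tilde{J}$ are symmetric and that $\sup_x |\tilde{J}(x) - J(x)| = \sup_x |J(x) - \tilde{J}(x)|$, so the identical argument applied with $\tilde{J}$ in place of $J$ delivers $\sup_x |\tilde{J}(x)| - \sup_x |J(x)| \le \sup_x |J(x) - \tilde{J}(x)|$. Combining the two one-sided inequalities shows that the quantity $\sup_x |J(x)| - \sup_x |\tilde{J}(x)|$ lies in the interval $[-\sup_x |J(x) - \tilde{J}(x)|,\, \sup_x |J(x) - \tilde{J}(x)|]$, which is exactly the stated absolute-value bound.

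There is essentially no substantive obstacle: the argument mirrors the proof of Lemma~\ref{lemma: sup_inf} and requires neither measurability, compactness, nor continuity on $S$. The only points meriting a word of care are that the suprema should be finite (or, equivalently, the statement read in the extended reals) so that the rearrangement of terms is legitimate, and the invocation of subadditivity of the supremum, which holds for arbitrary real-valued functions on any set $S$.
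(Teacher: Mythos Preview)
Your proposal is correct and follows essentially the same approach as the paper: both arguments obtain the first one-sided bound from the pointwise triangle inequality $|J(x)| \le |J(x)-\tilde J(x)| + |\tilde J(x)|$ together with subadditivity of the supremum. For the reverse direction you invoke symmetry in $J$ and $\tilde J$, while the paper rewrites the same step via infima of negatives, $\inf_x(-|\tilde J(x)|) \ge \inf_x(-|\tilde J(x)-J(x)|) + \inf_x(-|J(x)|)$; these are the same inequality, so the only difference is cosmetic.
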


\section{Policy Learning}
\begin{proof}[Proof for Proposition \ref{prop: population-opt-policy}]
	The optimal policy ${\pi}^*(\cdot; \Gamma)$ solves the following optimization problem:
	
	\begin{equation}\label{eq: minimax}
	\inf\limits_{\pi: \mathcal{X} \to [0, 1]} 
	\sup_{\tau(x)\in\mathcal T(x;\Gamma)~\forall x\in\mathcal X}\expect[(\pi(X) - \pi_0(X))\tau(X)].
	\end{equation}
	
	Since both $\pi$ and $\tau$ are bounded, according to Von Neumann theorem, the optimaization problem \eqref{eq: minimax} is equivalent to 
	\begin{equation}\label{eq: maximin}
	\sup\limits_{\tau \in {\mathcal{T}}}\inf\limits_{\pi: \mathcal{X} \to [0, 1]} \expect[(\pi(X) - \pi_0(X))\tau(X)],
	\end{equation}
	which means that there exist optimal $\tau^*\in\mathcal T$ and $\pi^*$
	such that:
	(a) $\tau^*$ is pessimal for $\pi^*$ in that 
	$\expect[({\pi^*(X)-\pi_0(X)}){\tau^*(X)}] \geq \expect[({\pi^*(X)-\pi_0(X)}){\tau(X)}]$ for $\forall\tau\in\mathcal T$
	and (b) $\pi^*$ is optimal for $\tau^*$ in that
	$\expect[({\pi^*(X)-\pi_0(X)}){\tau^*(X)}]  \leq \expect[({\pi(X)-\pi_0(X)}){\tau^*(X)}],\ \forall\pi: \mathcal{X} \to [0,1]$.
	Obviously (b) implies that $\pi^*=\indic{\tau^*(x)<0} + \pi_0(x)\indic{\tau^*(x) = 0}$ can make an optimal policy. Plugging $\pi^*=\indic{\tau^*(x)<0} + \pi_0(x)\indic{\tau^*(x) = 0}$ into \eqref{eq: maximin} gives 
	\begin{equation}\label{eq: opt-tau}
	\tau^* = \argmax_{\tau \in \mathcal T}\expect \min((1 - \pi_0(X))\tau(X), -\pi_0(X)\tau(X)).
	\end{equation}
	
	Actually $\tau^*$ has closed form solution:
	\begin{outline}
		\1 When $\overline{\tau}(x) \le 0$,  obviously $\tau(x) \le 0$ so $\min((1 - \pi_0(x))\tau(x), -\pi_0(x)\tau(x)) = (1 - \pi_0(x))\tau(x)$. 
		\2  $\tau^*(x) = \overline{\tau}(x)$ if $\pi_0(x) < 1$;
		\2 $\tau^*(x)$ can be anything between $\underline{\tau}(x)$ and $\overline{\tau}(x)$ if $\pi_0(x) = 1$.
		\1 When $\underline{\tau}(x) \ge 0$, obviously $\tau(x) \ge 0$ so $\min((1 - \pi_0(x))\tau(x), -\pi_0(x)\tau(x)) = -\pi_0(x)\tau(x)$.
		\2  $\tau^*(x) = \underline{\tau}(x)$ if $\pi_0(x) > 0$;
		\2 $\tau^*(x)$ can be anything between $\underline{\tau}(x)$ and $\overline{\tau}(x)$ if $\pi_0(x) = 0$.
		\1 When $\underline{\tau}(x) < 0 < \overline{\tau}(x)$, 
		\2 If $0 < \pi_0(x) < 1$, when choosing $\tau^*(x) \ge 0$, $\min((1 - \pi_0(x))\tau^*(x), -\pi_0(x)\tau^*(x)) = -\pi_0(x)\tau^*(x)) \le 0$, so $\tau^*(x)$ must be $0$; similarly, when choosing $\tau^*(x) \le 0$, $\tau^*(x)$ must be $0$. This means that $\tau^*(x) = 0$. 
		\2 When $\pi_0(x) = 0$, $\tau^*(x)$ can be anything between $0$ and $\overline\tau(x)$.
		\2 When $\pi_0(x) = 1$, $\tau^*(x)$ can be anything between $\underline\tau(x)$ and $0$.
	\end{outline}
	
	In summary, the following $\tau^*$ always solves the optimization problem in \eqref{eq: opt-tau}: 
	\begin{equation*}
	\tau^*(x) = \overline\tau(x)\ind(\overline\tau(x) \le 0) +  \underline\tau(x)\ind(\underline\tau(x) \ge 0).
	\end{equation*}
	
	Therefore, the following policy is a minimax-optimal policy that optimizes \eqref{eq: maximin}: 
	\begin{equation*}
	\pi^*(x)=\indic{\tau^*(x) < 0} + \pi_0(x)\indic{\tau^*(x) = 0},
	\end{equation*}
	with 
	\begin{equation*}
	\tau^*(x) = \overline\tau(x)\ind(\overline\tau(x) \le 0) +  \underline\tau(x)\ind(\underline\tau(x) \ge 0).
	\end{equation*}
	
	Namely, 
	\begin{equation*}
	\pi^*(x) = \ind(\overline{\tau}(x) \le 0) +  \pi_0(x)\ind(\underline{\tau}(x) < 0 \le \overline{\tau}(x)).
	\end{equation*}
	
\end{proof}


\begin{proof}[Proof for Theorem \ref{thm: policy}]
	According to the proof for Proposition \ref{prop: population-opt-policy},

	\begin{align*}
	\sup\limits_{\tau \in \mathcal{T}}\overline R_{\pi_0}(\pi^*(\cdot;\Gamma);\Gamma) &= \expect \min((1 - \pi_0(X))\tau^*(X), -\pi_0(X)\tau^*(X)) \\
	&= \expect (1 - \pi_0(X))\overline{\tau}(X)\ind(\overline{\tau}(X) < 0)  + \expect ( - \pi_0(X))\underline{\tau}(X)\ind(\underline{\tau}(X) > 0)
	\end{align*}
	In contrast, 
	\begin{align*}
	\sup\limits_{\tau \in \mathcal{T}}\overline R_{\pi_0}(\hat{\pi}^*(\cdot;\Gamma);\Gamma)&= \max_{\tau \in \mathcal T} \expect [(1 - \pi_0(X))\tau(X) \ind(\hat{\overline\tau}(X) < 0) + ( - \pi_0(X))\tau(X) \ind(\hat{\underline\tau}(X) > 0)] \\
	&=  \expect [(1 - \pi_0(X))\overline\tau(X) \ind(\hat{\overline\tau}(X) < 0)  + ( - \pi_0(X))\underline\tau(X) \ind(\hat{\underline\tau}(X) > 0)]
	\end{align*}
	Thus {\if1\forarxiv\small\fi
	\begin{align*}
	\sup\limits_{\tau \in \mathcal{T}}\overline R_{\pi_0}(\hat{\pi}^*(\cdot;\Gamma);\Gamma) - \sup\limits_{\tau \in \mathcal{T}}\overline R_{\pi_0}(\pi^*(\cdot;\Gamma);\Gamma)	&=\expect\bigg[\bigg(  (1 - \pi_0(X))\overline{\tau}(X)\ind(\overline{\tau}(X) < 0)  +  ( - \pi_0(X))\underline{\tau}(X)\ind(\underline{\tau}(X) > 0)\bigg) \\
	&\qquad -  \bigg( (1 - \pi_0(X))\overline\tau(X) \ind(\hat{\overline\tau}(X) < 0) + ( - \pi_0(X))\underline\tau(X) \ind(\hat{\underline\tau}(X) > 0) \bigg) \bigg] \\
	&=\expect\bigg[  (1 - \pi_0(X))\overline{\tau}(X)\bigg(\ind(\overline{\tau}(X) < 0) -  \ind(\hat{\overline\tau}(X) < 0)\bigg)\\
	&\qquad\qquad +  ( - \pi_0(X))\underline{\tau}(X)\bigg(\ind(\underline{\tau}(X) > 0) -  \ind(\hat{\underline\tau}(X) > 0) \bigg)  \bigg] \\
	&= -\expect\bigg[\bigg( (1 - \pi_0(X))\abs{\overline{\tau}(X)} \ind(\op{sign}(\overline{\tau}(X)) \neq \op{sign}(\hat{\overline\tau}(X)))\bigg)  \\
	&  \qquad\qquad\qquad +\bigg(( - \pi_0(X))\abs{\underline{\tau}(X)}\ind( \op{sign}(\underline{\tau}(X)) \neq \op{sign}(\hat{\underline\tau}(X)))\bigg)   \bigg] \\
	\end{align*}}
	
	Next, we prove that under the assumptions in Theorem \ref{thm: consistent-cate}, when $n \to \infty$, $h \to 0$, and $nh^2 \to \infty$,
	\[
	\expect\bigg( (1 - \pi_0(X))\abs{\overline{\tau}(X)} \ind(\op{sign}(\overline{\tau}(X)) \neq \op{sign}(\hat{\overline\tau}(X)))\bigg) \to 0.
	\]
	Given that $|Y| \le C_Y$, $\abs{\overline{\tau}(x)} \le 2C_Y$ and $\abs{\underline{\tau}(x)} \le 2C_Y$. For any $\eta > 0$, 
	\begin{align*}
	&\qquad\qquad\qquad\qquad  \expect\left[  (1 - \pi_0(X))\abs{\overline{\tau}(X)} \ind(\op{sign}(\overline{\tau}(X)) \neq \op{sign}(\hat{\overline\tau}(X)))\right] \\
	&\le 2C_Y \pr \bigg( \ind(\op{sign}(\overline{\tau}(X)) \neq \op{sign}(\hat{\overline\tau}(X)))\ind (|\overline{\tau}(X)| > \eta)\bigg)  + \eta \pr \bigg( \ind(\op{sign}(\overline{\tau}(X)) \neq \op{sign}(\hat{\overline\tau}(X)))\ind (|\overline{\tau}(X)| \le \eta)\bigg) \\
	&\overset{(b)}{\le} 2C_Y \pr \bigg( \ind(\op{sign}(\overline{\tau}(X)) \neq \op{sign}(\hat{\overline\tau}(X)))\ind (|\overline{\tau}(X) - \hat{\overline{\tau}}(X)| > \eta)\bigg) + \eta\\
	&\le 2C_Y \pr(|\overline{\tau}(X) - \hat{\overline{\tau}}(X)| > \eta) + \eta \\
	&= 2C_Y \expect\bigg[\pr(|\overline{\tau}(X) - \hat{\overline{\tau}}(X)| > \eta \mid X)\bigg] + \eta \overset{(c)}{\to} \eta
	\end{align*}
	Here (b) holds because when $\op{sign}(\overline{\tau}(X)) \neq \op{sign}(\hat{\overline\tau}(X))$, $|\overline{\tau}(X) - \hat{\overline{\tau}}(X)| > |\overline{\tau}(X)|$; (c) holds because Theorem \ref{thm: consistent-cate} proves that  $\pr(|\overline{\tau}(X) - \hat{\overline{\tau}}(X)| > \eta \mid X)  \to 0$ , which implies $\expect\bigg[\pr(|\overline{\tau}(X) - \hat{\overline{\tau}}(X)| > \eta \mid X)\bigg] \to 0$ according to bounded convergence theorem considering that $\pr(|\overline{\tau}(X) - \hat{\overline{\tau}}(X)| > \eta \mid X) \le 1$.
	
	Therefore, when $n\to \infty$, $h \to 0$ and $nh^{2d} \to \infty$,
	\[\expect\left[  (1 - \pi_0(X))\abs{\overline{\tau}(X)} \ind(\op{sign}(\overline{\tau}(X)) \neq \op{sign}(\hat{\overline\tau}(X)))\right]  \to 0.
	\]
	
	Analogously, we can prove that, when $n\to \infty$, $h \to 0$ and $nh^{2d} \to \infty$,
	\[\expect\left[   \pi_0(X)\abs{\overline{\tau}(X)} \ind(\op{sign}(\overline{\tau}(X)) \neq \op{sign}(\hat{\overline\tau}(X)))\right]  \to 0.
	\] 
	
	As a result, when $n\to \infty$, $h \to 0$ and $nh^{2d} \to \infty$, $\sup\limits_{\tau \in \mathcal{T}}\overline R_{\pi_0}(\hat{\pi}^*(\cdot;\Gamma);\Gamma) - \sup\limits_{\tau \in \mathcal{T}}\overline R_{\pi_0}(\pi^*(\cdot;\Gamma);\Gamma) \to 0$.
	
\end{proof}

\section{PCATE sensitivity bounds}\label{appendix: PCAT}
Analogously, the corresponding sensitivity bounds for partial conditional average treatment effect are:
	\begin{align}
	\overline{\tau}(x_S; \Gamma) = \overline{\mu}_1(x_S; \Gamma)  - \underline{\mu}_{0}(x_S; \Gamma), \label{eq: partial-cate-upper} \\  \underline{\tau}(x_S; \Gamma) = \underline{\mu}_1(x_S; \Gamma)  - \overline{\mu}_{0}(x_S; \Gamma), \label{eq: partial-cate-lower}
	\end{align}
	where $\overline{\mu}_{t}(x_S; \Gamma)$ and $\underline{\mu}_{t}(x_S; \Gamma)$ for $t \in \{0, 1\}$ are given in \eqref{eq: partial-outcome-upper}\eqref{eq: partial-outcome-lower}. 
The corresponding PCATE bounds estimators are:
\begin{align}
\hat{\overline{\tau}}(x_S; \Gamma) &= \hat{\overline{\mu}}_1(x_S; \Gamma)  - \hat{\underline{\mu}}_{0}(x_S; \Gamma) ,  \label{eq: est-partial-upper-cate} \\ \hat{\underline{\tau}}(x_S; \Gamma) &= \hat{\underline{\mu}}_1(x_S; \Gamma)  - \hat{\overline{\mu}}_{0}(x_S; \Gamma), \label{eq: est-partial-lower-cate}
\end{align}
where $\hat{\overline{\mu}}_{t}(x_S; \Gamma)$ and $\hat{\underline{\mu}}_{t}(x_S; \Gamma)$ for $t \in \{0, 1\}$ are given in \eqref{eq: est-partial-upper-outcome}\eqref{eq: est-partial-lower-outcome}. 

For any $\pi_S: \mathcal{X}_S \to [0, 1]$, the policy value and the worst-case policy regret are:
\begin{align*}
V(\pi_S;\tau) = \expect[\pi(X_S)Y(1) + (1 - \pi(X_S))Y(0)]
\end{align*}
\begin{equation}\label{eq: partial-regret}
	\overline R^S_{\pi_0}(\pi;\Gamma) = 
	\sup_{\tau(x_{S})\in\mathcal T(x_S;\Gamma)~\forall x_S\in\mathcal X_S}(V(\pi_S;\tau)  - V(\pi_0;\tau))
\end{equation}

\begin{corollary}\label{corollary: mismatch-x-population}
	Consider the partial conditional expected potential outcome 
	\[
		\mu_t(x_S)=\Efb{Y(t) \mid X_S=x_S},
	\]
	where $t \in \{0, 1\}$, $X_S$ is a subset of the observed covariates $X$, and $x_S \in \mathcal{X}_S$. The corresponding population PCAT sensitivity bounds \eqref{eq: partial-outcome-upper}\eqref{eq: partial-outcome-lower} have the following equivalent characterization: {\if1\forarxiv\small\fi
	\begin{align*}
	\overline{\mu}_t(x_S; \Gamma) &= 
	\sup\limits_{u \in \mathcal{U}^{nd} }
	\frac{\iint \alpha_t(x_S, x_{S^c}) yf_t(y, x_{S^c} \mid x_S)dydx_{S^c}+ \iint (\beta_t(x_S, x_{S^c}) - \alpha_t(x_S, x_{S^c}))u(y)yf_t(y, x_{S^c} \mid x_S)dydx_{S^c}}{\iint \alpha_t(x_S, x_{S^c})f_t(y, x_{S^c} \mid x_S)dydx_{S^c}+  \iint (\beta_t(x_S, x_{S^c}) - \alpha_t(x_S, x_{S^c}))u(y)f_t(y, x_{S^c} \mid x_S)dydx_{S^c}} \\
	\underline{\mu}_t(x_S; \Gamma) &= 
	\inf\limits_{u \in \mathcal{U}^{ni}}
	\frac{\iint \alpha_t(x_S, x_{S^c}) yf_t(y, x_{S^c} \mid x_S)dydx_{S^c}+ \iint (\beta_t(x_S, x_{S^c}) - \alpha_t(x_S, x_{S^c}))u(y)yf_t(y, x_{S^c} \mid x_S)dydx_{S^c}}{\iint \alpha_t(x_S, x_{S^c})f_t(y, x_{S^c} \mid x_S)dydx_{S^c}+  \iint (\beta_t(x_S, x_{S^c}) - \alpha_t(x_S, x_{S^c}))u(y)f_t(y, x_{S^c} \mid x_S)dydx_{S^c}} 
	\end{align*}}
	where $f_t(y, x_{S^c} \mid x_S)$ is the conditional joint density function for $\{T = t, Y(t), X_{S^c}\}$ given $X_S = x_S$ with $X_{S^c}$ as the complementary subset of $X$ with respect to $X_S$, $\alpha_t(\cdot)$ and $\beta_t(\cdot)$ are defined in \eqref{eq: uncertainty-set}, and $\mathcal{U}^{nd}$ and $\mathcal{U}^{ni}$ are defined in Lemma \ref{lemma: population-bounds}.
\end{corollary}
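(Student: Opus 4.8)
The plan is to prove Corollary~\ref{corollary: mismatch-x-population} by following the proof of Lemma~\ref{lemma: population-bounds} almost verbatim, the only genuinely new point being that the adversarial weight is now a function $w_t^P(x_{S^c}, y \mid x_S)$ of the pair $(x_{S^c}, y)$ rather than of $y$ alone, so one must additionally argue that the optimizer gains nothing from its dependence on $x_{S^c}$. I will treat the supremum in \eqref{eq: partial-outcome-upper}; the infimum is symmetric. First I would apply the one-to-one change of variables $w_t^P(x_{S^c}, y \mid x_S) = \alpha_t(x_S,x_{S^c}) + u(x_{S^c},y)\,(\beta_t(x_S,x_{S^c}) - \alpha_t(x_S,x_{S^c}))$ with $u : \mathcal{X}_{S^c}\times\mathcal{Y} \to [0,1]$, which rewrites $\overline{\mu}_t(x_S;\Gamma)$ as the supremum over such $u$ of a linear-fractional functional whose numerator and denominator are affine in $u$, with all integrals taken against the measure $f_t(y,x_{S^c}\mid x_S)\,dy\,dx_{S^c}$. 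Here one uses $\alpha_t \ge 1 > 0$ (so the denominator stays bounded away from $0$) and $\beta_t - \alpha_t \ge 0$ (from $\Gamma \ge 1$).

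Next I would apply the Charnes--Cooper transformation exactly as in Lemmas~\ref{lemma: population-bounds} and~\ref{lemma: sample-est-formulation} to turn this into an equivalent (now infinite-dimensional) linear program in a pair $(\tilde u, \tilde v)$, with pointwise constraints $0 \le \tilde u(x_{S^c},y) \le \tilde v$ and a single normalization equality, and then pass to its dual, introducing dual multipliers $p(x_{S^c},y)\ge0$, $q(x_{S^c},y)\ge0$ for the two pointwise constraints and a scalar $\lambda$ for the normalization. The stationarity condition in $\tilde u$ becomes $p - q = (\beta_t-\alpha_t)(x_S,x_{S^c})\,f_t\,(y - \lambda)$, and complementary slackness allows at most one of $p,q$ to be positive at each point; since $(\beta_t-\alpha_t)f_t \ge 0$, this forces $p = (\beta_t-\alpha_t)f_t\,\max(y-\lambda,0)$ and $q = (\beta_t-\alpha_t)f_t\,\max(\lambda-y,0)$, whence the optimal primal weight is $u^*(x_{S^c},y) = \ind(y > \lambda^*)$ on $\{(\beta_t-\alpha_t)f_t>0\}$. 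This is the crucial observation: $u^*$ does not depend on $x_{S^c}$ and is nondecreasing in $y$. To finish, I would note that the remaining dual inequality must be tight at the optimum (otherwise a smaller feasible $\lambda$ strictly lowers the dual objective), which together with the boundedness of $Y$ rules out $\lambda^*$ lying outside $[-C_Y, C_Y]$ exactly as in Lemma~\ref{lemma: population-bounds}; hence $u^* = \ind(y>\lambda^*)$ is a bona fide element of $\mathcal{U}^{nd}$, and strong duality yields the claimed formula for $\overline{\mu}_t(x_S;\Gamma)$. For the infimum the sign of the coefficient $y-\lambda$ in the dual flips, giving $u^* = \ind(y < \lambda^*) \in \mathcal{U}^{ni}$; the PCATE bound formulas \eqref{eq: partial-cate-upper}--\eqref{eq: partial-cate-lower} then follow by taking differences.

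The main obstacle I anticipate is not conceptual but the infinite-dimensional bookkeeping: justifying the Charnes--Cooper reduction and strong LP duality when the decision variable is a bounded measurable function on $\mathcal{X}_{S^c}\times\mathcal{Y}$ (rather than a vector in $\mathbb{R}^n$ as in Lemma~\ref{lemma: sample-est-formulation}) requires invoking semi-infinite/conic duality as in the population analysis of eqs.~\eqref{eq: pop-outcome-upper}--\eqref{eq: pop-outcome-lower} (cf.~\cite{shapiro01}) and checking the relevant regularity, e.g.\ existence of a strictly feasible point and boundedness of the feasible set. Once the dual is in place, the structural conclusion that the adversary cannot exploit $x_{S^c}$ drops out immediately from the nonnegativity of $(\beta_t-\alpha_t)f_t$.
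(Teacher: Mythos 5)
Your proposal is correct and follows essentially the same route as the paper, whose own proof of this corollary is literally a one-line deferral to the change-of-variables and duality argument of Lemma~\ref{lemma: population-bounds}. You carry out exactly that program, and you correctly isolate and resolve the one genuinely new point -- that the adversarial weight may a priori depend on $x_{S^c}$, but the dual threshold structure $u^*=\ind(y>\lambda^*)$ shows the optimizer gains nothing from that dependence -- which the paper leaves implicit.
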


\begin{proof}
	By analogous arguments of change of variable and duality in the proof for Lemma \ref{lemma: population-bounds}, we can prove the conclusions in Corollary \ref{corollary: mismatch-x-population}. 
\end{proof}

\begin{corollary}\label{corollary: mismatch-est-consistency}
	Consider the following estimators: 
	\begin{align*}
	\hat{\overline{\mu}}_t(x_S; \Gamma) &= \sup\limits_{W_{ti} \in [\alpha_t(X_i; \Gamma), \beta_t(X_i; \Gamma)]} \frac{\sum_{i=1}^n\ind{(T_i =  t)}K(\frac{X_{i, S}-x_S}{h})W_{ti}Y_i}{\sum_{i=1}^n\ind{(T_i =  t)}K(\frac{X_{i, S}-x_S}{h})W_{ti}}, \\
	\hat{\underline{\mu}}_t(x_S; \Gamma) &= \inf\limits_{W_{ti} \in [\alpha_t(X_i; \Gamma), \beta_t(X_i; \Gamma)]}\frac{\sum_{i=1}^n\ind{(T_i =  t)}K(\frac{X_{i, S}-x_S}{h})W_{ti}Y_i}{\sum_{i=1}^n\ind{(T_i =  t)}K(\frac{X_{i, S}-x_S}{h})W_{ti}}.
	\end{align*}
	where $\alpha_t(\cdot)$ and $\beta_t(\cdot)$ are defined in \eqref{eq: uncertainty-set}. 
	
	Assume that $e_t(x_{S}, x_{S^c})$ and $f_t(y, x_{S^c} \mid x_S)$ are twice continuously differentiable with respect to $x_S$ for any $y \in \mathcal{Y}$ and $x_{S^c} \in \mathcal{X}_S$ with bounded first and second derivatives. Under the other assumptions in Theorem \ref{thm: consistent-cate}, when $n \to \infty$, $h \to 0$, and $nh^{2|S|} \to \infty$, $\hat{\overline{\mu}}_t(x_S) \overset{\text{p}}{\to} \overline{\mu}_t(x_S)$ and $\hat{\underline{\mu}}_t(x_S) \overset{\text{p}}{\to} \underline{\mu}_t(x_S)$.
\end{corollary}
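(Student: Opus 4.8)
The plan is to mimic the proof of Theorem~\ref{thm: consistent-cate} essentially verbatim, with three bookkeeping changes: the kernel factor $\mathbf K(\frac{X_i-x}{h})$ is replaced by $\mathbf K(\frac{X_{i,S}-x_S}{h})$, the relevant kernel dimension is $|S|$ rather than $d$ (so the normalization is $1/(nh^{|S|})$ and the rate condition is $nh^{2|S|}\to\infty$), and the population limits are taken in the form supplied by Corollary~\ref{corollary: mismatch-x-population}. First I would apply Lemma~\ref{lemma: sample-est-formulation} --- whose proof uses only the ordering of the $Y_i$ and the box constraints $W_{ti}\in[\alpha_t(X_i;\Gamma),\beta_t(X_i;\Gamma)]$, neither of which changes here --- to rewrite $\hat{\overline\mu}_t(x_S)$ and $\hat{\underline\mu}_t(x_S)$ as a supremum (resp.\ infimum) over monotone $u\in\mathcal U^{nd}$ (resp.\ $\mathcal U^{ni}$) of ratios of $X_{i,S}$-kernel-weighted sums, matching the monotone-$u$ representation of $\overline\mu_t(x_S),\underline\mu_t(x_S)$ in Corollary~\ref{corollary: mismatch-x-population}. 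Introducing the $|S|$-dimensional analogues of $\kappa^y_\alpha,\kappa^{u,y}_{\beta-\alpha},\kappa_\alpha,\kappa^u_{\beta-\alpha}$ and of $I^y_\alpha,I^{u,y}_{\beta-\alpha},I_\alpha,I^u_{\beta-\alpha}$, Lemma~\ref{lemma: sup_inf} together with the same ratio manipulation as in eq.~\eqref{eq: error-decompose} reduces everything to showing $\Delta_1(t,x_S;n,h)\overset{\text{p}}{\to}0$ and $\Delta_2(t,x_S;n,h)\overset{\text{p}}{\to}0$, the uniform-over-$u$ discrepancies between the numerator/denominator empirical sums and their population integrals against $f_t(y,x_{S^c}\mid x_S)$; the denominators remain bounded away from zero since $\alpha_t(x)\ge1$.

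For each of $\Delta_1,\Delta_2$ I would split into a stochastic part, $\kappa-\expect\kappa$, and a bias part, $\expect\kappa-I$. The stochastic part goes through exactly as in Step~1.1 of the proof of Theorem~\ref{thm: consistent-cate}: a bounded-differences (McDiarmid) bound, the passage to Rademacher complexity, and --- the key combinatorial point --- the observation that once the sample is sorted by $Y_i$, a monotone $\{0,1\}$-valued $u$ realizes only $n+1$ vertices, so Massart's finite-class lemma delivers a bound of order $\sqrt{\log(n+1)/(nh^{2|S|})}$. Nothing in this argument cares which covariates enter the kernel, since the monotonicity that produces the $n+1$ vertices is monotonicity in $Y_i$; this is precisely where $nh^{2|S|}\to\infty$ is used.

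The bias part is the only place where the partial-covariate structure genuinely matters, and I expect it to be the main (though still routine) step. In Step~1.2 of Theorem~\ref{thm: consistent-cate} one computed $\expect\kappa$ by the substitution $z=(X_{i,S}-x_S)/h$ and a second-order Taylor expansion around $x$; here the weight $\alpha_t(X_i)=\alpha_t(X_{i,S},X_{i,S^c})$ depends on all of $X_i$, so after conditioning one obtains
\[
\expect\kappa^{u,y}_{\beta-\alpha}=\iint u(y)\,y\,(\beta_t-\alpha_t)(x_S+zh,x_{S^c})\,\mathbf K(z)\,f_t(y,x_{S^c}\mid x_S+zh)\,dz\,dy\,dx_{S^c},
\]
and one Taylor-expands the product $(\beta_t-\alpha_t)(x_S+zh,x_{S^c})f_t(y,x_{S^c}\mid x_S+zh)$ in the $x_S$ argument only, to second order. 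Using $\int z\,\mathbf K(z)\,dz=0$, $\int z^2\mathbf K(z)\,dz<\infty$, and the assumed boundedness of the first and second $x_S$-derivatives of $e_t$ (hence of $\alpha_t,\beta_t$) and of $f_t(y,x_{S^c}\mid x_S)$, the zeroth-order term integrates over $x_{S^c}$ and $y$ to $I^{u,y}_{\beta-\alpha}(t,x_S)$, the first-order term vanishes, and the second-order term is $O(h^2)$ uniformly in $u$ --- the extra $x_{S^c}$-integration merely contributes bounded constants. The identical expansion handles the denominators and the $\alpha$-only terms, yielding $\Delta_1,\Delta_2\overset{\text{p}}{\to}0$, so eq.~\eqref{eq: error-decompose} gives $\hat{\overline\mu}_t(x_S)\overset{\text{p}}{\to}\overline\mu_t(x_S)$. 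The infimum case, the case $t=0$, and the passage to $\hat{\underline\tau}(x_S),\hat{\overline\tau}(x_S)$ via \eqref{eq: est-partial-upper-cate}--\eqref{eq: est-partial-lower-cate} are identical.
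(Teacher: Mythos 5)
Your proposal is correct and follows essentially the same route as the paper: the paper's own proof simply invokes the argument of Theorem~\ref{thm: consistent-cate} with the kernel over $X_{i,S}$, normalization $1/(nh^{|S|})$, an extra integration over $x_{S^c}$ in the population expectations, and a Taylor expansion in the $x_S$ argument only, exactly as you describe. Your write-up is in fact more explicit than the paper's (which is a brief sketch), and your key observations --- that the Massart/Rademacher step is unaffected because the monotone class is indexed by $Y_i$ rather than by the covariates, and that the only genuinely new work is the bias computation with the $x_{S^c}$-integral --- are the right ones.
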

\begin{proof}
	Following the proof for Theorem \ref{thm: consistent-cate}, we can analogously prove that when $n \to \infty$ and $nh^{2|S|} \to \infty$, {\if1\forarxiv\footnotesize\fi
	\begin{align}
		\hat{\overline{\mu}}_t(x_S; \Gamma) 
		&\overset{\text{p}}{\to} \sup\limits_{u \in \mathcal{U}^{nd}} 
			\frac{
							\expect \bigg[\ind{(T_i =  t)}\alpha_t(X_{i, S}, X_{i, S^c})K(\frac{X_{i, S}-x_S}{h})Y_i 
							+ \ind{(T_i =  t)}u(Y_i)(\beta_t(X_{i, S}, X_{i, S^c}) - \alpha_t(X_{i, S}, X_{i, S^c}))K(\frac{X_{i, S}-x_S}{h})Y_i\bigg]
					}
					{
						\expect \bigg[\ind{(T_i =  t)}\alpha_t(X_{i, S}, X_{i, S^c})K(\frac{X_{i, S}-x_S}{h}) 
							+ \ind{(T_i =  t)}u(Y_i)(\beta_t(X_{i, S}, X_{i, S^c}) - \alpha_t(X_{i, S}, X_{i, S^c}))K(\frac{X_{i, S}-x_S}{h})\bigg] \label{eq: mismatch-xS^cample}
					} 
	\end{align}}
	Note that 
	\[
	\expect \bigg[\ind{(T_i =  t)}\alpha_t(X_{i, S}, X_{i, S^c})K(\frac{X_{i, S}-x_S}{h})Y_i\bigg]  = \iiint \alpha_t(x'_S, x_{S^c})f_t(y, x'_S, x_{S^c})K(\frac{x'_S - x_S}{h})y dydx'_Sdx_{S^c}.
	\]
	
	By the similar Taylor expansion argument in the proof for Theorem \ref{thm: consistent-cate}, when $h \to 0$, 
	\[
	\expect \bigg[\ind{(T_i =  t)}\alpha_t(X_{i, S}, X_{i, S^c})K(\frac{X_{i, S}-x_S}{h})Y_i\bigg]  \to \iint \alpha_t(x_S, x_{S^c})f_t(y, x_S, x_{S^c})y dydx_{S^c}.
	\]
	
	Similarly, we can prove the convergence of other components in Corollary \ref{corollary: mismatch-est-consistency}. Given the characterization in  Corollary \ref{corollary: mismatch-x-population}, we can decompose the estimation bias in a way similar to \eqref{eq: error-decompose}, which leads to the final conclusions. 
\end{proof}

\begin{corollary} 
	Define the following policies based on the subset observed covariates $X_S$: for any $x_S \in \mathcal{X}$, 
	\begin{align*}
			\pi^P(x_S; \Gamma) &= \ind(\overline{\tau}(x_S; \Gamma) \le 0) +  \pi_0(x_S)\ind(\underline{\tau}(x_S; \Gamma) \le 0 < \overline{\tau}(x_S; \Gamma)) \\
			\hat{\pi}^P(x_S; \Gamma) &= \ind(\hat{\overline{\tau}}(x_S; \Gamma) \le 0) +  \pi_0(x_S)\ind(\hat{\underline{\tau}}(x_S; \Gamma) \le 0 < \hat{\overline{\tau}}(x_S; \Gamma)).
	\end{align*}
	where  $\overline{\tau}(x_S; \Gamma)$ and $\underline{\tau}(x_S; \Gamma)$ are the population PCATE sentivity bounds defined in \eqref{eq: partial-cate-upper}\eqref{eq: partial-cate-lower}, and $\hat{\overline{\tau}}(x_S; \Gamma)$ and $\hat{\underline{\tau}}(x_S; \Gamma)$ are the PCATE sensitivity bounds estimators given in \eqref{eq: est-partial-upper-cate} 
	\eqref{eq: est-partial-lower-cate}. 

	Then $\pi^P(\cdot ; \Gamma)$ is the population  minimax-optimal policies. Namely, 
	\begin{align*}
		\pi^P(\cdot; \Gamma)  &\in \argmin\limits_{\pi_S: \mathcal{X}_S \to [0, 1]} [\sup\limits_{\tau \in \mathcal{T}_S}\overline R^S_{\pi_0}(\pi;\Gamma)]
	\end{align*}
	where $\mathcal{T}_S = \{\tau: \tau(x_S) \in [\underline{\tau}(x_S), \overline{\tau}(x_S)], \forall x_S \in \mathcal{X}_S \}$. Furthermore, the sample policy $\hat{\pi}^P$ is asymptotically minimax-optimal:  
	\[
		\overline R^S_{\pi_0}(\hat{\pi}^P(\cdot; \Gamma);\Gamma)  \overset{\text{p}}{\to} 	\overline R^S_{\pi_0}({\pi}^P(\cdot; \Gamma);\Gamma) .
	\]
\end{corollary}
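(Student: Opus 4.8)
The plan is to obtain this corollary as the $X_S$-analogue of Proposition~\ref{prop: population-opt-policy} (for the first assertion) and of Theorem~\ref{thm: policy} (for the second), substituting the partial estimands and invoking Corollary~\ref{corollary: mismatch-est-consistency} in place of Theorem~\ref{thm: consistent-cate}. The observation that makes both arguments go through essentially unchanged is that, since $\tau(x_S)=\expect[Y(1)-Y(0)\mid X_S=x_S]$, one has $V(\pi_S;\tau)=\expect[Y(0)]+\expect[\pi_S(X_S)\tau(X_S)]$, so that $\overline R^S_{\pi_0}(\pi_S;\Gamma)=\sup_{\tau\in\mathcal T_S}\expect[(\pi_S(X_S)-\pi_0(X_S))\tau(X_S)]$ --- exactly the bilinear form handled in the proof of Proposition~\ref{prop: population-opt-policy}, except that the inner supremum now decouples over the value of $X_S$ and is governed by the PCATE bounds $\underline\tau(x_S;\Gamma),\overline\tau(x_S;\Gamma)$.

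For the population claim, I would first apply the same minimax interchange used in the proof of Proposition~\ref{prop: population-opt-policy}: $\pi_S$ ranges over the convex set of $[0,1]$-valued functions of $x_S$, $\mathcal T_S$ is the convex (and, in the weak-$*$ topology of $L^\infty$ of the law of $X_S$, compact) box $\{\tau:\tau(x_S)\in[\underline\tau(x_S;\Gamma),\overline\tau(x_S;\Gamma)]\}$, and the objective is bilinear, so $\inf_{\pi_S}\sup_\tau$ equals $\sup_\tau\inf_{\pi_S}$. The inner infimum is attained pointwise by $\pi_S(x_S)=\ind(\tau(x_S)<0)+\pi_0(x_S)\ind(\tau(x_S)=0)$, and substituting this back reduces the outer problem to maximizing $\expect[\min((1-\pi_0(X_S))\tau(X_S),-\pi_0(X_S)\tau(X_S))]$ over $\tau\in\mathcal T_S$; this is again pointwise in $x_S$, and the same three cases as in Proposition~\ref{prop: population-opt-policy} ($\overline\tau(x_S;\Gamma)\le0$; $\underline\tau(x_S;\Gamma)\ge0$; $\underline\tau(x_S;\Gamma)<0<\overline\tau(x_S;\Gamma)$) show that $\tau^*(x_S)=\overline\tau(x_S;\Gamma)\ind(\overline\tau(x_S;\Gamma)\le0)+\underline\tau(x_S;\Gamma)\ind(\underline\tau(x_S;\Gamma)\ge0)$ is a maximizer. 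Reading off the corresponding policy gives $\pi^P(\cdot;\Gamma)$ (modulo the usual conventions at $\tau=0$), proving it is a population minimax-optimal policy.

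For the asymptotic claim I would follow the proof of Theorem~\ref{thm: policy} verbatim with $X_S$ in place of $X$. First, Corollary~\ref{corollary: mismatch-est-consistency} gives $\hat{\overline\mu}_t(x_S)\overset{\text{p}}{\to}\overline\mu_t(x_S)$ and $\hat{\underline\mu}_t(x_S)\overset{\text{p}}{\to}\underline\mu_t(x_S)$ under $n\to\infty$, $h\to0$, $nh^{2|S|}\to\infty$, hence $\hat{\overline\tau}(x_S;\Gamma)\overset{\text{p}}{\to}\overline\tau(x_S;\Gamma)$ and $\hat{\underline\tau}(x_S;\Gamma)\overset{\text{p}}{\to}\underline\tau(x_S;\Gamma)$ pointwise. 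Then, using the closed form of $\tau^*$ just obtained and evaluating $\sup_{\tau\in\mathcal T_S}\expect[(\pi-\pi_0)\tau]$ for the two fixed policies $\hat\pi^P$ and $\pi^P$, one finds
\begin{align*}
&\overline R^S_{\pi_0}(\hat\pi^P(\cdot;\Gamma);\Gamma)-\overline R^S_{\pi_0}(\pi^P(\cdot;\Gamma);\Gamma)\\
&\qquad=\expect\big[(1-\pi_0(X_S))\,|\overline\tau(X_S;\Gamma)|\,\ind(E^+)+\pi_0(X_S)\,|\underline\tau(X_S;\Gamma)|\,\ind(E^-)\big],
\end{align*}
where $E^+=\{\operatorname{sign}\overline\tau(X_S;\Gamma)\neq\operatorname{sign}\hat{\overline\tau}(X_S;\Gamma)\}$ and $E^-=\{\operatorname{sign}\underline\tau(X_S;\Gamma)\neq\operatorname{sign}\hat{\underline\tau}(X_S;\Gamma)\}$. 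For each term I would split the expectation at a threshold $\eta>0$: on $E^+\cap\{|\overline\tau(X_S;\Gamma)|\le\eta\}$ the integrand is at most $\eta$, while on $E^+\cap\{|\overline\tau(X_S;\Gamma)|>\eta\}$ one has $|\overline\tau(X_S;\Gamma)-\hat{\overline\tau}(X_S;\Gamma)|>|\overline\tau(X_S;\Gamma)|>\eta$, so that piece is bounded by $2C_Y\,\pr(|\overline\tau(X_S;\Gamma)-\hat{\overline\tau}(X_S;\Gamma)|>\eta)$, which tends to $0$ by pointwise consistency together with the bounded convergence theorem (conditioning on $X_S$); letting $\eta\downarrow0$ finishes the argument, and likewise for the $E^-$ term.

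The genuinely new content relative to Theorem~\ref{thm: policy} is entirely absorbed into Corollary~\ref{corollary: mismatch-est-consistency}, which handles the mismatch between the covariates $X_S$ entering the kernel and the covariates $X$ entering the nominal propensities; with that in hand the rest is bookkeeping. The main points I would be careful about are (i) the legitimacy of the minimax interchange --- checking that $\mathcal T_S$ is compact in a topology for which $\tau\mapsto\expect[(\pi_S(X_S)-\pi_0(X_S))\tau(X_S)]$ is continuous, which follows from $|Y|\le C_Y$ via Banach--Alaoglu on the box in $L^\infty$ of the law of $X_S$ --- and (ii) consistently using the bandwidth rate $nh^{2|S|}\to\infty$ from Corollary~\ref{corollary: mismatch-est-consistency} rather than $nh^{2d}\to\infty$. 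I do not expect any essential obstacle beyond these.
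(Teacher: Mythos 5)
Your proposal is correct and follows exactly the route the paper intends: the paper's own proof of this corollary is a one-line remark that the conclusions follow analogously to Proposition~\ref{prop: population-opt-policy} and Theorem~\ref{thm: policy}, with the covariate mismatch absorbed into Corollary~\ref{corollary: mismatch-est-consistency}, and your write-up is a faithful (and more explicit) elaboration of that substitution, including the correct identification $V(\pi_S;\tau)=\expect[Y(0)]+\expect[\pi_S(X_S)\tau(X_S)]$ via the tower property and the correct bandwidth rate $nh^{2|S|}\to\infty$.
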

\begin{proof}
	The conclusions can be proved analougously to the proofs for Proposition \ref{prop: population-opt-policy} and Theorem \ref{thm: policy}.
\end{proof}

\section{Additional figures} 
\begin{figure}[h!]
	\centering
	\includegraphics[width=0.5\textwidth]{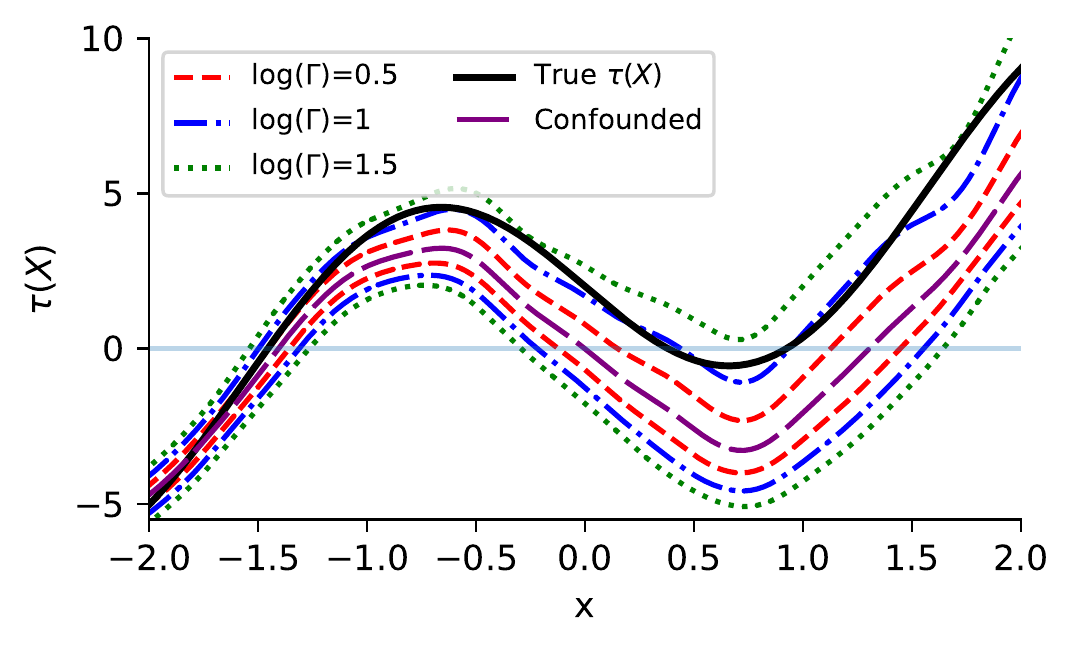}
	\caption{Bounds on CATE for differing values of $\Gamma$.}\label{fig-sinusoidal2000}
\end{figure}
In the main text, for the sake of illustration we presented an example where we fix the nominal propensities $e(X)$ and set the true propensities $e(X,U)$ such that the maximal odds-ratio bounds $\alpha(X), \beta(X)$ are ``achieved'' by the true propensities. We computed bounds on the pre-specified $e(X)$. We now consider a setting with a binary confounder where $\log(\Gamma)=1$ is not a uniform bound, but bounds most of the observed odds ratios, and instead we learn the marginal propensities $\Pr[T=1\mid X=x]$ from data using logistic regression. The results (Fig.~\ref{fig-sinusoidal2000}) are materially the same as in the main text. 

We consider the same setting as in Fig.~\ref{fig-1d} with a binary confounder $u \sim \op{Bern}(\nicefrac{1}{2})$ generated independently, and $X \sim \op{Unif}[-2,2]$. Then we set the true propensity score as
 $$e^*(x,u) = \sigma(\theta x + 2(  u-0.5) + 0.5)$$
We learn the nominal propensity scores $e(x)$ by predicting them from data with logistic regression, which essentially learns the marginalized propensity scores $e(x) = \Pr[T=1\mid X=x]$. The outcome model yields a nonlinear CATE, with linear confounding and with randomly generated mean-zero noise, $\epsilon \sim N(0,1)$: 
\[Y(t) = (2t-1) x + (2t-1) -  2\sin(2(2t-1) X) - 2 (u-1)(1 + 0.5X) + \epsilon\]

This outcome model specification yields a confounded CATE estimate of  
\begin{align*}&\E[Y\mid X=x, T=1] - \E[Y \mid X=x, T=0] \\
& = 2-2 x + 2( \sin(-2x)-\sin(2x)) +\\
& 2(2+x)( \Pr[u=1 \mid \substack{X=x\\T=1}] - \Pr[u=1\mid \substack{X=x\\T=0}] )
\end{align*}
By Bayes' rule, 
 $$ \Pr[u=1 \mid X=x, T=1] = \frac{\Pr[T=1\mid X=x, u=1] \Pr[u=1\mid X=x]}{\Pr[T=1\mid X=x]} $$
In Fig.~\ref{fig-sinusoidal2000}, we compute the bounds using our approach for $\log(\Gamma) = 0.5, 1, 1.5$ on a dataset with $n=2000$. The purple long-dashed line corresponds to a confounded kernel regression. (Bandwidths are estimated by leave-one-out cross-validation for each treatment arm regression). The confounding is greatest for large, positive $x$. The true, unconfounded CATE is plotted in black. While the confounded estimation suggests a large region, $x \in [0, 1.25]$, where $T=1$ is beneficial, the true CATE suggests a much smaller region where $T=1$ is optimal.

\end{appendices}
\end{document}